\newcommand\footnoteref[1]{\protected@xdef\@thefnmark{\ref{#1}}\@footnotemark}
\title{Learning Action Translator for Meta\\ Reinforcement Learning on Sparse-Reward Tasks}
\author{
    Yijie Guo\textsuperscript{\rm 1}, Qiucheng Wu\textsuperscript{\rm 1},
    Honglak Lee\textsuperscript{\rm 1,2}
}
\begin{document}

\maketitle

\begin{abstract}
Meta reinforcement learning (meta-RL) aims to learn a policy solving a set of training tasks simultaneously and quickly adapting to new tasks. It requires massive amounts of data drawn from training tasks to infer the common structure shared among tasks. Without heavy reward engineering, the sparse rewards in long-horizon tasks exacerbate the problem of sample efficiency in meta-RL. Another challenge in meta-RL is the discrepancy of difficulty level among tasks, which might cause one easy task dominating learning of the shared policy and thus preclude policy adaptation to new tasks.
This work introduces a novel objective function to learn an action translator among training tasks.
We theoretically verify that the value of the transferred policy with the action translator can be close to the value of the source policy and our objective function (approximately) upper bounds the value difference.
We propose to combine the action translator with context-based meta-RL algorithms for better data collection and more efficient exploration during meta-training.
Our approach empirically improves the sample efficiency and performance of meta-RL algorithms on sparse-reward tasks.
\end{abstract}

\section{Introduction}
Deep reinforcement learning (DRL) methods achieved remarkable success in solving complex tasks \citep{mnih2015dqn,silver2016mastering,schulman2017proximal}.
While conventional DRL methods learn an individual policy for each task, meta reinforcement learning (meta-RL) algorithms \citep{finn2017model,duan2016rl,mishra2017simple} learn the shared structure across a distribution of tasks so that the agent can quickly adapt to unseen related tasks in the test phase.
Unlike most of the existing meta-RL approaches working on tasks with dense rewards, we instead focus on the sparse-reward training tasks, which are more common in real-world scenarios without access to carefully designed reward functions in the environments.
Recent works in meta-RL propose off-policy algorithms \citep{rakelly2019efficient,fakoor2019meta} and model-based algorithms \citep{nagabandi2018deep,nagabandi2018learning} to improve the sample efficiency in meta-training procedures. 
However, it remains challenging to efficiently solve multiple tasks that require reasoning over long horizons with sparse rewards. 
In these tasks, the scarcity of positive rewards exacerbates the issue of sample efficiency, which plagues meta-RL algorithms and 
makes exploration difficult due to a lack of guidance signals.

Intuitively, we hope that solving one task facilitates learning of other related tasks since the training tasks share a common structure.
However, it is often not the case in practice \citep{rusu2015policy,parisotto2015actor}.
Previous works \citep{teh2017distral,yu2020gradient} point out that detrimental gradient interference might cause an imbalance in policy learning on multiple tasks.
Policy distillation \citep{teh2017distral} and gradient projection \citep{yu2020gradient} are developed in meta-RL algorithms to alleviate this issue.
However, this issue might become more severe in the sparse-reward setting because it is hard to explore each task to obtain meaningful gradient signals for policy updates.
Good performance in one task does not automatically help exploration on the other tasks since the agent lacks positive rewards on the other tasks to learn from.  

\begin{figure}
	\small
    \centering
	\includegraphics[width=0.9\linewidth]{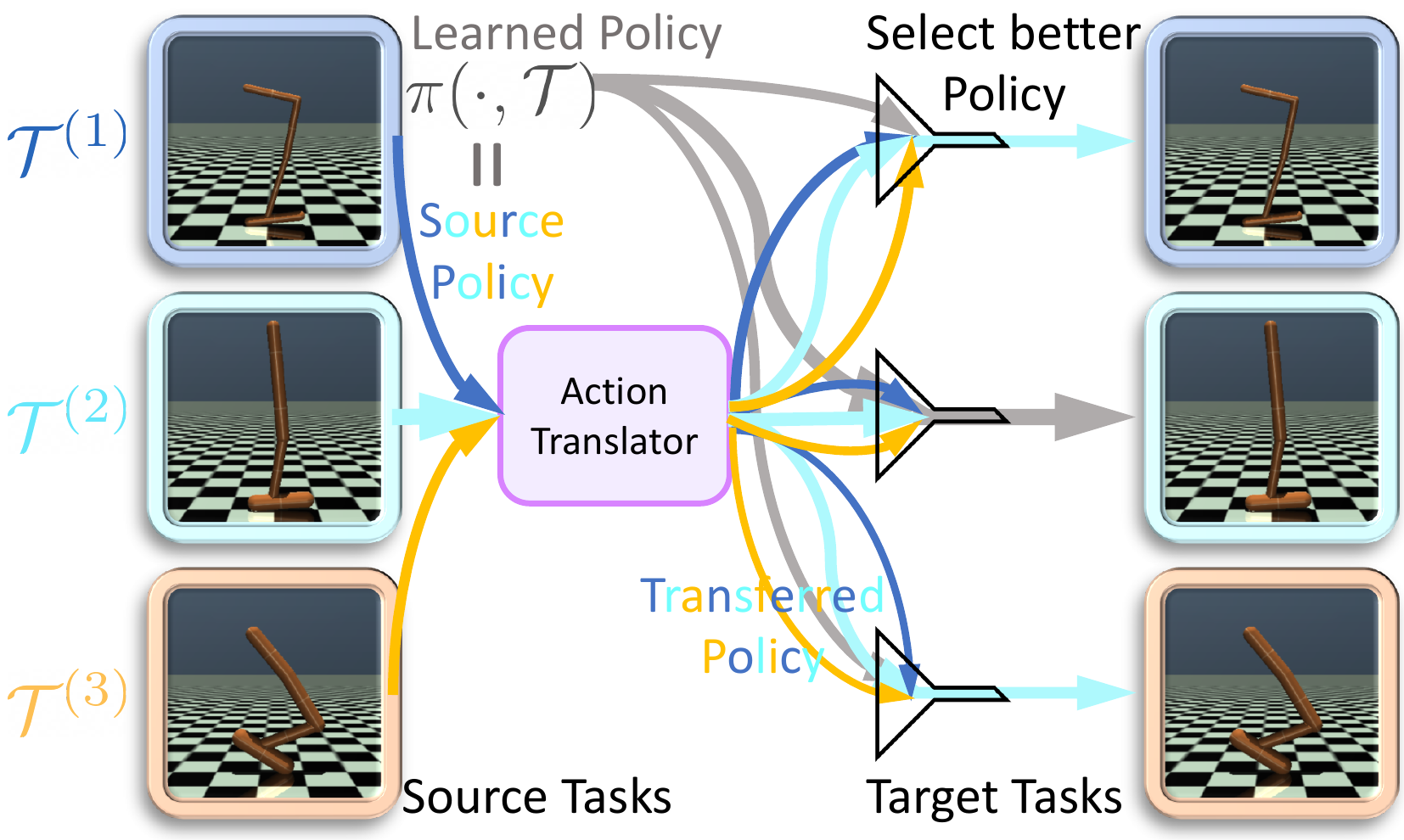} 
	\caption{Illustration of our policy transfer. Size of arrows represents avg. episode reward of learned or transferred policy on target tasks. Different colors indicate different tasks.
	}
	\label{fig:intro} 
\end{figure}

In this work, we aim to fully exploit the highly-rewarding transitions occasionally discovered by the agent in the exploration. 
The good experiences in one task should not only improve the policy on this task but also benefit the policy on other tasks to drive deeper exploration.
Specifically, once the agent learns from the successful trajectories in one training task, we transfer the good policy in this task to other tasks 
to get more positive rewards on other training tasks.
In Fig.~\ref{fig:intro}, if the learned policy $\pi$ performs better on task $\smash{\mathcal{T}^{(2)}}$ than other tasks, then our goal is to transfer the good policy $\smash{\pi(\cdot, \mathcal{T}^{(2)})}$ to other tasks $\smash{\mathcal{T}^{(1)}}$ and $\smash{\mathcal{T}^{(3)}}$. 
%
To enable such transfer, we propose to learn an action translator among multiple training tasks.
The objective function forces the translated action to behave on the target task similarly to the source action on the source task.
We consider the policy transfer for any pair of source and target tasks in the training task distribution (see the colored arrows in Fig.~\ref{fig:intro}).
The agent executes actions following the transferred policy if the transferred policy attains higher rewards than the learned policy on the target task in recent episodes.
This approach enables the agent to leverage relevant data from  multiple training tasks, encourages the learned policy to perform similarly well on multiple training tasks, and thus leads to better performance when applying the well-trained policy to test tasks.

We summarize the contributions: (1) We introduce a novel objective function to transfer any policy from a source Markov Decision Process (MDP) to a target MDP. We prove a theoretical guarantee that the transferred policy can achieve the expected return on the target MDP close to the source policy on the source MDP. The difference in expected returns is (approximately) upper bounded by our loss function with a constant multiplicative factor. (2) We develop an off-policy RL algorithm called \textbf{M}eta-RL with \textbf{C}ontext-conditioned \textbf{A}ction \textbf{T}ranslator (MCAT), applying a policy transfer mechanism in meta-RL to help exploration across multiple sparse-rewards tasks. (3) We empirically demonstrate the effectiveness of MCAT on a variety of simulated control tasks with the MuJoCo physics engine \citep{todorov2012mujoco}, showing that policy transfer improves the performance of context-based meta-RL algorithms.

\section{Related Work}
\textbf{Context-based Meta-RL}
Meta reinforcement learning has been extensively studied in the literature \citep{finn2017model, stadie2018some, sung2017learning, xu2018meta} with many works developing the context-based approaches \citep{rakelly2019efficient,ren2020ocean,liu2020explore}.
\citet{duan2016rl,wang2016learning,fakoor2019meta} employ recurrent neural networks to encode context transitions and formulate the policy conditioning on the context variables.
The objective function of maximizing expected return trains the context encoder and policy jointly. \citet{rakelly2019efficient} leverage a permutation-invariant encoder to aggregate experiences as probabilistic context variables and optimizes it with variational inference.
The posterior sampling is beneficial for exploration on sparse-reward tasks in the adaptation phase, but there is access to dense rewards during training phase.
\citet{li2020generalized} considers a task-family of reward functions.
\citet{lee2020context,seo2020trajectory} trains the context encoder with forward dynamics prediction.
These model-based meta-RL algorithms assume the reward function is accessible for planning. In the sparse-reward setting without ground-truth reward functions, they may struggle to discover non-zero rewards and accurately estimating the reward for model-based planning may be problematic as well.

\textbf{Policy Transfer in RL}
Policy transfer studies the knowledge transfer in target tasks given a set of source tasks and their expert policies.
Policy distillation \citep{rusu2015policy,yin2017knowledge,parisotto2015actor} minimize the divergence of action distributions between the source policy and the learned policy on the target task.
Along this line of works, \citet{teh2017distral} create a centroid policy in multi-task reinforcement learning and distills the knowledge from the task-specific policies to this centroid policy.
Alternatively, inter-task mapping between the source and target tasks \citep{zhu2020transfer} can assist the policy transfer.
Most of these works \citep{gupta2017learning,konidaris2006autonomous,ammar2011reinforcement} assume existence of correspondence over the state space and learn the state mapping between tasks. Recent work \citep{zhang2020learning} learns the state correspondence and action correspondence with dynamic cycle-consistency loss.
Our method differs from this approach, in that we enable action translation among multiple tasks with a simpler objective function.
Importantly, our approach is novel to utilize the policy transfer for any pair of source and target tasks in meta-RL.

\textbf{Bisimulation for States in MDPs}
Recent works on state representation learning \citep{ferns2004metrics, zhang2020invariant, agarwal2021contrastive} investigate the bismilarity metrics for states on multiple MDPs and consider how to learn a representation for states leading to almost identical behaviors under the same action in diverse MDPs. In multi-task reinforcement learning and meta reinforcement learning problems, \citet{zhang2020invariant, zhang2020robust} derives transfer and generalization bounds based on the task and state similarity.
We also bound the value of policy transfer across tasks but our approach is to establish action equivalence instead of state equivalence. 

\section{Method}
In this section, we first describe our approach to learn a context encoder capturing the task features and learn a forward dynamics model predicting next state distribution given the task context (Sec.~\ref{sec:context}).
Then we introduce an objective function to train an action translator so that the translated action on the target task behaves equivalently to the source action on the source task.
The action translator can be conditioned on the task contexts and thus it can transfer a good policy from any arbitrary source task to any other target task in the training set (Sec.~\ref{sec:multi_translator}).
Finally, we propose to combine the action translator with a context-based meta-RL algorithm to transfer the good policy from any one task to the others.
During meta-training, this policy transfer approach helps exploit the good experiences encountered on any one task and benefits the data collection and further policy optimization on other sparse-reward tasks (Sec.~\ref{sec:combine}).
Fig.~\ref{fig:overview} provides an overview of our approach MCAT. 

\begin{figure*}[t]
\centering
\includegraphics[width=\textwidth]{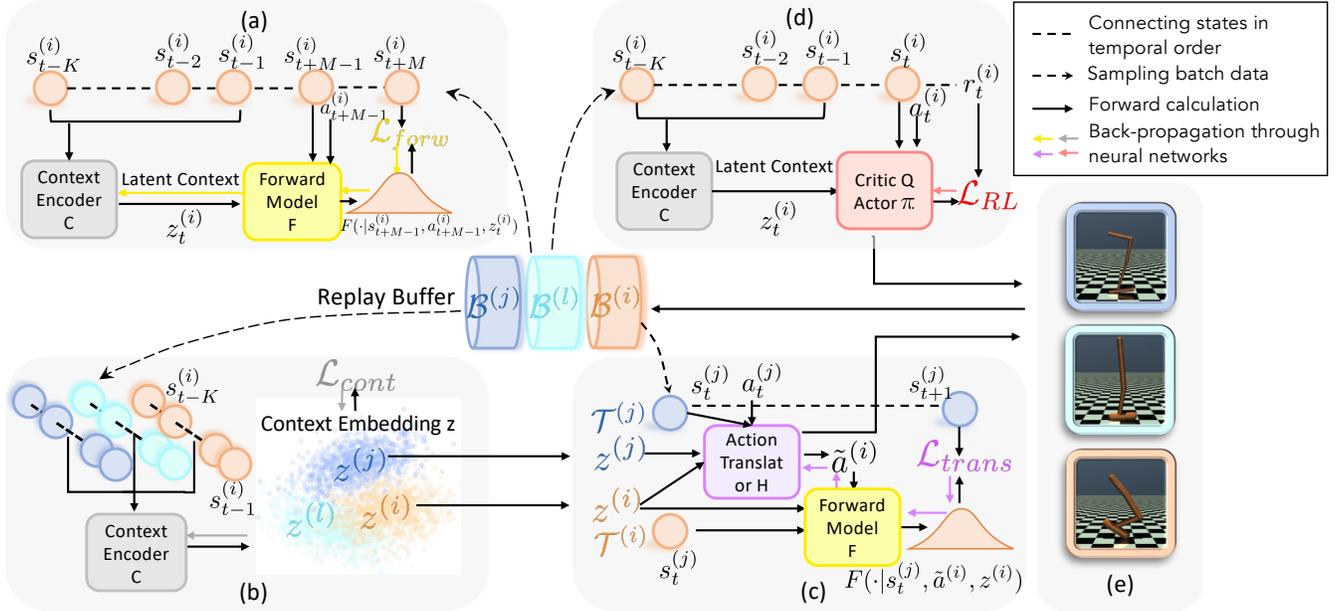}%
 \caption{Overview of MCAT.
 (a) We use forward dynamics prediction loss to train the context encoder $\smash{C}$ and forward model $\smash{F}$.
 (b) We regularize the context encoder $\smash{C}$ with the contrastive loss, so context vectors of transition segments from the same task cluster together.
 (c) With fixed $\smash{C}$ and $\smash{F}$, we learn the action translator $\smash{H}$ for any pair of source task $\smash{\mathcal{T}^{(j)}}$ and target task $\smash{\mathcal{T}^{(i)}}$.
 The action translator aims to generate action $\smash{\tilde{a}^{(i)}}$ on the target task leading to the same next state $\smash{s^{(j)}_{t+1}}$ as the source action $\smash{a^{(j)}_t}$ on the source task.
 (d) With fixed $\smash{C}$, we learn the critic $\smash{Q}$ and actor $\smash{\pi}$ conditioning on the context feature.
 (e) If the agent is interacting with the environment on task $\smash{\mathcal{T}^{(i)}}$, we compare learned policy $\smash{\pi(s,z^{(i)})}$ and transferred policy $\smash{H(s, \pi(s,z^{(j)}), z^{(j)}, z^{(i)})}$, which transfers a good policy $\smash{\pi(s,z^{(j)})}$ on source task $\smash{\mathcal{T}^{(j)}}$ to target task $\smash{\mathcal{T}^{(i)}}$.
 We select actions according to the policy with higher average episode rewards in the recent episodes. Transition data are pushed into the buffer.
 We remark that the components $\smash{C,F,H,Q,\pi}$ are trained alternatively not jointly and this fact facilitates the learning process.}
\label{fig:overview}
\end{figure*}

\subsection{Problem Formulation}
\label{sec:problem}
Following meta-RL formulation in previous work \citep{duan2016rl,mishra2017simple,rakelly2019efficient}, we assume a distribution of tasks $\smash{p(\mathcal{T})}$ and each task is a Markov decision process (MDP) defined as a tuple $\smash{(\mathcal{S}, \mathcal{A}, p, r, \gamma, \rho_0)}$ with state space $\smash{\mathcal{S}}$ , action space $\smash{\mathcal{A}}$,
transition function $\smash{p(s'|s, a)}$,
reward function $\smash{r(s,a,s')}$, discounting factor $\smash{\gamma}$, and initial state distribution $\smash{\rho_0}$.
We can alternatively define the reward function as $\smash{r(s, a)=\sum_{s'\in \mathcal{S}} p(s'|s,a)r(s, a, s')}$.
In context-based meta-RL algorithms, we learn a policy $\smash{\pi(\cdot|s^{(i)}_t, z^{(i)}_t)}$ shared for any task $\smash{\mathcal{T}^{(i)}\sim p(\mathcal{T})}$, where $t$ denotes the timestep in an episode, $i$ denotes the index of a task, the context variable $\smash{z^{(i)}_t \in \mathcal{Z}}$ captures contextual information from history transitions on the task MDP and $\smash{\mathcal{Z}}$ is the space of context vectors.
The shared policy is optimized to maximize its value $\smash{V^{\pi}(\mathcal{T}^{(i)})=\mathbb{E}_{\rho_0^{(i)},\pi,p^{(i)}}[\sum_{t=0}^{\infty}\gamma^t r^{(i)}_t]}$ on each training task $\smash{\mathcal{T}^{(i)}}$.
Following prior works in meta-RL \citep{yu2017preparing,nagabandi2018learning,nagabandi2018deep,zhou2019environment,lee2020context}, we study tasks with the same state space, action space, reward function but varying dynamics functions.
Importantly, we focus on more challenging setting of sparse rewards.
Our goal is to learn a shared policy robust to the dynamic changes and generalizable to unseen tasks. 

\subsection{Learning Context \& Forward Model}
\label{sec:context}

In order to capture the knowledge about any task $\smash{\mathcal{T}^{(i)}}$, we leverage a context encoder $\smash{C:\mathcal{S}^{K}\times \mathcal{A}^{K}\rightarrow \mathcal{Z}}$, where  $\smash{K}$ is the number of past steps used to infer the context.
Related ideas have been explored by \citep{rakelly2019efficient, zhou2019environment,lee2020context}.
In Fig.~\ref{fig:overview}a, given $\smash{K}$ past transitions $\smash{(s^{(i)}_{t-K}, a^{(i)}_{t-K}, \cdots, s^{(i)}_{t-1}, a^{(i)}_{t-1})}$,
context encoder $\smash{C}$ produces the latent context $\smash{z^{(i)}_t=C(s^{(i)}_{t-K}, a^{(i)}_{t-K}, \cdots, s^{(i)}_{t-2}, a^{(i)}_{t-2}, s^{(i)}_{t-1}, a^{(i)}_{t-1})}$.
We train the context encoder $\smash{C}$ and forward dynamics $\smash{F}$ with an objective function to predict the forward dynamics in future transitions $\smash{s^{(i)}_{t+m}}$ ($\smash{1\leq m\leq M}$) within $\smash{M}$ future steps.
The state prediction in multiple future steps drives latent context embeddings $\smash{z^{(i)}_t}$ to be temporally consistent.
The learned context encoder tends to capture dynamics-specific, contextual information (e.g. environment physics parameters).
Formally, we minimize the negative log-likelihood of observing the future states under dynamics prediction.

\setlength{\belowdisplayskip}{0pt} \setlength{\abovedisplayskip}{0pt}
\vspace*{-0.2in}
\begin{equation}
\label{eq:forw}
    \mathcal{L}_{forw}=-\sum_{m=1}^M \log F(s^{(i)}_{t+m}|s^{(i)}_{t+m-1}, a^{(i)}_{t+m-1}, z^{(i)}_t).
\end{equation}

Additionally, given trajectory segments 
from the same task, we require their context embeddings to be similar, whereas the contexts of history transitions from different tasks should be distinct (Fig.~\ref{fig:overview}b).
We propose a contrastive loss \citep{hadsell2006dimensionality} to constrain embeddings within a small distance for positive pairs (i.e. samples from the same task) and push embeddings apart with a distance greater than a margin value $m$ for negative pairs (i.e. samples from different tasks). $\smash{z^{(i)}_{t_1}}$, $\smash{z^{(j)}_{t_2}}$ denote context embeddings of two trajectory samples from $\smash{\mathcal{T}^{(i)}}$, $\smash{\mathcal{T}^{(j)}}$. The contrastive loss function is defined as:

\begin{equation}
\label{eq:cont}
    \mathcal{L}_{cont}=1_{i=j} \|z^{(i)}_{t_1}-z^{(j)}_{t_2}\|^2
    + 1_{i\neq j} \max(0, m-\|z^{(i)}_{t_1}-z^{(j)}_{t_2}\|)
\end{equation}

where $1$ is indicator function. During meta-training, recent transitions on each task $\smash{\mathcal{T}^{(i)}}$ are stored in a buffer $\smash{\mathcal{B}^{(i)}}$ for off-policy learning.
We randomly sample a fairly large batch of trajectory segments from $\smash{\mathcal{B}^{(i)}}$, and average their context embeddings to output task feature $\smash{z^{(i)}}$.
$\smash{z^{(i)}}$ is representative for embeddings on task $\smash{\mathcal{T}^{(i)}}$ and distinctive from features $\smash{z^{(l)}}$ and $\smash{z^{(j)}}$ for other tasks.
We note the learned embedding maintains the similarity across tasks.
$\smash{z^{(i)}}$ is closer to $\smash{z^{(l)}}$ than to $\smash{z^{(j)}}$ if task $\smash{\mathcal{T}^{(i)}}$ is more akin to $\smash{\mathcal{T}^{(l)}}$.
We utilize task features for action translation across multiple tasks.
Appendix~D.5 visualizes context embeddings to study $\mathcal{L}_{cont}$.

\subsection{Learning Action Translator}
\label{sec:multi_translator}

Suppose that transition data $\smash{s^{(j)}_t, a^{(j)}_t, s^{(j)}_{t+1}}$ behave well on task $\smash{\mathcal{T}^{(j)}}$.
We aim to learn an action translator $\smash{H:\mathcal{S}\times \mathcal{A}\times\mathcal{Z}\times\mathcal{Z}\rightarrow \mathcal{A}}$. $\smash{\tilde{a}^{(i)} = H(s^{(j)}_t, a^{(j)}_t, z^{(j)}, z^{(i)})}$ translates the proper action $\smash{a^{(j)}_t}$ from source task $\smash{\mathcal{T}^{(j)}}$ to target task $\smash{\mathcal{T}^{(i)}}$.
In Fig.~\ref{fig:overview}c, if we start from the same state $\smash{s^{(j)}_t}$ on both source and target tasks, the translated action $\smash{\tilde{a}^{(i)}}$ on target task should behave equivalently to the source action $\smash{a^{(j)}_t}$ on the source task.
Thus, the next state $\smash{s_{t+1}^{(i)}\sim p^{(i)}(s^{(j)}_t, \tilde{a}^{(i)})}$ produced from the transferred action $\smash{\tilde{a}^{(i)}}$ on the target task should be close to the real next state $\smash{s^{(j)}_{t+1}}$ gathered on the source task.
The objective function of training the action translator $\smash{H}$ is to maximize the probability of getting next state $\smash{s_{t+1}^{(j)}}$ under the next state distribution  $\smash{s_{t+1}^{(i)}\sim p^{(i)}(s^{(j)}_t, \tilde{a}^{(i)})}$ on the target task.
Because the transition function $\smash{p^{(i)}(s^{(j)}_t, \tilde{a}^{(i)})}$ is unavailable and might be not differentiable, we use the forward dynamics model $\smash{F(\cdot|s_t^{(j)}, \tilde{a}^{(i)}, z^{(i)})}$ to approximate the transition function.
We formulate objective function for action translator $\smash{H}$ as:

\begin{equation}
\label{eq:trans}
    \mathcal{L}_{trans}=-\log F(s_{t+1}^{(j)}|s_t^{(j)}, \tilde{a}^{(i)}, z^{(i)})
\end{equation}

where $\tilde{a}^{(i)} = H(s^{(j)}_t, a^{(j)}_t, z^{(j)}, z^{(i)})$.
We assume to start from the same initial state, the action translator is to find the action on the target task so as to reach the same next state as the source action on the source task.
This intuition to learn the action translator is analogous to learn inverse dynamic model across two tasks.

With a well-trained action translator conditioning on task features $\smash{z^{(j)}}$ and $\smash{z^{(i)}}$, we transfer the good deterministic policy $\smash{\pi(s, z^{(j)})}$ from any source task $\smash{\mathcal{T}^{(j)}}$ to any target task $\smash{\mathcal{T}^{(i)}}$.
When encountering a state $\smash{s^{(i)}}$ on $\smash{\mathcal{T}^{(i)}}$, we query a good action $\smash{a^{(j)}=\pi(s^{(i)}, z^{(j)})}$ which will lead to a satisfactory next state with high return on the source task.
Then $\smash{H}$ translates this good action $\smash{a^{(j)}}$ on the source task to action $\smash{\tilde{a}^{(i)}=H(s^{(i)}, a^{(j)}, z^{(j)}, z^{(i)})}$ on the target task. Executing the translated action $\smash{\tilde{a}^{(i)}}$ moves the agent to a next state on the target task similarly to the good action on the source task. Therefore, transferred policy $\smash{H(s^{(i)}, \pi(s^{(i)}, z^{(j)}), z^{(i)}, z^{(j)})}$ can behave similarly to source policy $\smash{\pi(s, z^{(j)})}$.
Sec.~\ref{sec:fixed} demonstrates the performance of transferred policy in a variety of environments.
Our policy transfer mechanism is related to the action correspondence discussed in \citep{zhang2020learning}.
We extend their policy transfer approach across two domains to multiple domains(tasks) and theoretically validate learning of action translator in Sec.~\ref{sec:theory}. 

\subsection{Combining with Context-based Meta-RL}
\label{sec:combine}

MCAT follows standard off-policy meta-RL algorithms to learn a deterministic policy $\smash{\pi(s_t, z^{(i)}_t)}$ and a value function $\smash{Q(s_t, a_t, z^{(i)}_t)}$, conditioning on the latent task context variable $\smash{z^{(i)}_t}$.
In the meta-training process, using data sampled from $\smash{\mathcal{B}}$, we train the context model $\smash{C}$ and dynamics model $\smash{F}$ with $\mathcal{L}_{forw}$ and $\mathcal{L}_{cont}$ to accurately predict the next state (Fig.~\ref{fig:overview}a \ref{fig:overview}b).
With the fixed context encoder $\smash{C}$ and dynamics model $\smash{F}$, the action translator $\smash{H}$ is optimized to minimize $\mathcal{L}_{trans}$ (Fig.~\ref{fig:overview}c).
Then, with the fixed $\smash{C}$, we train the context-conditioned policy $\smash{\pi}$ and value function $\smash{Q}$ according to $\smash{\mathcal{L}_{RL}}$ (Fig.~\ref{fig:overview}d).
In experiments, we use the objective function $\smash{\mathcal{L}_{RL}}$ from TD3 algorithm \citep{fujimoto2018addressing}.
See pseudo-code of MCAT in Appendix~B.

On sparse-reward tasks where exploration is challenging, the agent might luckily find transitions with high rewards on one task $\smash{\mathcal{T}^{(j)}}$.
Thus, the policy learning on this task might be easier than other tasks.
If the learned policy $\smash{\pi}$ performs better on one task $\smash{\mathcal{T}^{(j)}}$ than another task $\smash{\mathcal{T}^{(i)}}$, we consider the policy transferred from $\smash{\mathcal{T}^{(j)}}$ to $\smash{\mathcal{T}^{(i)}}$.
At a state $s^{(i)}$, we employ the action translator to get a potentially good action $\smash{H(s^{(i)}, \pi(s^{(i)}, z^{(j)}), z^{(j)}, z^{(i)})}$ on target task $\smash{\mathcal{T}^{(i)}}$.
As illustrated in Fig.~\ref{fig:overview}e and Fig.~\ref{fig:intro}, in the recent episodes, if the transferred policy earns higher scores than the learned policy $\smash{\pi(s^{(i)}, z^{(i)})}$ on the target task $\smash{\mathcal{T}^{(i)}}$, we follow the translated actions on $\smash{\mathcal{T}^{(i)}}$ to gather transition data in the current episode.
These data with better returns are pushed into the replay buffer $\smash{\mathcal{B}^{(i)}}$ and produce more positive signals for policy learning in the sparse-reward setting.
These transition samples help improve $\pi$ on $\smash{\mathcal{T}^{(i)}}$ after policy update with off-policy RL algorithms.
As described in Sec.~\ref{sec:multi_translator},
our action translator $\smash{H}$ allows policy transfer across any pair of tasks.
Therefore, with the policy transfer mechanism, the learned policy on each task might benefit from good experiences and policies on any other tasks.
  
\section{Theoretical Analysis}
\label{sec:theory}

In this section, we theoretically support our objective function (Equation~\ref{eq:trans}) to learn the action translator. 
Given $s$ on two MDPs with the same state and action space, 
we define that action $a^{(i)}$ on $\smash{\mathcal{T}^{(i)}}$ is equivalent to action $a^{(j)}$ on $\smash{\mathcal{T}^{(j)}}$ if the actions yielding exactly the same next state distribution and reward, i.e. $\smash{p^{(i)}(\cdot|s, a^{(i)}) = p^{(j)}(\cdot|s, a^{(j)})}$ and  $\smash{r^{(i)}(s, a^{(i)}) = r^{(j)}(s, a^{(j)})}$ .
Ideally, the equivalent action always exists on the target MDP $\smash{\mathcal{T}^{(i)}}$ for any state-action pair on the source MDP $\smash{\mathcal{T}^{(j)}}$ and there exists an action translator function $\smash{H:\mathcal{S}\times \mathcal{A} \rightarrow \mathcal{A}}$ to identify the exact equivalent action.
Starting from state $s$, the translated action $\smash{\tilde{a}=H(s, a)}$ on the task $\smash{\mathcal{T}^{(i)}}$ generates reward and next state distribution the same as action $a$ on the task $\smash{\mathcal{T}^{(j)}}$ (i.e. $\smash{\tilde{a} B_s a}$).
Then any deterministic policy $\smash{\pi^{(j)}}$ on the source task $\smash{\mathcal{T}^{(j)}}$ can be perfectly transferred to the target task $\smash{\mathcal{T}^{(i)}}$ with $\smash{\pi^{(i)}(s) = H(s, \pi^{(j)}(s))}$.
The value of the policy $\smash{\pi^{(j)}}$ on the source task $\smash{\mathcal{T}^{(j)}}$ is equal to the value of transferred policy $\smash{\pi^{(i)}}$ on the target task $\smash{\mathcal{T}^{(i)}}$.

Without the assumption of existence of a perfect correspondence for each action, given any two deterministic policies $\smash{\pi^{(j)}}$ on $\smash{\mathcal{T}^{(j)}}$ and $\smash{\pi^{(i)}}$ on $\smash{\mathcal{T}^{(i)}}$, we prove that the difference in the policy value is upper bounded by a scalar $\frac{d}{1-\gamma}$ depending on L1-distance between reward functions $\smash{|r^{(i)}(s, \pi^{(i)}(s)) - r^{(j)}(s, \pi^{(j)}(s))|}$ and total-variation distance between next state distributions $\smash{D_{TV}(p^{(i)}(\cdot|s,\pi^{(i)}(s)), p^{(j)}(\cdot|s,\pi^{(j)}(s)))}$.
Detailed theorem (Theorem 1) and proof are in Appendix~A.

For a special case where reward function $\smash{r(s,a,s')}$ only depends on the current state $s$ and next state $s'$, the upper bound of policy value difference is only related to the distance in next state distributions.

\begin{restatable}{cor}{boundpolicytransfertv}
\label{cor:bound_policy}
Let $\smash{\mathcal{T}^{(i)}=\{\mathcal{S}, \mathcal{A}, p^{(i)}, r^{(i)}, \gamma, \rho_0\}}$ and  $\smash{\mathcal{T}^{(j)}=\{\mathcal{S}, \mathcal{A}, p^{(j)}, r^{(j)}, \gamma, \rho_0\}}$ be two MDPs sampled from the distribution of tasks $p(\mathcal{T})$. $\pi^{(i)}$, $\pi^{(j)}$ is the deterministic policy on $\smash{\mathcal{T}^{(i)}}$, $\smash{\mathcal{T}^{(j)}}$. Assume the reward function only depends on the state and next state $\smash{r^{(i)}(s, a^{(i)}, s')=r^{(j)}(s, a^{(j)}, s')= r(s, s')}$. Let $d = \sup_{s\in\mathcal{S}} 2M D_{TV}(p^{(j)}(\cdot|s,\pi^{(j)}(s)), p^{(i)}(\cdot|s,\pi^{(i)}(s)))$ and $\smash{M=\sup_{s\in \mathcal{S}, s'\in \mathcal{S}} |r(s, s')+\gamma V^{\pi^{(j)}}(s, \mathcal{T}^{(j)})|}$.
$\forall s \in \mathcal{S}$, we have
\begin{equation}
   \left|V^{\pi^{(i)}}(s, \mathcal{T}^{(i)}) - V^{\pi^{(j)}}(s, \mathcal{T}^{(j)})\right| \leq \frac{d}{1-\gamma} 
\end{equation}
\end{restatable}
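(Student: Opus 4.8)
The plan is to run a one-step Bellman expansion and set up a self-referential (contraction) inequality on the value gap. Since the reward depends only on $(s,s')$, both value functions satisfy a Bellman fixed point: abbreviating $p_i = p^{(i)}(\cdot\mid s,\pi^{(i)}(s))$ and $p_j = p^{(j)}(\cdot\mid s,\pi^{(j)}(s))$, we have $V^{\pi^{(i)}}(s,\mathcal{T}^{(i)}) = \sum_{s'} p_i(s')\bigl[r(s,s') + \gamma V^{\pi^{(i)}}(s',\mathcal{T}^{(i)})\bigr]$ and the analogous identity for $j$. Writing $g(s') = r(s,s') + \gamma V^{\pi^{(j)}}(s',\mathcal{T}^{(j)})$ and defining the gap $\Delta(s) = V^{\pi^{(i)}}(s,\mathcal{T}^{(i)}) - V^{\pi^{(j)}}(s,\mathcal{T}^{(j)})$, the goal is to bound $\|\Delta\|_\infty = \sup_{s}|\Delta(s)|$.

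First I would subtract the two Bellman equations and insert $\pm\,\gamma\sum_{s'} p_i(s')V^{\pi^{(j)}}(s',\mathcal{T}^{(j)})$, which splits $\Delta(s)$ into a distribution-mismatch term and a recursion term:
\begin{equation*}
\Delta(s) = \underbrace{\sum_{s'}\bigl(p_i(s') - p_j(s')\bigr)\,g(s')}_{(A)} \;+\; \underbrace{\gamma\sum_{s'} p_i(s')\,\Delta(s')}_{(B)}.
\end{equation*}
The essential design choice here is to evaluate the intermediate term with the \emph{source} value $V^{\pi^{(j)}}$, so that the reward and the source value collapse into the single function $g$; this is precisely why the special-case bound involves $M = \sup_{s,s'}\lvert r(s,s') + \gamma V^{\pi^{(j)}}\rvert$ rather than a separate reward term. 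For $(A)$ I would use the identity $\sum_{s'}\lvert p_i(s')-p_j(s')\rvert = 2D_{TV}(p_i,p_j)$ together with $\lvert g(s')\rvert \le M$ to get $\lvert(A)\rvert \le 2M\,D_{TV}(p_i,p_j)$, whose supremum over $s$ is exactly $d$. For $(B)$, $\lvert(B)\rvert \le \gamma\sum_{s'}p_i(s')\lvert\Delta(s')\rvert \le \gamma\|\Delta\|_\infty$. Adding the two bounds and taking the supremum over $s$ yields $\|\Delta\|_\infty \le d + \gamma\|\Delta\|_\infty$, which rearranges to the claimed $\|\Delta\|_\infty \le d/(1-\gamma)$.

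The main obstacle is a mild one: justifying the final rearrangement, which requires $\|\Delta\|_\infty < \infty$. I would secure this from $M<\infty$ and $\gamma<1$, which make both value functions uniformly bounded. I would also verify the total-variation constant and the continuous-state version of step $(A)$, where the sums become integrals and the $\ell_1$ identity is replaced by the variational bound $\bigl\lvert\int g\,d(p_i-p_j)\bigr\rvert \le 2\|g\|_\infty D_{TV}(p_i,p_j)$. As a sanity check, the statement should also follow from the general Theorem~1 by observing that under $r^{(i)}=r^{(j)}=r(s,s')$ the reward discrepancy $\lvert r^{(i)}(s,\pi^{(i)}(s)) - r^{(j)}(s,\pi^{(j)}(s))\rvert$ is itself controlled by $2\sup\lvert r\rvert\,D_{TV}(p_i,p_j)$ and merges into the transition term, although the direct argument above delivers the tighter constant $M$.
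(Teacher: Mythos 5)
Your proposal is correct and follows essentially the same route as the paper's proof: the same Bellman expansion exploiting $r(s,s')$, the same inserted cross term so that the transition mismatch multiplies $g(s') = r(s,s') + \gamma V^{\pi^{(j)}}(s',\mathcal{T}^{(j)})$, the same total-variation bound producing $d$, and the same recursion term bounded by $\gamma$ times the sup of the gap. The only difference is cosmetic—you close the recursion by rearranging $\|\Delta\|_\infty \le d + \gamma\|\Delta\|_\infty$ with an explicit finiteness justification, whereas the paper iterates the inequality into the geometric series $d + \gamma d + \gamma^2 d + \cdots$; your explicit boundedness remark actually firms up a step the paper leaves implicit.
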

According to Proposition~\ref{cor:bound_policy}, if we can optimize the action translator $\smash{H}$ to minimize $d$ for policy $\smash{\pi^{(j)}}$ and $\smash{\pi^{(i)}(s) = H(s, \pi^{(j)}(s))}$, the value of the transferred policy $\smash{\pi^{(i)}}$ on the target task can be close to the value of source policy $\smash{\pi^{(j)}}$.
In many real-world scenarios, especially sparse-reward tasks, the reward heavily depends on the state and next state instead of action. For example, robots running forward receive rewards according to their velocity (i.e. the location difference between the current and next state within one step); robot arms manipulating various objects earn positive rewards only when they are in the target positions.
Thus, our approach focuses on the cases with reward function approximately as $r(s,s')$ under the assumption of Proposition~\ref{cor:bound_policy}.
For any state $\smash{s\in \mathcal{S}}$, we minimize the total-variation distance between two next state distributions $\smash{D_{TV}(p^{(j)}(\cdot|s_t,\pi^{(j)}(s_t)), p^{(i)}(\cdot|s_t,\pi^{(i)}(s_t)))}$ on source and target MDPs.
Besides, we discuss the policy transfer for tasks with a general reward function in Appendix~C.3.


There is no closed-form solution of $\smash{D_{TV}}$ and $\smash{D_{TV}}$ is related with Kullback–Leibler (KL) divergence $\smash{D_{KL}}$ by the inequality $\smash{D_{TV}(p\|q)^2\leq D_{KL}(p\|q)}$ 
Thus, we instead consider minimizing $\smash{D_{KL}}$ between two next state distributions.
$\smash{D_{KL}(p^{(j)}||p^{(i)})}$ is $\smash{-\sum_{s'}p^{(j)}(s') \log p^{(i)}(s')} + \smash{\sum_{s'}p^{(j)}(s') \log p^{(j)}(s')}$.
The second term does not involve $H$ and thus can be viewed as a constant term when optimizing $H$.
We focus on minimizing the first term $\smash{-\sum_{s'}p^{(j)}(s') \log p^{(i)}(s')}$.
$F$ is a forward model approximating $\smash{p^{(i)}(s')}$.
We sample transitions $s, \smash{\pi^{(j)}(s)}, s'$ from the source task. $s'$ follows the distribution $\smash{p^{(j)}(s')}$.
Thus, minimizing the negative log-likelihood of observing the next state $\smash{L_{trans}=-\log F(s'|s,\pi^{(i)}(s))}$ is to approximately minimize $D_{KL}$.
Experiments in Sec.~\ref{sec:fixed}
suggest that this objective function works well for policy transfer across two MDPs. 
Sec.~\ref{sec:multi_translator} explains the motivation behind $\smash{\mathcal{L}_{trans}}$ (Equation~\ref{eq:trans}) to learn an action translator among multiple MDPs instead of only two MDPs.

\section{Experiment}
\label{sec:experiment}
We design and conduct experiments to answer the following questions: (1) Does the transferred policy perform well on the target task (Tab.~\ref{tab:fixed_dataset}, Fig.~\ref{fig:path})?
(2) Can we transfer the good policy for any pair of source and target tasks (Fig.~\ref{fig:heatmap_improvement})?
(3) Does policy transfer improve context-based Meta-RL algorithms (Fig.~\ref{fig:ours_baseline}, Tab.~\ref{tab:ours_baseline},  Tab.~\ref{tab:effect_pt})?
(4) Is the policy transfer more beneficial when the training tasks have sparser rewards (Tab.~\ref{tab:more_sparse})?
Experimental details can be found in Appendix~C.

\begin{figure*}[!t]
\centering
\begin{subfigure}{0.195\textwidth}
    \centering
    \includegraphics[width=\linewidth]{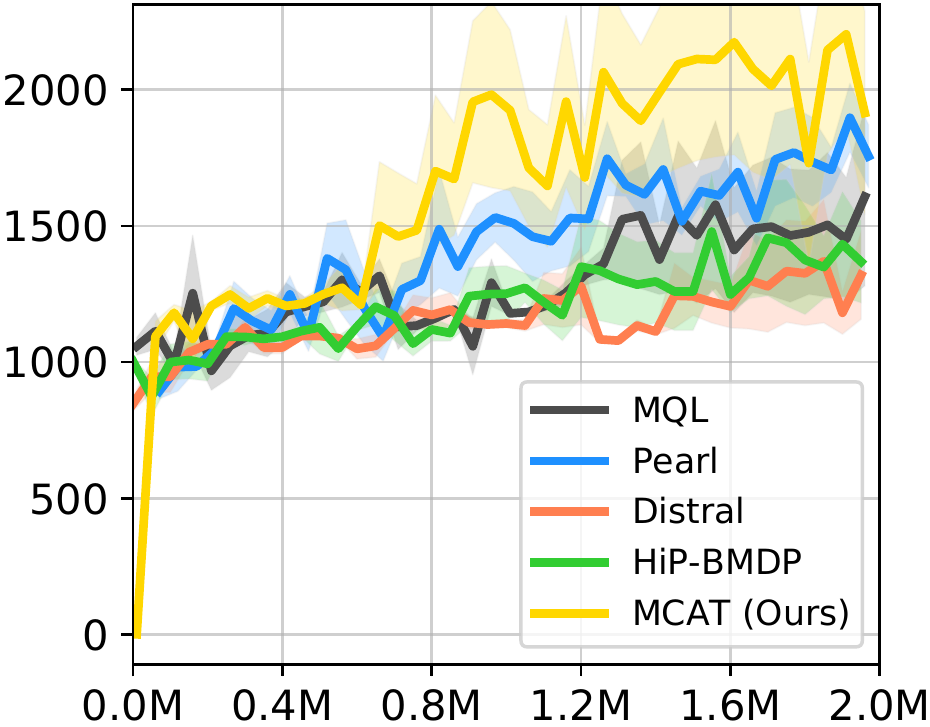}
    \caption{Hopper Size}
\end{subfigure}%
\hfill
\begin{subfigure}{0.195\textwidth}
    \includegraphics[width=\linewidth]{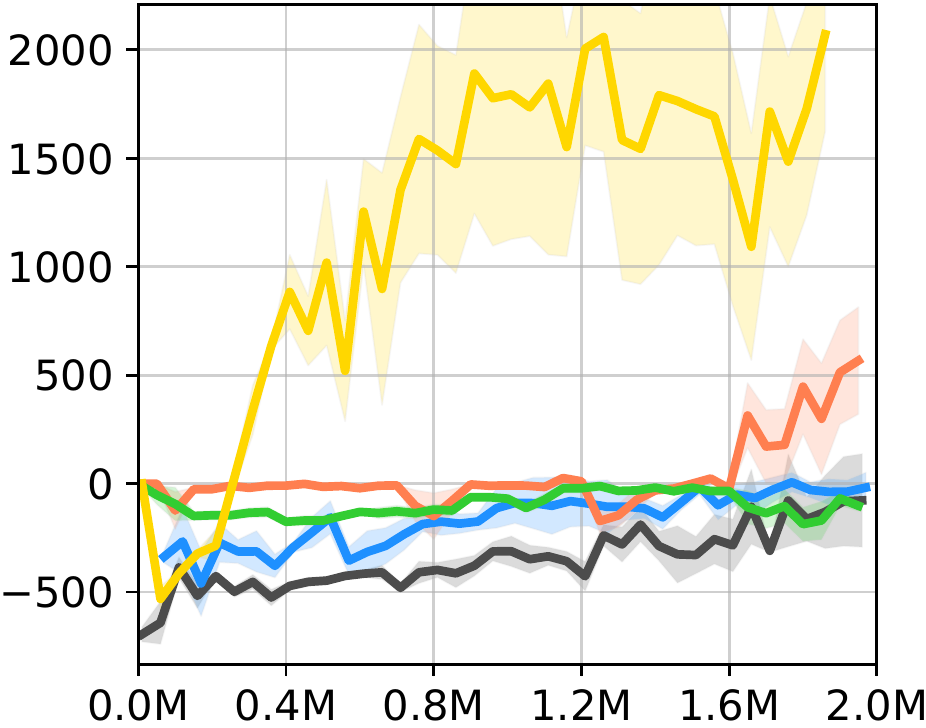}
    \caption{HalfCheetah Armature}
\end{subfigure}%
\hfill
\begin{subfigure}{0.195\textwidth}
    \centering
    \includegraphics[width=\linewidth]{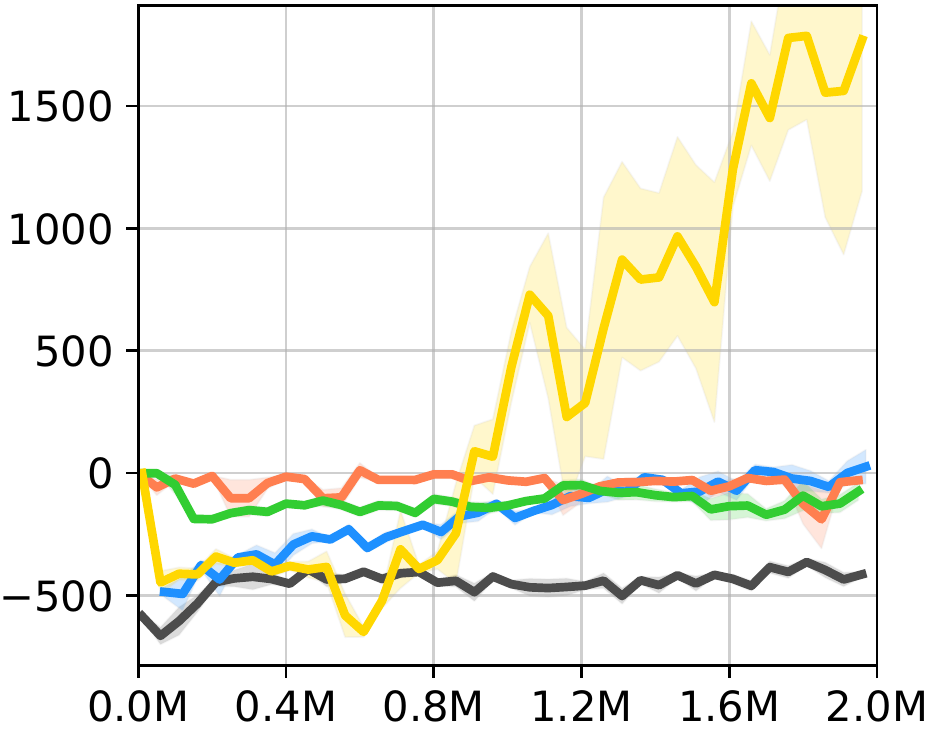}
    \caption{HalfCheetah Mass}
\end{subfigure}
\hfill
\begin{subfigure}{0.195\textwidth}
    \centering
    \includegraphics[width=\linewidth]{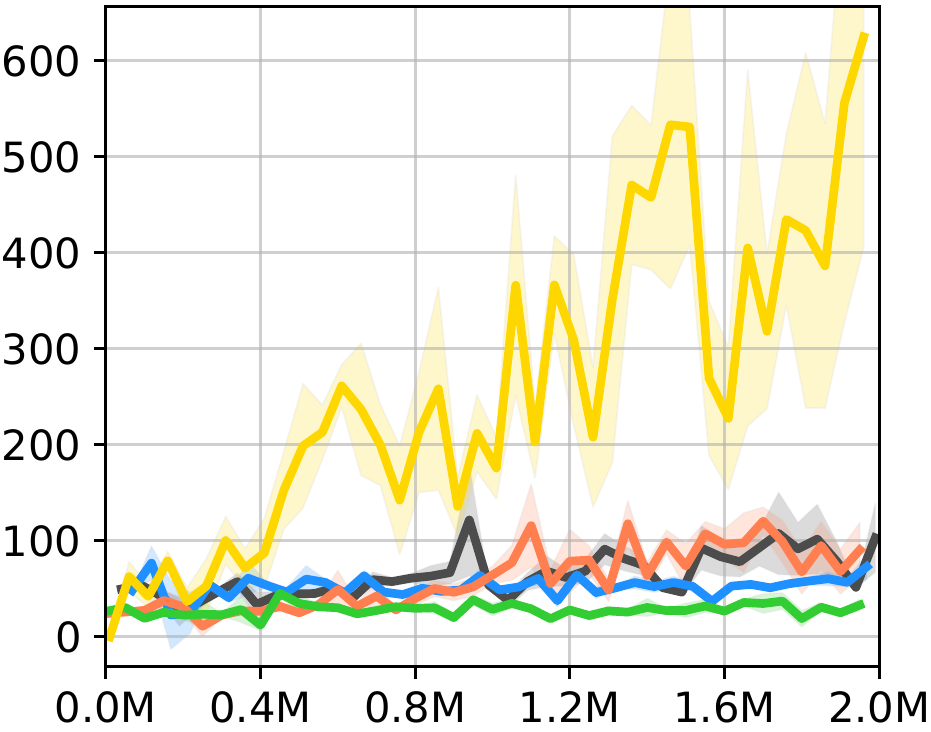}
    \caption{Ant Damping}
\end{subfigure}
\hfill
\begin{subfigure}{0.195\textwidth}
    \centering
    \includegraphics[width=\linewidth]{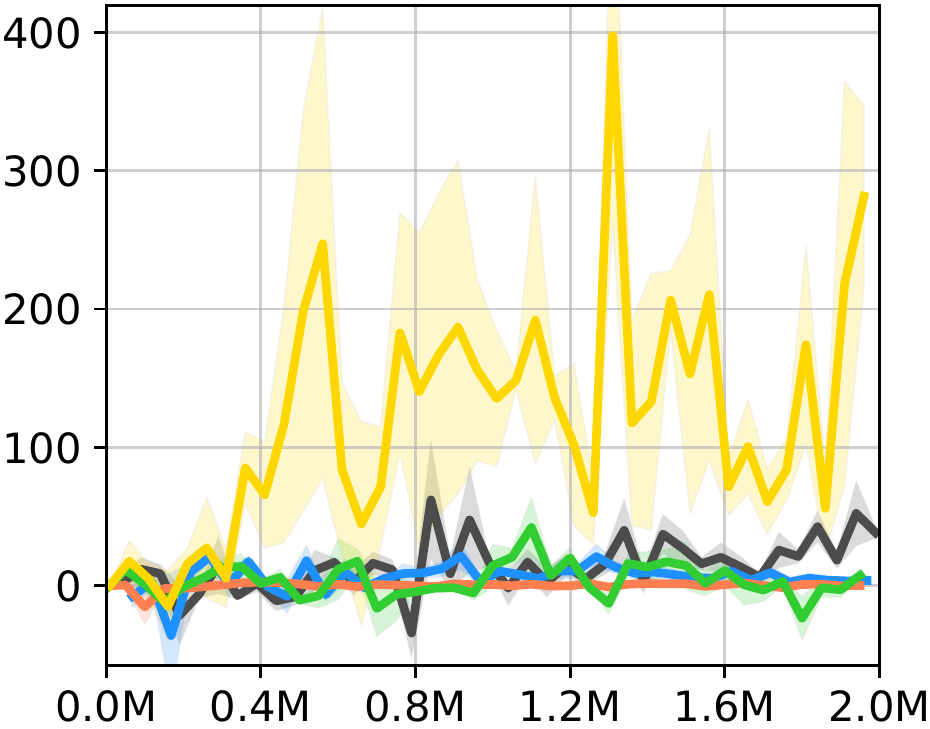}
    \caption{Ant Cripple}
\end{subfigure}

    \vspace*{-3pt}
    \caption{Learning curves of episode rewards on test tasks, averaged over 3 runs. The x-axis is total number of timesteps and the y-axis is average episode reward.
    Shadow areas indicate standard error.}
    \label{fig:ours_baseline}
\end{figure*}

\subsection{Policy Transfer with Fixed Dataset}

\label{sec:fixed}
We test our proposed action translator with fixed datasets of transitions aggregated from pairs of source and target tasks.
On MuJoCo environments HalfCheetah and Ant, we create tasks with varying dynamics as in \cite{zhou2019environment,lee2020context,zhang2020learning}.
We keep default physics parameters in source tasks and modify them to yield noticeable changes in the dynamics for target tasks.
On HalfCheetah, the tasks differ in the armature.
On Ant, we set different legs crippled.
A well-performing policy is pre-trained on the source task with TD3 algorithm \citep{fujimoto2018addressing} and dense rewards.
We then gather training data with mediocre policies on the source and target tasks.
We also include object manipulation tasks on MetaWorld benchmark \citep{yu2020meta}. Operating objects with varied physics properties requires the agent to handle different dynamics.
The knowledge in grasping and pushing a cylinder might be transferrable to tasks of moving a coffee mug or a cube.
The agent gets a reward of 1.0 if the object is in the goal location. Otherwise, the reward is 0.
We use the manually-designed good policy as the source policy and collect transition data by adding noise to the action drawn from the good policy.

\begin{table}[!ht]
\centering
\setlength{\tabcolsep}{2pt}
\begin{tabular}{c|ccc}
\toprule
\makecell{Setting}  & \makecell{Source \\policy} & 
\makecell{Transferred \\ policy \\ \citep{zhang2020learning}} & \makecell{Transferred \\ policy \\(Ours)} \\ 
\midrule
\makecell{HalfCheetah} & \makecell{2355.0} &
\makecell{\textbf{3017.1}\small{($\pm$44.2)}} &
\makecell{2937.2\small{($\pm$9.5)}} \\ 
\makecell{Ant} & 
\makecell{55.8} &
\makecell{97.2\small{($\pm$2.5)}} &
\makecell{\textbf{208.1}\small{($\pm$8.2)}}  \\ 
\midrule
\makecell{Cylinder-Mug}& 0.0 & 
\makecell{308.1\small{($\pm$75.3)}} &
\makecell{\textbf{395.6}\small{($\pm$19.4)}}\\
\makecell{Cylinder-Cube}& 0.0 & \makecell{262.4\small{($\pm$48.1)}} & \makecell{\textbf{446.1}\small{($\pm$1.1)}} \\
\bottomrule
\end{tabular}
\caption{Mean ($\pm$ standard error) of episode rewards over 3 runs, comparing source and transferred policy on target task. 
}
\label{tab:fixed_dataset}
\end{table}

\begin{figure}[!ht]
\centering
\begin{subfigure}{.15\textwidth}
  \centering
  \includegraphics[width=\linewidth]{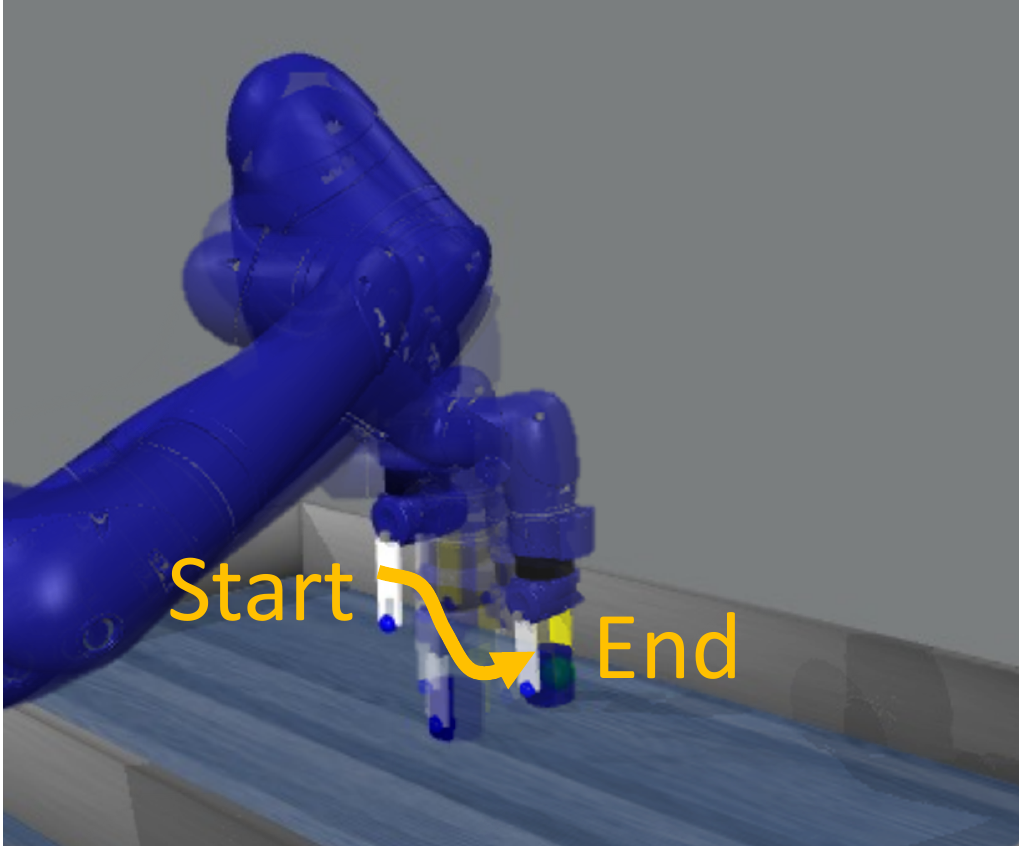}
  \caption{Source policy on source task}
  \label{fig:s2s}
\end{subfigure}%
\begin{subfigure}{.15\textwidth}
  \centering
  \includegraphics[width=\linewidth]{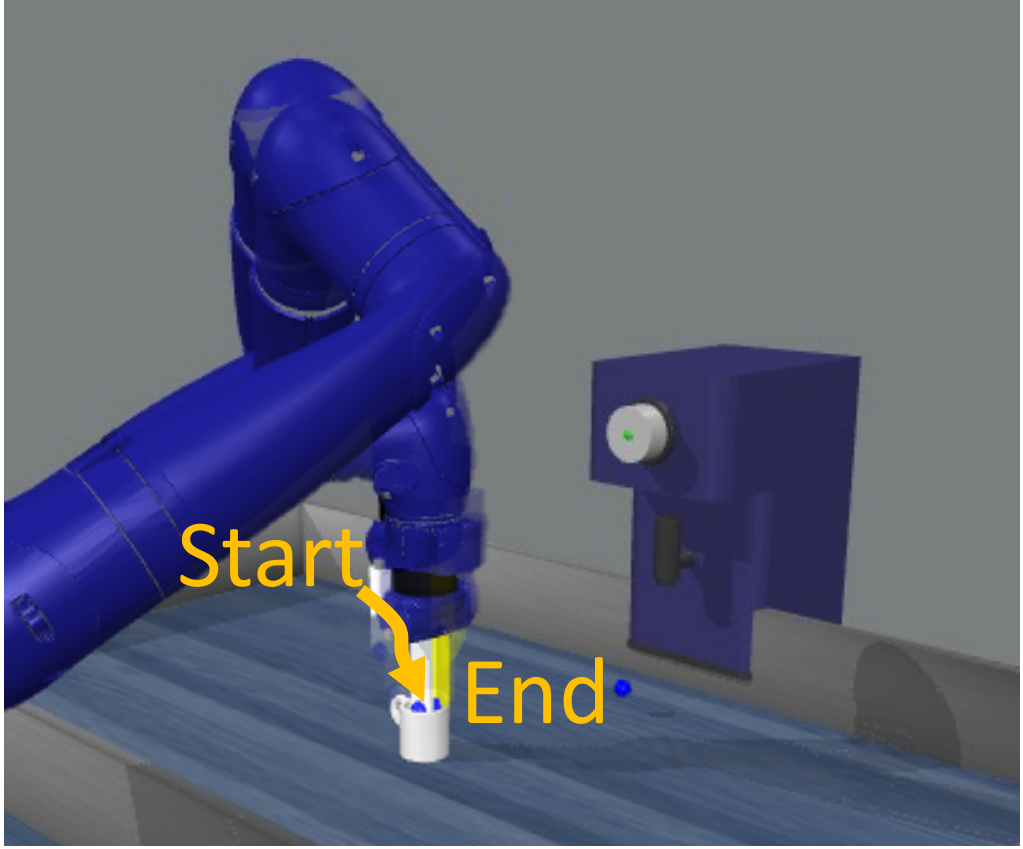}
  \caption{Source policy on target task}
  \label{fig:s2t}
\end{subfigure}%
\begin{subfigure}{.15\textwidth}
  \includegraphics[width=\linewidth]{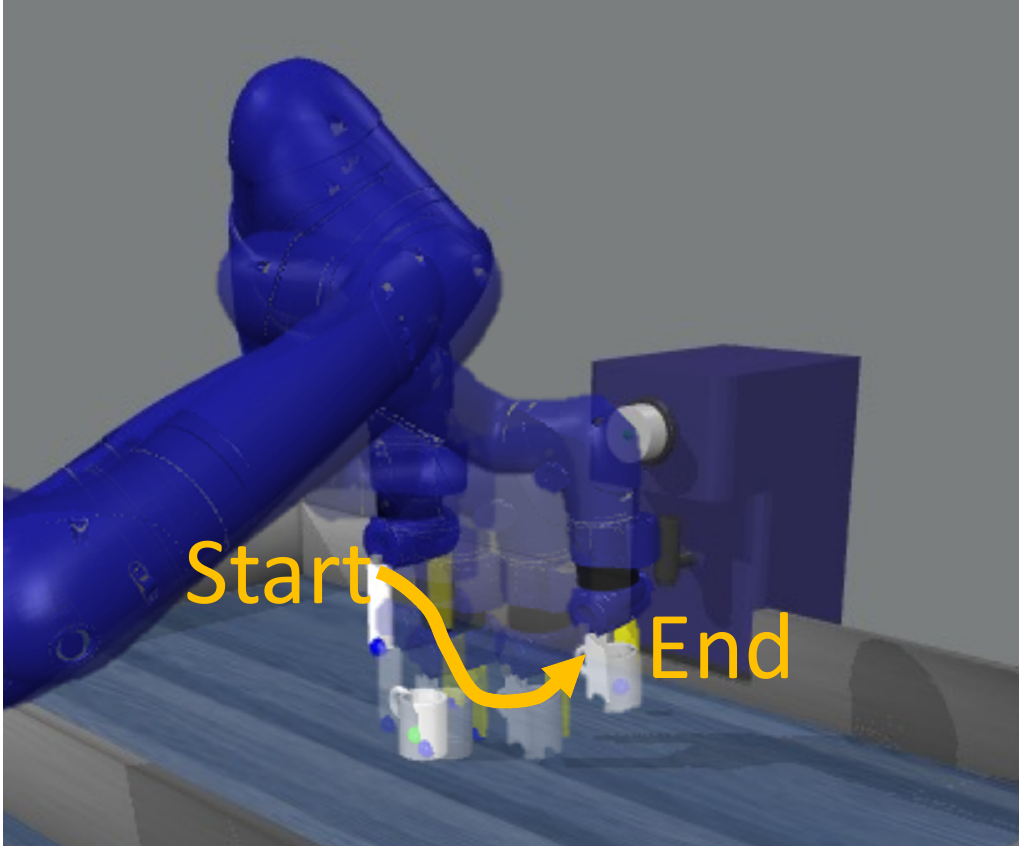}
  \caption{Transferred policy on target task}
  \label{fig:t2t}
\end{subfigure}
\caption{Robot arm moving paths on source (pushing a \textit{cylinder}) or target task (moving a \textit{mug} to a coffee machine).}
\label{fig:path}
\end{figure}

As presented in Tab.~\ref{tab:fixed_dataset}, directly applying a good source policy on the target task performs poorly. 
We learn dynamics model $F$ on target task with $\smash{\mathcal{L}_{forw}}$ and action translator $H$ with $\smash{\mathcal{L}_{trans}}$.
From a single source task to a single target task, the transferred policy with our action translator (without conditioning on the task context) yields episode rewards significantly better than the source policy on the target task.
Fig.~\ref{fig:path} visualizes moving paths of robot arms.
The transferred policy on target task resembles the source policy on source task, while the source policy has trouble grasping the coffee mug on target task.
Videos of agents' behavior are in supplementary materials. 
Tab.~\ref{tab:fixed_dataset} reports experimental results of baseline \citep{zhang2020learning} transferring the source policy based on action correspondence.
It proposes to learn an action translator with three loss terms: adversarial loss, domain cycle-consistency loss, and dynamic cycle-consistency loss.
Our loss $\smash{\mathcal{L}_{trans}}$ (Equation~\ref{eq:trans}) draws upon an idea analogous to dynamic cycle-consistency though we have a more expressive forward model $\smash{F}$ with context variables.
When $\smash{F}$ is strong and reasonably generalizable, domain cycle-consistency loss training the inverse action translator and adversarial loss constraining the distribution of translated action may not be necessary.
Ours with a simpler objective function is competitive with \citet{zhang2020learning}. 

\begin{figure}[!ht]
\centering
\begin{subfigure}{.23\textwidth}
  \centering
  \includegraphics[width=\linewidth]{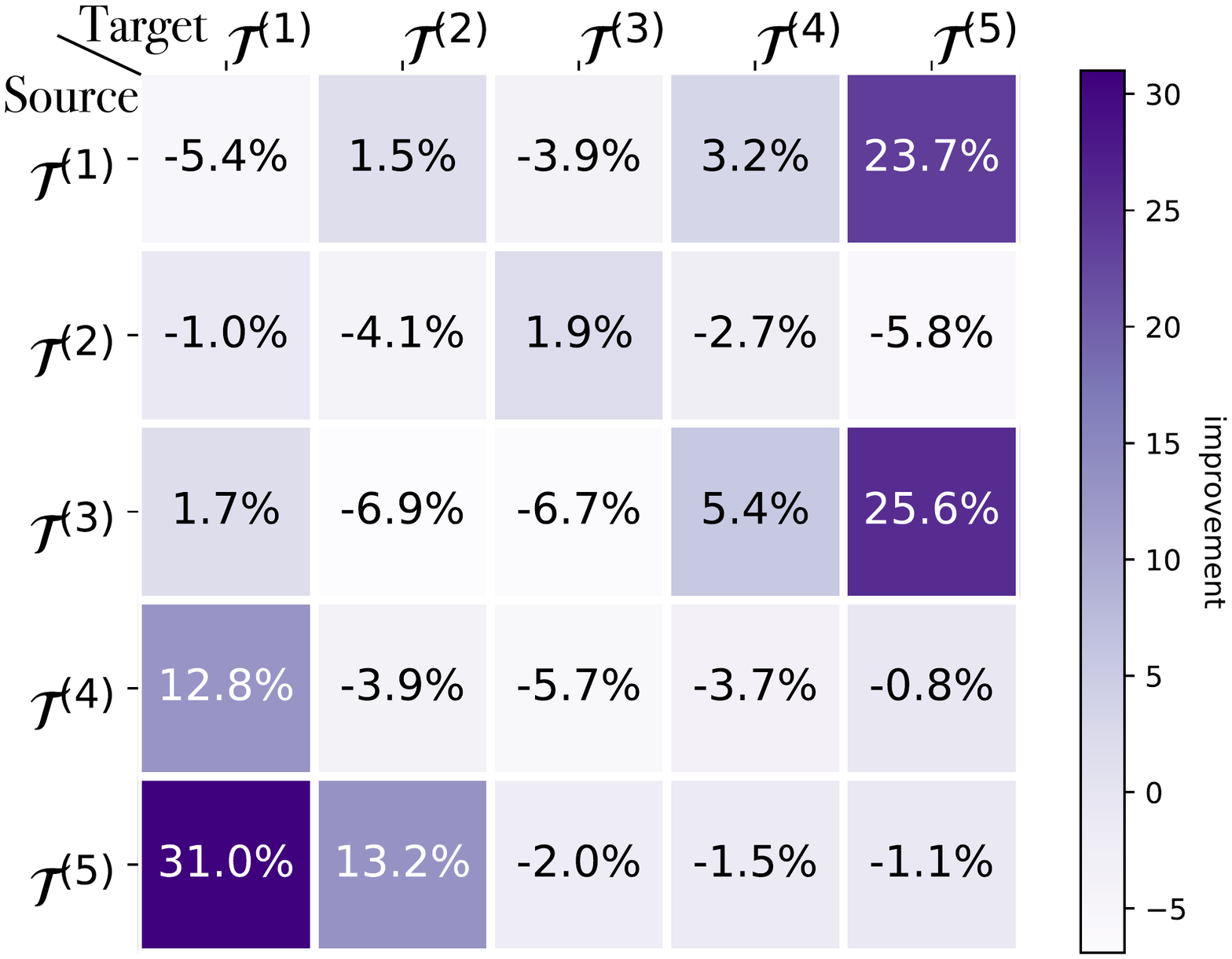}
  \caption{HalfCheetah}
  \label{fig:cheetah_heatmap}
\end{subfigure}%
\hspace{2pt}
\begin{subfigure}{.23\textwidth}
  \centering
  \includegraphics[width=\linewidth]{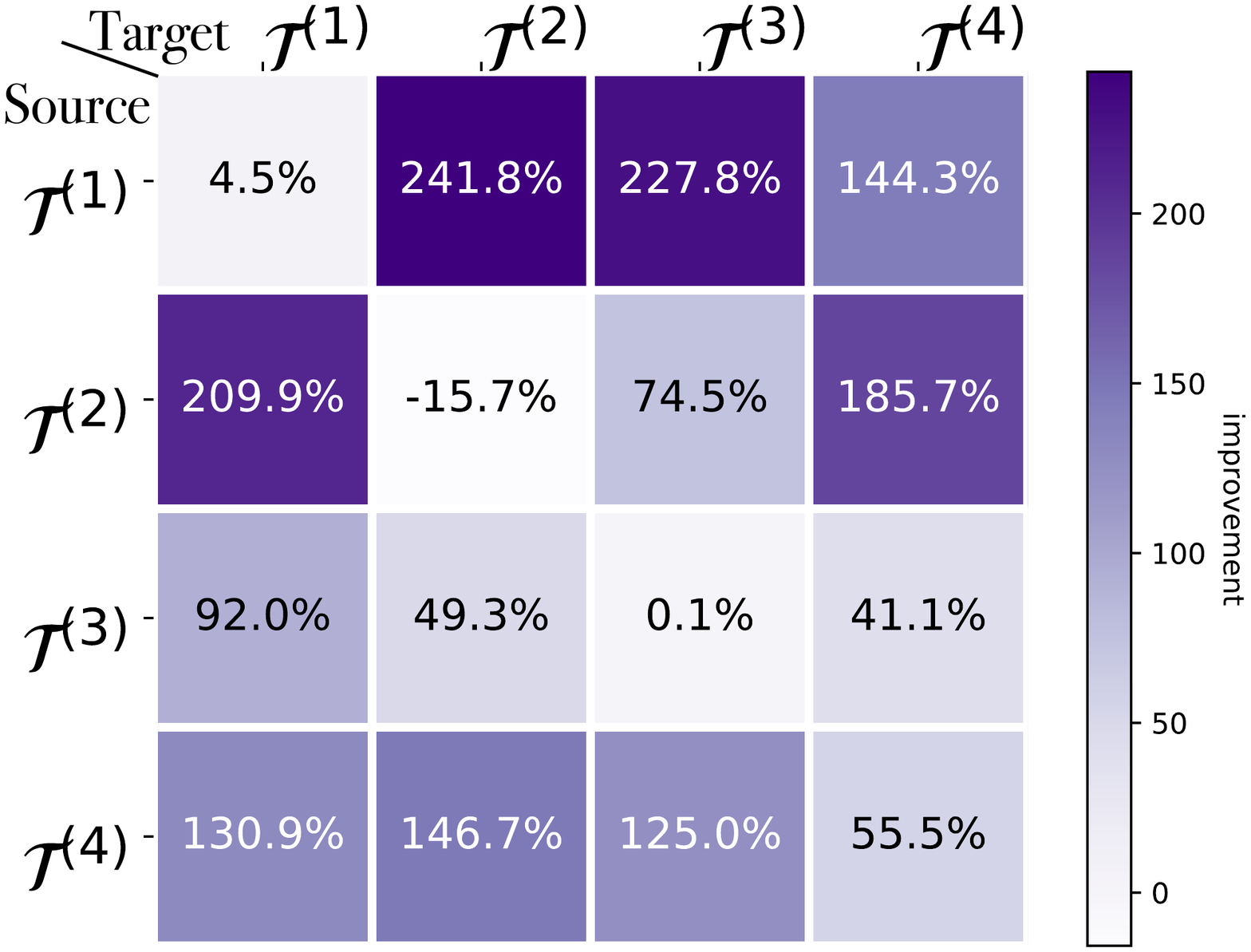}
  \caption{Ant}
  \label{fig:ant_heatmap}
\end{subfigure}
\caption{Improvement transferred policy over source policy. 
}
\label{fig:heatmap_improvement}
\end{figure}

We extend the action translator to multiple tasks by conditioning $\smash{H}$ on context variables of source and target tasks. We measure the improvement of our transferred policy over the source policy on the target tasks.
On HalfCheetah tasks $\smash{\mathcal{T}^{(1)} \cdots \mathcal{T}^{(5)}}$, the armature becomes larger.
As the physics parameter in the target task deviates more from source task, the advantage of transferred policy tends to be more significant (Fig.~\ref{fig:cheetah_heatmap}), because the performance of transferred policy does not drop as much as source policy.
We remark that the unified action translator is for any pair of source and target tasks.
So action translation for the diagonal elements might be less than $0\%$. 
For each task on Ant, we set one of its four legs crippled, so any action applied to the crippled leg joints is set as 0.
Ideal equivalent action does not always exist across tasks with different crippled legs in this setting.
Therefore, it is impossible to minimize $d$ in Proposition~\ref{cor:bound_policy} as 0.
Nevertheless, the inequality proved in Proposition~\ref{cor:bound_policy} still holds and policy transfer empirically shows positive improvement on most source-target pairs (Fig.~\ref{fig:ant_heatmap}). 

\subsection{Comparison with Context-based Meta-RL}
\label{sec:ours_combination}
We evaluate MCAT combining policy transfer with context-based TD3 in meta-RL problems. The action translator is trained dynamically with data maintained in replay buffer and the source policy keeps being updated.
On MuJoCo, we modify environment physics parameters (e.g. size, mass, damping) that affect the transition dynamics to design tasks. We predefine a fixed set of physics parameters for training tasks and unseen test tasks.
In order to test algorithms' ability in tackling difficult tasks, environment rewards are delayed to create sparse-reward RL problems \citep{oh2018self,tang2020self}.
In particular, we accumulate dense rewards over $n$ consecutive steps, and the agent receives the delayed feedback every $n$ step or when the episode terminates.
To fully exploit the good data collected from our transferred policy, we empirically incorporate self-imitation learning (SIL) \citep{oh2018self}, which imitates the agent's own successful past experiences to further improve the policy.

We compare with several context-based meta-RL methods: MQL \citep{fakoor2019meta}, 
PEARL \citep{rakelly2019efficient}, 
Distral \citep{teh2017distral}, 
and HiP-BMDP \citep{zhang2020robust}. 
Although the baselines perform well on MuJoCo environments with dense rewards, the delayed environment rewards degrade policy learning (Tab.~\ref{tab:ours_baseline}, Fig.~\ref{fig:ours_baseline}) because the rare transitions with positive rewards are not fully exploited.
In contrast, MCAT shows a substantial advantage in performance and sample complexity on both the training tasks and the test tasks.
Notably, the performance gap is more significant in more complex environments (e.g. HalfCheetah and Ant with higher-dimensional state and sparser rewards).
We additionally analyze effect of SIL in Appendix~D.4. SIL brings improvements to baselines but MCAT still shows obvious advantages.

\begin{table}[!ht]
\centering
\setlength{\tabcolsep}{1.5pt}
\begin{tabular}{c|ccccc}
\toprule
Setting & \makecell{Hopper \\ Size} & \makecell{Half\\Cheetah \\ Armature} &     \makecell{Half\\Cheetah \\ Mass} & \makecell{Ant \\ Damp} & 
\makecell{Ant \\ Cripple} \\ \midrule
MQL & \makecell{1607.5}
& \makecell{-77.9}
& \makecell{-413.9}
& \makecell{103.1}
& \makecell{38.2}
 \\
PEARL & \makecell{1755.8}
& \makecell{-18.8}
& \makecell{25.9}
& \makecell{73.2}
& \makecell{3.5}
\\
Distral & \makecell{1319.8}
& \makecell{566.9}
& \makecell{-29.5}
& \makecell{90.5}
& \makecell{-0.1}
\\
HiP-BMDP & \makecell{1368.3}
& \makecell{-102.4}
& \makecell{-74.8}
& \makecell{33.1}
& \makecell{7.3}
\\
MCAT(Ours) & \makecell{\textbf{1914.8}}
& \makecell{\textbf{2071.5}}
& \makecell{\textbf{1771.1}}
& \makecell{\textbf{624.6}}
& \makecell{\textbf{281.6}}
\\
\bottomrule
\end{tabular}
\caption{Test rewards at 2M timesteps, averaged over 3 runs.}
\label{tab:ours_baseline}
\end{table}

\subsection{Ablative Study}
\label{sec:ablation}
\textbf{Effect of Policy Transfer} Our MCAT is implemented by combining context-based TD3, self-imitation learning, and policy transfer (PT).
We investigate the effect of policy transfer. In Tab.~\ref{tab:effect_pt}.
MCAT significantly outperforms MCAT w/o PT, because PT facilitates more balanced performance across training tasks and hence better generalization to test tasks.
This empirically confirms that policy transfer is beneficial in meta-RL on sparse-reward tasks.

\begin{table}[!ht]
\centering
\setlength{\tabcolsep}{1.5pt}
\begin{tabular}{c|ccccc}
\toprule
Setting & \makecell{Hopper \\ Size} & \makecell{Half\\Cheetah \\ Armature} &     \makecell{Half\\Cheetah \\ Mass} & \makecell{Ant \\ Damp} & 
\makecell{Ant \\ Cripple} \\ \midrule
MCAT w/o PT & \makecell{1497.5}
& \makecell{579.1}
& \makecell{-364.3}
& \makecell{187.7}
&
\makecell{92.4}
 \\
MCAT & \makecell{1982.1}
& \makecell{1776.8}
& \makecell{67.1}
& \makecell{211.8}
& \makecell{155.7}
\\
Improve(\%) & 32.3  & 206.8 & 118.4 & 12.8 & 68.5 \\ \bottomrule
\end{tabular}
\caption{Test rewards at 1M timesteps. We report improvements brought by policy transfer (PT).}
\label{tab:effect_pt}
\end{table}

\noindent \textbf{Sparser Rewards}
We analyze MCAT when rewards are delayed for different numbers of steps (Tab.~\ref{tab:more_sparse}).
When rewards are relatively dense (i.e. delay step is 200), during training, the learned policy can reach a high score on each task without the issue of imbalanced performance among multiple tasks.
MCAT w/o PT and MCAT perform comparably well within the standard error.
However, as the rewards become sparser, it requires longer sequences of correct actions to obtain potentially high rewards.
Policy learning struggles on some tasks and policy transfer plays an important role to exploit the precious good experiences on source tasks.
Policy transfer brings more improvement on sparser-reward tasks.

\begin{table}[!ht]
\centering
\setlength{\tabcolsep}{1.5pt}
\begin{tabular}{c|ccc|ccc}
\toprule
\multicolumn{1}{c|}{Setting}     & \multicolumn{3}{c|}{Armature}                                                  & \multicolumn{3}{c}{Mass}            \\ 
\midrule
\multicolumn{1}{c|}{Delay steps} & \multicolumn{1}{c}{200} & \multicolumn{1}{c}{350} & \multicolumn{1}{c|}{500} & \multicolumn{1}{c}{200} &
\multicolumn{1}{c}{350} &
\multicolumn{1}{c}{500} \\ 
\midrule
MCAT w/o PT &  \makecell{2583.2}
&  \makecell{1771.7}
&  \makecell{579.1}
&  \makecell{709.6}
&  \makecell{156.6}
&  \makecell{-364.2}
\\ 
MCAT &\makecell{2251.8}
&\makecell{2004.5}
&\makecell{1776.8}
&\makecell{666.7}
&\makecell{247.8}
&\makecell{67.1}
\\ 
Improve(\%) &  -12.8 & 13.1 & 206.9 & -6.1 &  58.2 & 118.4 \\ 
\bottomrule
\end{tabular}
\caption{Test rewards at 1M timestpes averaged over 3 runs, on HalfCheetah with \textit{armature} / \textit{mass} changing across tasks.}
\label{tab:more_sparse}
\end{table}

 In Appendix, we further provide ablative study about More Diverse Tasks (D.3), Effect of SIL (D.4) and Effect of Contrastive Loss (D.5). Appendix~D.6 shows that trivially combining the complex action translator \citep{zhang2020learning} with context-based meta-RL underperforms MCAT.

\section{Discussion}
\label{sec:discussion}
The scope of MCAT is for tasks with varying dynamics, same as many prior works \citep{yu2017preparing,nagabandi2018learning,nagabandi2018deep,zhou2019environment}. our theory and method of policy transfer can be extended to more general cases (1) tasks with varying reward functions (2) tasks with varying state \& action spaces.

Following the idea in Sec.~\ref{sec:theory}, on two general MDPs, we are interested in equivalent state-action pairs achieving the same reward and transiting to equivalent next states.
Similar to Proposition~\ref{cor:bound_policy}, we can prove that, on two general MDPs, for two correspondent states $\smash{s^{(i)}}$ and $\smash{s^{(j)}}$, the value difference $\smash{|V^{\pi^{(i)}}(s^{(i)}, \mathcal{T}^{(i)}) - V^{\pi^{(j)}}(s^{(j)}, \mathcal{T}^{(j)})|}$ is upper bounded by $\frac{d}{1-\gamma}$, where $d$ depends on $\smash{D_{TV}}$ between the next state distribution on source task and the probability distribution of correspondent next state on target task.
As an extension, we learn a state translator jointly with our action translator to capture state and action correspondence.
Compared with \citet{zhang2020learning} learning both state and action translator, we simplify the objective function training action translator and afford the theoretical foundation.
For (1) tasks with varying reward functions, we conduct experiments on MetaWorld moving the robot arm to a goal location. The reward at each step is inversely proportional to its distance from the goal location. 
We fix a goal location 
on source task and set target tasks with distinct goal locations. 
Furthermore, we evaluate our approach on 2-leg and 3-leg HalfCheetah.
We can test our idea on (2) tasks with varying state and action spaces of different dimensions because the agents have different numbers of joints on the source and target task.
Experiments
demonstrate that ours with a simpler objective function than the baseline \citep{zhang2020learning} can transfer the source policy to perform well on the target task.
Details of theorems, proofs, and experiments are in Appendix~E.

\section{Conclusion}
Meta-RL with long-horizon, sparse-reward tasks is challenging because an agent can rarely obtain positive rewards, and handling multiple tasks simultaneously requires massive samples from distinctive tasks. We propose a simple yet effective objective function to learn an action translator for multiple tasks and provide the theoretical ground. We develop a novel algorithm MCAT using the action translator for policy transfer to improve the performance of off-policy, context-based meta-RL algorithms. We empirically show its efficacy in various environments and verify that our policy transfer can offer substantial gains in sample complexity.

\clearpage
\section*{Acknowledgements}
This work was supported in part by NSF CAREER IIS-1453651, NSF SES 2128623, and LG AI Research. Any opinions, findings, conclusions or recommendations expressed here are those of the authors and do not necessarily reflect views of the sponsor.

\bibliography{references}

\clearpage
\setcounter{secnumdepth}{2}
\onecolumn
{\LARGE \textsl{Appendix:}}

\appendix

\section{Bound Value Difference in Policy Transfer}
\label{app:proof}
In this section, we provide detailed theoretical ground for our policy transfer approach, as a supplement to Sec.~\ref{sec:theory}. 
%
We first define a binary relation for actions to describe the correspondent actions behaving equivalently on two MDPs (Definition~\ref{def:binary_relation}). 
%
Building upon the notion of action equivalence, we derive the upper bound of value difference between policies on two MDPs (Theorem~\ref{thm:bound_policy}). 
Finally, we reach a proposition
for the upper bound of value difference
(Proposition~\ref{cor:bound_policy}) to explain that minimizing our objective function results in bounding the value difference between the source and transferred policy.

\begin{restatable}{definition}{action_equivalence}
\label{def:binary_relation}
Given two MDPs $\smash{\mathcal{T}^{(i)}=\{\mathcal{S}, \mathcal{A}, p^{(i)}, r^{(i)}, \gamma, \rho_0\}}$ and  $\smash{\mathcal{T}^{(j)}=\{\mathcal{S}, \mathcal{A}, p^{(j)}, r^{(j)}, \gamma, \rho_0\}}$ with the same state space and action space, for each state $\smash{s \in \mathcal{S}}$,
we define a binary relation $\smash{B_s \in \mathcal{A} \times \mathcal{A}}$ called \textbf{action equivalence relation}.
For any action $a^{(i)}\in \mathcal{A}$, $a^{(j)}\in \mathcal{A}$, if $(a^{(i)}, a^{(j)}) \in B_s$ (i.e. $a^{(i)} B_s a^{(j)}$), the following conditions hold:
\begin{equation}
    r^{(i)}(s, a^{(i)}) = r^{(j)}(s, a^{(j)}) \text{ and } p^{(i)}(\cdot|s, a^{(i)}) = p^{(j)}(\cdot|s, a^{(j)})
\end{equation}

\end{restatable}

Based on Definition~\ref{def:binary_relation}, at state $s$, action $a^{(i)}$ on $\smash{\mathcal{T}^{(i)}}$ is equivalent to action $a^{(j)}$ on $\smash{\mathcal{T}^{(j)}}$ if $a^{(i)} B_s a^{(j)}$. Note that the binary relation $B_s$ is defined for each $s$ separately. The action equivalence relation might change on varied states.
On two MDPs with the same dynamic and reward functions, it is trivial to get the equivalent action with identity mapping. However, we are interested in more complex cases where the reward and dynamic functions are not identical on two MDPs.

Ideally, the equivalent action always exists on the target MDP $\smash{\mathcal{T}^{(i)}}$ for any state-action pair on the source MDP $\smash{\mathcal{T}^{(j)}}$ and there exists an action translator function $\smash{H:\mathcal{S}\times \mathcal{A} \rightarrow \mathcal{A}}$ to identify the exact equivalent action.
Starting from state $s$, the translated action $\smash{\tilde{a}=H(s, a)}$ on the task $\smash{\mathcal{T}^{(i)}}$ generates reward and next state distribution the same as action $a$ on the task $\smash{\mathcal{T}^{(j)}}$ (i.e. $\smash{\tilde{a} B_s a}$).
Then any deterministic policy $\smash{\pi^{(j)}}$ on the source task $\smash{\mathcal{T}^{(j)}}$ can be perfectly transferred to the target task $\smash{\mathcal{T}^{(i)}}$ with $\smash{\pi^{(i)}(s) = H(s, \pi^{(j)}(s))}$.
The value of the policy $\smash{\pi^{(j)}}$ on the source task $\smash{\mathcal{T}^{(j)}}$ is equal to the value of transferred policy $\smash{\pi^{(i)}}$ on the target task $\smash{\mathcal{T}^{(i)}}$.

Without the assumption of existence of a perfect correspondence for each action, given any two deterministic policies $\smash{\pi^{(j)}}$ and $\smash{\pi^{(i)}}$, we prove that the difference in the policy value is upper bounded by a scalar $\frac{d}{1-\gamma}$ depending on L1-distance between reward functions $\smash{|r^{(j)}(s, \pi^{(j)}(s)) - r^{(i)}(s, \pi^{(i)}(s))|}$ and total-variation distance between next state distributions $\smash{D_{TV}(p^{(j)}(\cdot|s,\pi^{(j)}(s)), p^{(i)}(\cdot|s,\pi^{(i)}(s)))}$.

\begin{restatable}{thm}{boundpolicytransfer}
\label{thm:bound_policy}
Let $\mathcal{T}^{(i)}=\{\mathcal{S}, \mathcal{A}, p^{(i)}, r^{(i)}, \gamma, \rho_0\}$ and  $\mathcal{T}^{(j)}=\{\mathcal{S}, \mathcal{A}, p^{(j)}, r^{(j)}, \gamma, \rho_0\}$ be two MDPs sampled from the distribution of tasks $p(\mathcal{T})$.
$\pi^{(i)}$ is a deterministic policy on $\mathcal{T}^{(i)}$ and $\pi^{(j)}$ is a deterministic policy on $\mathcal{T}^{(j)}$.
Let $M=\sup_{s\in \mathcal{S}} |V^{\pi^{(j)}}(s, \mathcal{T}^{(j)})|$, $d = \sup_{s\in\mathcal{S}} \left[|r^{(j)}(s, \pi^{(j)}(s)) - r^{(i)}(s, \pi^{(i)}(s))|+2\gamma M D_{TV}(p^{(j)}(\cdot|s,\pi^{(j)}(s)), p^{(i)}(\cdot|s,\pi^{(i)}(s)))\right]$. For $\forall s \in \mathcal{S}$, we have
\begin{equation}
    \left|V^{\pi^{(i)}}(s,\mathcal{T}^{(i)}) - V^{\pi^{(j)}}(s, \mathcal{T}^{(j)})\right| \leq \frac{d}{1-\gamma}
\end{equation}
\end{restatable}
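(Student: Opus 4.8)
The plan is to work directly from the Bellman fixed-point characterization of each value function and to derive a self-referential bound on the pointwise gap $\Delta(s) = V^{\pi^{(i)}}(s,\mathcal{T}^{(i)}) - V^{\pi^{(j)}}(s,\mathcal{T}^{(j)})$. First I would write the Bellman equation for each deterministic policy on its own MDP,
\begin{equation*}
V^{\pi^{(i)}}(s,\mathcal{T}^{(i)}) = r^{(i)}(s,\pi^{(i)}(s)) + \gamma \sum_{s'} p^{(i)}(s'|s,\pi^{(i)}(s))\, V^{\pi^{(i)}}(s',\mathcal{T}^{(i)}),
\end{equation*}
together with the analogous identity for $\pi^{(j)}$ on $\mathcal{T}^{(j)}$, and subtract the two to get an exact expression for $\Delta(s)$ in terms of a reward-difference term and a difference of expected next-state values.

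The crux is a telescoping decomposition of the transition term. I would add and subtract the cross term $\sum_{s'} p^{(i)}(s'|s,\pi^{(i)}(s))\, V^{\pi^{(j)}}(s',\mathcal{T}^{(j)})$, which splits the difference into (a) the expectation of $\Delta(s')$ under the target kernel $p^{(i)}$, and (b) a mismatch term $\sum_{s'}\bigl(p^{(i)}-p^{(j)}\bigr)V^{\pi^{(j)}}$ that pairs the kernel difference against the fixed, bounded function $V^{\pi^{(j)}}(\cdot,\mathcal{T}^{(j)})$. Applying the triangle inequality and bounding (b) via $\lvert V^{\pi^{(j)}}\rvert \le M$ together with the elementary inequality $\bigl\lvert\sum_{s'}(p(s')-q(s'))f(s')\bigr\rvert \le 2M\,D_{TV}(p,q)$ valid for every $\lVert f\rVert_\infty \le M$ (the factor $2$ coming precisely from the $\tfrac{1}{2}$ normalization in $D_{TV}$) yields the recursive inequality
\begin{equation*}
\lvert\Delta(s)\rvert \le d + \gamma \sum_{s'} p^{(i)}(s'|s,\pi^{(i)}(s))\, \lvert\Delta(s')\rvert,
\end{equation*}
since the reward gap and the kernel-mismatch term are each dominated by their suprema, whose sum is exactly $d$.

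To close the argument I would note that the right-hand side is bounded above by $d + \gamma\lVert\Delta\rVert_\infty$ because $p^{(i)}$ is a probability distribution; taking the supremum over $s$ then gives $\lVert\Delta\rVert_\infty \le d + \gamma\lVert\Delta\rVert_\infty$, and rearranging produces $\lVert\Delta\rVert_\infty \le d/(1-\gamma)$, which is the claimed bound at every $s$. Equivalently, I could iterate the recursion to obtain the geometric series $\lvert\Delta(s)\rvert \le d\sum_{k\ge 0}\gamma^k = d/(1-\gamma)$, with the residual $\gamma^n\,\mathbb{E}\bigl[\lvert\Delta(s_n)\rvert\bigr]$ vanishing as $n\to\infty$. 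I expect the only delicate point to be justifying that $\lVert\Delta\rVert_\infty$ is finite, so that subtracting $\gamma\lVert\Delta\rVert_\infty$ from both sides is legitimate; this is immediate once rewards are bounded by some $R_{\max}$ (forcing both values into $[-R_{\max}/(1-\gamma),\,R_{\max}/(1-\gamma)]$), and the finiteness of $M$ assumed in the statement already presupposes it. The remaining work is purely the total-variation bookkeeping, which is routine.
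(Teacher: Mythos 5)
Your proposal is correct and takes essentially the same route as the paper's proof: the same Bellman subtraction, the identical add-and-subtract of the cross term $\sum_{s'} p^{(i)}(s'|s,\pi^{(i)}(s))\, V^{\pi^{(j)}}(s',\mathcal{T}^{(j)})$ so that the kernel mismatch is paired against the bounded source value function (giving the $2\gamma M D_{TV}$ term), and the same recursion $\lvert\Delta(s)\rvert \le d + \gamma \sup_{s'}\lvert\Delta(s')\rvert$ closed by the geometric series, exactly as the paper does ``by induction.'' Your added care about the finiteness of $\lVert\Delta\rVert_\infty$ before rearranging is a minor refinement the paper leaves implicit, not a different argument.
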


\begin{proof}
Let $a^{(i)} = \pi^{(i)}(s)$ and $a^{(j)} = \pi^{(j)}(s)$. $s'$ denotes the next state following state $s$. $s''$ denotes the next state following $s'$.

We rewrite the value difference as:

\begin{eqnarray*}
V^{\pi^{(j)}}(s, \mathcal{T}^{(j)}) - V^{\pi^{(i)}}(s, \mathcal{T}^{(i)}) &=& r^{(j)}(s, a^{(j)})+ \gamma \sum_{s'}p^{(j)}(s'|s, a^{(j)}) V^{\pi^{(j)}}(s', \mathcal{T}^{(j)}) \\ \nonumber
&-& r^{(i)}(s, a^{(i)})  - \gamma  \sum_{s'}p^{(i)}(s'|s, a^{(i)}) V^{\pi^{(i)}}(s',\mathcal{T}^{(i)}) \\ \nonumber
&=& (r^{(j)}(s, a^{(j)})- r^{(i)}(s, a^{(i)})) \\ \nonumber
&+& \gamma \left[ \sum_{s'}p^{(j)}(s'|s, a^{(j)}) V^{\pi^{(j)}}(s', \mathcal{T}^{(j)}) - \sum_{s'}p^{(i)}(s'|s, a^{(i)}) V^{\pi^{(i)}}(s', \mathcal{T}^{(i)}) \right] \\ \nonumber
&&\text{\hspace{1in} {\color{blue} *minus and plus $\gamma \sum_{s'}p^{(i)}(s'|s, a^{(i)})V^{\pi^{(j)}}(s', \mathcal{T}^{(j)})$}} \\ \nonumber
&=& (r^{(j)}(s, a^{(j)})- r^{(i)}(s, a^{(i)})) \\ \nonumber
&+& \gamma \sum_{s'}\left[p^{(j)}(s'|s, a^{(j)})-p^{(i)}(s'|s, a^{(i)})\right] V^{\pi^{(j)}}(s', \mathcal{T}^{(j)}) \\ \nonumber
&+& \gamma \sum_{s'}p^{(i)}(s'|s, a^{(i)})\left[V^{\pi^{(j)}}(s', \mathcal{T}^{(j)}) -V^{\pi^{(i)}}(s', \mathcal{T}^{(i)})\right]
\end{eqnarray*}

\clearpage
Then we consider the absolute value of the value difference:

\begin{eqnarray*}
\left|V^{\pi^{(j)}}(s, \mathcal{T}^{(j)}) - V^{\pi^{(i)}}(s, \mathcal{T}^{(i)})\right|
&\leq & \left|r^{(j)}(s, a^{(j)})- r^{(i)}(s, a^{(i)})\right| \\ \nonumber
&+& \gamma \left|\sum_{s'}\left[p^{(j)}(s'|s, a^{(j)})-p^{(i)}(s'|s, a^{(i)})\right] V^{\pi^{(j)}}(s', \mathcal{T}^{(j)})\right| \\ \nonumber
&+&\gamma \left|\sum_{s'}p^{(i)}(s'|s, a^{(i)})\left[V^{\pi^{(j)}}(s', \mathcal{T}^{(j)}) -V^{\pi^{(i)}}(s',\mathcal{T}^{(i)}) \right]\right| \\ \nonumber
&&\text{ {\color{blue} *property of total variation distance when the set is countable}} \\ \nonumber
&=& \left|r^{(j)}(s, a^{(j)})- r^{(i)}(s, a^{(i)})\right| \\ \nonumber
&+&  2\gamma M D_{TV}(p^{(j)}(\cdot|s, a^{(j)}),p^{(i)}(\cdot|s, a^{(i)}) ) \\ \nonumber
&+&\gamma \left|\sum_{s'}p^{(i)}(s'|s, a^{(i)})\left[V^{\pi^{(j)}}(s',\mathcal{T}^{(j)}) -V^{\pi^{(i)}}(s', \mathcal{T}^{(i)}) \right]\right| \\ \nonumber
&\leq& d + \gamma \left|\sum_{s'}p^{(i)}(s'|s, a^{(i)})\left[V^{\pi^{(j)}}(s', \mathcal{T}^{(j)}) -V^{\pi^{(i)}}(s', \mathcal{T}^{(i)}) \right]\right|\\ \nonumber
&\leq& d + \gamma \sup_{s'} \left|V^{\pi^{(j)}}(s', \mathcal{T}^{(j)})-V^{\pi^{(i)}}(s', \mathcal{T}^{(i)})\right|\\ \nonumber
&&\text{ {\hspace{2.6in}\color{blue} *by induction}} \\ \nonumber
&\leq& d + \gamma \left[d + \gamma\sup_{s''}\left|V^{\pi^{(j)}}(s'',\mathcal{T}^{(j)})-V^{\pi^{(i)}}(s'', \mathcal{T}^{(i)})\right| \right]\\ \nonumber
&\leq& d+\gamma d +\gamma^2 \sup_{s''}\left|V^{\pi^{(j)}}(s'', \mathcal{T}^{(j)})-V^{\pi^{(i)}}(s'', \mathcal{T}^{(i)})\right| \\ \nonumber
&\leq& \cdots \\ \nonumber
&\leq& d +\gamma d + \gamma^2 d + \gamma^3 d + \cdots 
= \frac{d}{1-\gamma} \\ \nonumber
\end{eqnarray*}
\end{proof}

For a special case where reward function $\smash{r(s,a,s')}$ only depends on the current state $s$ and next state $s'$, the upper bound of policy value difference is only related to the distance in next state distributions.
\boundpolicytransfertv*

\begin{proof}
Let $a^{(i)} = \pi^{(i)}(s)$ and $a^{(j)} = \pi^{(j)}(s)$. $s'$ denotes the next state following state $s$. $s''$ denotes the next state following $s'$.

In the special case of $r^{(i)}(s, a^{(i)}, s')= r(s, s')$, the value of policy can be written as:

\begin{eqnarray*}
V^{\pi^{(i)}}(s, \mathcal{T}^{(i)}) &=& r^{(i)}(s, a^{(i)}) + \gamma \sum_{s'}p^{(i)}(s'|s, a^{(i)}) V^{\pi^{(i)}}(s',\mathcal{T}^{(i)}) \\ \nonumber
&=& \sum_{s'}p^{(i)}(s'|s, a^{(i)}) r(s,s') + \gamma \sum_{s'}p^{(i)}(s'|s, a^{(i)}) V^{\pi^{(i)}}(s',\mathcal{T}^{(i)}) \\ \nonumber
&=& \sum_{s'}p^{(i)}(s'|s, a^{(i)}) \left[r(s, s') + \gamma V^{\pi^{(i)}}(s', \mathcal{T}^{(i)})\right] 
\end{eqnarray*}

\clearpage
We can derive the value difference:

\begin{eqnarray*}
V^{\pi^{(j)}}(s, \mathcal{T}^{(j)}) - V^{\pi^{(i)}}(s, \mathcal{T}^{(i)}) &=&  \sum_{s'}p^{(j)}(s'|s, a^{(j)})\left[r(s, s')+\gamma V^{\pi^{(j)}}(s', \mathcal{T}^{(j)})\right] - \sum_{s'}p^{(i)}(s'|s, a^{(i)}) \left[r(s, s') + \gamma V^{\pi^{(i)}}(s', \mathcal{T}^{(i)})\right] \\ \nonumber
&&\text{ {\hspace{1.5in}\color{blue} *minus and plus $ \sum_{s'}p^{(i)}(s'|s, a^{(i)})\left[r(s, s')+\gamma V^{\pi^{(j)}}(s', \mathcal{T}^{(j)})\right]$}} \\ \nonumber
&&\text{ {\hspace{1.5in}\color{blue} **combine the first two terms, combine the last two terms}} \\ \nonumber
&=& \sum_{s'}\left[p^{(j)}(s'|s, a^{(j)}) - p^{(i)}(s'|s, a^{(i)})\right] \left[r(s, s')+\gamma V^{\pi^{(j)}}(s', \mathcal{T}^{(j)})\right] \\ \nonumber
&+& \gamma\sum_{s'}p^{(i)}(s'|s, a^{(i)})\left[V^{\pi^{(j)}}(s', \mathcal{T}^{(j)}) -V^{\pi^{(i)}}(s', \mathcal{T}^{(i)}) \right]  \\ \nonumber
\end{eqnarray*}

Then we take absolute value of the value difference:

\begin{eqnarray*}
\left| V^{\pi^{(j)}}(s, \mathcal{T}^{(j)}) - V^{\pi^{(i)}}(s, \mathcal{T}^{(i)})\right|
&\leq &   2 M D_{TV}(p^{(j)}(\cdot|s, a^{(j)}),p^{(i)}(\cdot|s, a^{(i)}) ) \\ \nonumber
&+& \gamma \left|\sum_{s'}p^{(i)}(s'|s, a^{(i)})\left[V^{\pi^{(j)}}(s', \mathcal{T}^{(j)}) -V^{\pi^{(i)}}(s', \mathcal{T}^{(i)}) \right]\right| \\ \nonumber
&\leq& d + \gamma \left|\sum_{s'}p^{(i)}(s'|s, a^{(i)})\left[V^{\pi^{(j)}}(s', \mathcal{T}^{(j)}) -V^{\pi^{(i)}}(s', \mathcal{T}^{(i)}) \right]\right| \\ \nonumber
&\leq& d + \gamma \sup_{s'} \left|V^{\pi^{(j)}}(s', \mathcal{T}^{(j)})-V^{\pi^{(i)}}(s', \mathcal{T}^{(i)})\right|\\ \nonumber
&\leq& d + \gamma \left[d + \gamma\sup_{s''}\left|V^{\pi^{(j)}}(s'',\mathcal{T}^{(j)})-V^{\pi^{(i)}}(s'', \mathcal{T}^{(i)})\right| \right]\\ \nonumber
&\leq& d+\gamma d +\gamma^2 \sup_{s''}\left|V^{\pi^{(j)}}(s'', \mathcal{T}^{(j)})-V^{\pi^{(i)}}(s'', \mathcal{T}^{(i)})\right| \\ \nonumber
&\leq& \cdots \\ \nonumber
&\leq& d +\gamma d + \gamma^2 d + \gamma^3 d + \cdots = \frac{d}{1-\gamma}
\end{eqnarray*}
\end{proof}

\clearpage
\section{Algorithm of MCAT}

\label{app:algo}
\begin{algorithm}
    \caption{MCAT combining context-based meta-RL algorithm with policy transfer}
    \label{alg:mcat}
    \begin{algorithmic}[1]
        \STATE Initialize critic networks $Q_{\theta_1}$, $Q_{\theta_2}$ and actor network $\pi_{\phi}$ with random parameters $\theta_1$, $\theta_2$, $\phi$
        \STATE Initialize target networks $\theta'_1 \gets \theta_1$, $\theta'_2 \gets \theta_2$, $\phi'\gets \phi$
        \STATE Initialize replay buffer $\mathcal{B}=\mathcal{B}^{(1)}\cup \mathcal{B}^{(2)}\cup \cdots \cup \mathcal{B}^{(|\mathcal{T}|)}$ and $\mathcal{B}^{(i)} \gets \emptyset$ for each $i$.
        \STATE Initialize SIL replay buffer $\mathcal{D} \gets \emptyset$
        \STATE Initialize context encoder $C_{\psi_C}$, forward model $F_{\psi_F}$, action translator $H_{\psi_H}$ 
        \STATE Initialize set of trajectory rewards for shared policy on each task in recent timesteps as $R^{(i)}=\emptyset$, set of trajectory rewards for transferred policy from $\mathcal{T}^{(j)}$ to $\mathcal{T}^{(i)}$ in recent timesteps as $R^{(j)\rightarrow(i)}=\emptyset$. $\bar{R}$ denotes average episode rewards in the set.
        \FOR{each iteration}
            \STATE // Collect training samples
            \FOR{ each task $\mathcal{T}^{(i)}$}
                \IF{$R^{(i)}=\emptyset$}
                    \STATE use the shared policy in this episode
                \ELSIF{there exist $j \in {1, 2, \cdots, |\mathcal{T}|}$ such that $R^{(j)\rightarrow(i)}=\emptyset$ and $\bar{R}^{(j)} > \bar{R}^{(i)}$}
                    \STATE use transferred policy from source task $\mathcal{T}^{(j)}$ to target task $\mathcal{T}^{(i)}$ in this episode
                \ELSIF{there exist $j \in {1, 2, \cdots, |\mathcal{T}|}$, such that $j=\arg\max_{j'} \bar{R}^{(j')\rightarrow (i)}$ and $\bar{R}^{(j)\rightarrow (i)}>\bar{R}^{(i)}$}
                    \STATE use transferred policy from source task $\mathcal{T}^{(j)}$ to target task $\mathcal{T}^{(i)}$ in this episode
                \ELSE
                    \STATE use the shared policy in this episode
                \ENDIF
                \FOR{$t=1$ to TaskHorizon}
                    \STATE Get context latent variable $z_t=C_{\psi_C}(\tau_{t, K})$
                    \STATE Select the action $a$ based on the transferred policy or shared policy, take the action with noise
                    $a_t = a+\epsilon$ where
                    $\epsilon \sim \mathcal{N}(0, \sigma)$, observe reward $r_t$ and new state $s_{t+1}$.
                    Update $\mathcal{B}^{(i)}\gets \mathcal{B}^{(i)} \cup \{s_t, a_t, r_t, s_{t+1}, \tau_{t, K}\}$
                \ENDFOR
                \STATE Compute returns $R_t=\sum_{k=t}^\infty \gamma^{k-t}r_k$ and update $\mathcal{D}\gets \mathcal{D} \cup \{s_t, a_t, r_t, s_{t+1}, \tau_{t, K}, R_t\}$ for every step $t$ in this episode. 
                \STATE Update the average reward of shared policy on task $\mathcal{T}^{(i)}$ (i.e. $R^{(i)}$) if we took shared policy in this episode, or update the average reward of the transferred policy from $\mathcal{T}^{(j)}$ to $\mathcal{T}^{(i)}$ (i.e. $R^{(j)\rightarrow(i)}$) if we took the transferred policy.
            \ENDFOR
            \STATE // Update the context encoder $C_{\psi_C}$ and forward model $F_{\psi_F}$ with $\mathcal{L}_{forw}$ and $\mathcal{L}_{cont}$
            \STATE // Update the action translator $H_{\psi_H}$ with $\mathcal{L}_{trans}$
            \STATE // Update the critic network $Q_{\theta_1}$, $Q_{\theta_2}$ and actor network $\pi_{\phi}$ with TD3 and SIL objective function
            \FOR{step in training steps}
                \STATE Update $\theta_1$, $\theta_2$ for the critic networks to minimize $\mathcal{L}_{td3}+\mathcal{L}_{sil}$ (see Algorithm~\ref{alg:td3_critic})
                \STATE Update $\phi$ for the actor network with deterministic policy gradient
                \STATE Update the $\theta'_1$, $\theta'_2$, $\phi'$ for target networks with soft assignment
            \ENDFOR
            \STATE // Update the trajectory reward for shared policy and transferred policy if necessary
            \FOR{each task $\mathcal{T}^{(i)}$}
                \STATE pop out trajectory rewards in $R^{(i)}$ which were stored before the last G timesteps
                \STATE pop out trajectory rewards in $R^{(j)\rightarrow(i)} (\forall j)$ which were stored before the last G timesteps
            \ENDFOR
        \ENDFOR
    \end{algorithmic}
\end{algorithm}

\begin{algorithm}[!h]
\begin{algorithmic}[1]
    \caption{Compute critic loss based on TD3 algorithm and SIL algorithm}
    \label{alg:td3_critic}
    \STATE Sample batch data of transitions $(s_t, a_t, r_t, s_{t+1}, \tau_{t, K})\in\mathcal{B}$
    \STATE Get context variable $z_t=C_{\psi_C}(\tau_{t, K})$. Get next action $a_{t+1}\sim \pi_{\phi'}(z_t, s_{t+1})+\epsilon$, $\epsilon\sim \text{clip}(\mathcal{N}(0, \tilde{\sigma}), -c, c)$
    \STATE Get target value for critic network $y=r_t+\gamma \min_{l=1,2}Q_{\theta'_l}(z_t, s_{t+1}, a_{t+1})$.
    \STATE Compute TD error $\mathcal{L}_{td3}=\min_{l=1,2} (y-Q_{\theta_l}(z_t, s_{t+1}, a_{t+1}))^2$
    \STATE Sample batch data of transitions $(s_t, a_t, \tau_{t, K}, R_t)\in\mathcal{D}$
    \STATE Get context variable $z_t=C_{\psi_C}(\tau_{t, K})$. 
    Compute SIL loss $\mathcal{L}_{sil}= \sum_{l=1,2}\max(R_t-Q_{\theta_l}(z_t, s_t, a_t), 0)^2$
\end{algorithmic}
\end{algorithm}

\clearpage
\section{Experiment Details}
\label{app:experiment}
In this section, we explain more details for Section~\ref{sec:experiment} and show additional experimental results.

\subsection{Environment}
\paragraph{MuJoCo} We use Hopper, HalfCheetah and Ant environments from OpenAI Gym based on the MuJoCo physics engine \citep{todorov2012mujoco}. The goal is to move forward while keeping the control cost minimal. 

\begin{itemize}
\setlength\itemsep{2pt}
\item{\textbf{Hopper}} Hopper agent consists of 5 rigid links with 3 joints. Observation $s_t$ is an 11-dimension vector consisting of root joint's position (except for x-coordinate) and velocity angular position and velocity of all 3 joints.
Action $a_t$ lies in the space $\smash{[-1.0, 1.0]^3}$, which corresponds to the torques applied to 3 joints.
Reward $r_t=v_{\text{torso}, t}-0.001\|a_t\|^2+1.0$ means the forward velocity of the torso $v_{\text{torso}, t}$ minus the control cost for action $0.001\|a_t\|^2$ and plus the survival bonus $1.0$ at each step.
We modify the size of each rigid part to enlarge/contract the body of the agent, so we can create tasks with various dynamics.

\item{\textbf{HalfCheetah}} Half-cheetah agent consists of of 7 rigid links (1 for torso, 3 for forelimb, and 3 for hindlimb), connected
by 6 joints.
State $s_t$ is a 17-dimension vector consisting of root joint's position (except for x-coordinate) and velocity, angular position and velocity of all 6 joints.
Action $a_t$ is sampled from the space $\smash{[-1.0, 1.0]^6}$, representing the torques applied to each of the 6 joints.
Reward $r_t=v_{\text{torso}, t}-0.1\|a_t\|^2$ is the forward velocity of the torso minus the control cost for action.
In order to design multiple tasks with varying dynamics on HalfCheetah, we modify the armature value (similarly to \citep{zhang2020learning}) or scale the mass of each rigid link by a fixed scale factor (similarly to \citep{lee2020context}).

\item{\textbf{Ant}} Ant agent consists of 13 rigid links connected by 8 joints. Observation $s_t$ is a 27-dimension vector including information about the root joint's position and velocity, angular position and velocity of all 8 joints, and frame orientations.
Action $a_t\in[-1.0, 1.0]^8$ is the torques applied to each of 8 joints.
Reward is $r_t=v_{\text{torso}, t}+0.05$, meaning the velocity of moving forward plus the survival bonus $0.05$ for each step.
To change the environment dynamics, we modify the damping of every leg. Specifically, given a scale factor $d$, we
modify two legs to have damping multiplied by $d$, and the other two legs to have damping multiplied by $1/d$ (similarly to \citep{lee2020context}). Alternatively, we can cripple one of the agent's four legs to change the dynamics function. The torques applied to two joints on the crippled leg (i.e. two correspondent elements in actions) are set as 0. (similarly to \citep{seo2020trajectory}). 
\end{itemize}

\paragraph{MetaWorld} Additionally, we consider the tasks of pushing Cylinder, Coffee Mug and Cube.
They are named as push-v2, coffee-push-v2, and sweep-into-goal-v2 on MetaWorld benchmark \citep{yu2020meta} respectively.
The goal is to move the objects from a random initial location to a random goal location.
The observation is of dimension 14, consisting of the location of the robot hand, the distance between two gripper fingers, the location and position of the object, and the target location.
The action $a\in[-1.0, 1.0]^4$ controls the movement of the robot hand and opening/closing of the gripper. The reward is 1.0 when the object is close to the target location (i.e. distance less than 0.05).
Otherwise, the environment reward is 0.0.
The length of an episode is 500 steps.
The tasks of manipulating different objects have different dynamics.
We change the physics parameters armature and damping across tasks to make the policy transfer more challenging.


\subsection{Implementation Details for Policy Transfer with Fixed Dataset \& Source Policy}
\label{app:fixed_details}
In Section~\ref{sec:fixed}, we study the performance of policy transfer with our action translator with a fixed dataset and source policy.
In this experiment, we demonstrate our proposed policy transfer approach trained with fixed datasets and source policy outperforms the baselines. 
We provide the experimental details as follows.

\paragraph{Source Policy and Dataset}
\begin{itemize}
\item{\textbf{MuJoCo}}
On HalfCheetah, the armature value on the source and target task is 0.1 and 0.5 respectively. On Ant, the leg 0 is crippled on the source task while the leg 3 is crippled on the target task.
We train well-performing policies on the source tasks as source policies, and we also train mediocre policies on both source tasks and target tasks to obtain training data.

We apply the TD3 algorithm\citep{fujimoto2018addressing} and dense rewards to learn policies.
The hyperparameters for the TD3 algorithm are listed in Table~\ref{tab:TD3_5_1}. Specifically, during the start 25K timesteps, the TD3 agent collects data by randomly sampling from the action space. After the first 25K timesteps, the agent learns an deterministic policy based on the data collected in the replay buffer.
During training, the agent collects data with actions following the learned policy with Gaussian noise, and updates the replay buffer as well.
On HalfCheetah environment, we use the learned policy at 300K timesteps as good policy, and use the learned policy at 80K timesteps as mediocre policy.
On Ant environment, the learned policy at 400K timesteps and 20K timesteps are used as good policy and mediocre policy respectively.

With the mediocre policies, we collect 100K transition samples on the source and target tasks respectively.
During data collection, at each step, we record the following information: (a) current state; (b) current action drawn from the mediocre policies; (c) next state; (d) historical observations in the past 10 steps; (e) historical actions in the past 10 steps. The historical transition information are employed to learn the context model for forward dynamics prediction.

\clearpage
\item{\textbf{MetaWorld}} On source tasks, we keep the default physics parameters. However, on the target task,the value of armature and damping for the gripper joints is 0.1 multiplying the default. We get the manually designed good policies from official public code\footnote{https://github.com/rlworkgroup/metaworld/tree/master/metaworld/policies}. The performance of the good source policy is shown in Tab.~\ref{tab:app_fixed_dataset}.
By adding Gaussian noise following 
$\mathcal{N}(0, 1.0)$ to action drawn from the good policies, we collect 100K transition samples on the source and target tasks respectively.
\end{itemize}

\begin{table}[!t]
\centering

\setlength{\tabcolsep}{0.5pt}
\begin{tabular}{c|c}
\toprule
\makecell{Parameter name} & \makecell{Value} \\ 
\midrule
Start Timesteps & 2.5e4\\
Gaussian exploration noise $\sigma$ & 0.1\\
Batch Size & 256\\
Discount $\gamma$ & 0.99\\
Target network update rate & 5e-3\\
Policy noise $\tilde{\sigma}$ & 0.2\\
Noise clip $c$ & 0.5\\
Policy update frequency & 2\\
Replay buffer size & 1e6\\
Actor learning rate & 3e-4\\
Critic learning rate & 3e-4\\
Optimizer & Adam\\
Actor layers & 3\\
Hidden dimension & 256\\
\bottomrule
\end{tabular}
\caption{The hyperparameters for TD3 algorithm.}
\label{tab:TD3_5_1}
\end{table}


With the fixed datasets on both source and targe tasks, we can train action translator to transfer the fixed source policy.
First, we learn the forward dynamics model.
Then we learn the action translator based on the well-trained forward dynamics model.
For fair comparison, we train the baseline \citep{zhang2020learning} and our action translator with the same dataset and source policy.
The hyperparameters and network structures applied in the baseline and our approach are introduced as follows

\paragraph{Transferred Policy \citep{zhang2020learning}}
This baseline is implemented using the code provided by \citet{zhang2020learning} \footnote{https://github.com/sjtuzq/Cycle\_Dynamics}.
The forward dynamics model first encodes the state and action as 128-dimensional vectors respectively via a linear layer with ReLU activation.
The state embedding and action embedding is then concatenated to predict the next state with an MLP with 2 hidden layers of 256 units and ReLU activation.
We train the forward dynamics model with batch size 32 and decaying learning rate from 0.001, 0.0003 to 0.0001.
In order to optimize the forward dynamics model, the objective function is L1-loss between the predicted next state and the actual next state.
With these hyper-parameters settings, we train the forward modelFand the context modelCfor30 epochs, each epoch with 10K steps.

The action translator first encodes the state and action as 128-dimensional vectors respectively via a linear layer with ReLU activation.
The state embedding and action embedding are then concatenated to generate the translated action via an MLP with 2 hidden layers of 256 units and ReLU activation.
As for the objective function with three terms: adversarial loss, domain cycle-consistency loss, and dynamics cycle-consistency loss, we tune three weights.
We train the action translator for 30 epochs.
After each epoch, the performance of transferred policy with the action translator is evaluated on the target task.
We average episode rewards in 100 episodes as the epoch performance.
Finally, we report the best epoch performance over the 30 epochs.

\begin{table}[!ht]
\centering

\setlength{\tabcolsep}{4pt}
\begin{tabular}{c|c|ccc}
\toprule
\makecell{Setting}  &
\footnotesize{\makecell{Source policy\\on source task}} & \footnotesize{\makecell{Source policy\\on target task}} & \footnotesize{\makecell{Transferred policy \\ \citep{zhang2020learning} on target task}} & \footnotesize{\makecell{Transferred policy \\(Ours) on target task}} \\ 
\midrule
\makecell{HalfCheetah} & 5121.4 & \makecell{2355.0} & 
\makecell{\textbf{3017.1}\tiny{($\pm$44.2)}} &
\makecell{2937.2\tiny{($\pm$9.5)}} \\ 
\makecell{Ant} & 476.8 &
\makecell{55.8} &
\makecell{97.2\tiny{($\pm$2.5)}} &
\makecell{\textbf{208.1}\tiny{($\pm$8.2)}}  \\ 
\midrule
\makecell{Cylinder-Mug}& 317.3&0.0 & 
\makecell{308.1\tiny{($\pm$75.3)}} &
\makecell{\textbf{395.6}\tiny{($\pm$19.4)}}\\
\makecell{Cylinder-Cube}& 439.7 &0.0 & \makecell{262.4\tiny{($\pm$48.1)}} & \makecell{\textbf{446.1}\tiny{($\pm$1.1)}} \\
\bottomrule
\end{tabular}

\caption{Performance of source and transferred policy on target task. This is expanding Tab.~\ref{tab:fixed_dataset} in the main text.
}
\label{tab:app_fixed_dataset}
\end{table}

\clearpage
\paragraph{Transferred Policy (Ours)} We encode the context features with $K=10$ past transitions. 
The historical state information is postprocessed as state differences between two consecutive states.
The historical transition at one step is concatenation of past 10 actions and past 10 postprocessed states.
The historical transition data are fed into an MLP with 3 hidden layers with [256, 128, 64] hidden units and Swish activation.
The context vector is of dimension 10.  The forward dynamics model is an MLP with 4 hidden layers of 200 hidden units and ReLU activation, predicting the state difference between two consecutive states in the future M=10 steps.
The learning rate is 0.001 and the batch size is 1024.
The objective function is simply $\mathcal{L}_{forw}+\mathcal{L}_{cont}$ (Equation~\ref{eq:forw} and Equation~\ref{eq:cont}).
With these hyper-parameters settings, we train the forward model $F$ and the context model $C$ for 30 epochs, each epoch with 10K steps.

The action translator $H$ first encodes state and action as 128-dimensional vectors respectively.
Then, the state embedding and action embedding is concatenated and fed into an MLP with 3 hidden layers of 256 units and ReLU activations.
We train the action translator with a decaying learning rate from 3e-4, 5e-5 to 1e-5, and the batch size is also 1024. With these hyper-parameters settings, we train the action translator for 30 epochs, each epoch with 3,000 steps.
The objective function is simply $\mathcal{L}_{trans}$ (Equation~\ref{eq:trans}).
After each epoch, the performance of the action translator is also evaluated on the target task via averaging the episode rewards in 100 episodes.
Finally, the best epoch performance over the 30 epochs is reported.

\paragraph{Context-conditioned Action Translator} We also demonstrate the performance of policy transfer on more than two tasks as heatmaps in Fig.~\ref{fig:heatmap_improvement}.
The heatmaps demonstrate performance gain when comparing our transferred policy against the source policy on the target task.
We calculate the improvement in the average episode rewards for every pair of source-target tasks sampled from the training task set.
The tasks in the HalfCheetah environment are $\smash{\mathcal{T}^{(1)} \cdots \mathcal{T}^{(5)}}$ with different armature values, namely \{0.1, 0.2, 0.3, 0.4, 0.5\}.
The tasks in the Ant environment are $\smash{\mathcal{T}^{(1)} \cdots \mathcal{T}^{(4)}}$ with different leg crippled, namely \{0, 1, 2, 3\}.
As mentioned above, we apply the TD3 algorithm\citep{fujimoto2018addressing} and dense rewards to learn source policies and mediocre policies for each task in training set.
Then we collect 100K transition data on each training tasks with the corresponding mediocre policies.

The architecture of context model $C$ and the forward model $F$ remains the same as above, while the learning rate is kept as 5e-4 instead.
The architecture of action translator $H$ is expanded to condition on the source task embeddings and target task embeddings. 
As mentioned in Sec.~\ref{sec:context}, in order to get the representative task feature for any arbitrary training task, we sample 1024 historical transition samples on this task, calculate the their context embedding through context model $C$ and average the 1024 context embedding to get the task feature as an 10-dimensional context vector.
The source target feature and target task feature are then encoded as 128-dimensional vectors respectively via a linear layer with ReLU activation.
Then the state embedding, action embedding, source task embedding and target task embedding are concatenated to produce the translated action via an MLP with 3 linear layers of 256 hidden units and ReLU activation.
The learning rate and batch size for $H$ are 3e-4 and 1024.
With these hyper-parameters settings, we train the action translator with 100 epochs, each with 1,000 steps.
We report the percentage gain comparing well-trained transferred policies with source policies on each pair of source-target tasks.

\subsection{Policy transfer on tasks sharing a general reward function, differing in dynamics}
\label{app:trans_r}
As explained in Sec.~\ref{sec:theory}, many real-world sparse-reward tasks fall under the umbrella of Proposition~\ref{cor:bound_policy}.
Thus, we are mainly interested in policy transfer across tasks with the same reward function $r(s,s')$ but different dynamics.
To solve policy transfer across these tasks, our objective function $\mathcal{L}_{trans}$ can be applied so that the transferred policy achieves a value on the target task similar to the source policy on the source task. 
Experiments in Sec.~\ref{sec:experiment} validate the efficacy of $\mathcal{L}_{trans}$ for learning policy transfer.

As for a more general case, we further consider tasks with different dynamics that have \textbf{the same state space, action space and reward function, where the general reward function $r(s,a,s')$ cannot be expressed as $r(s,s')$}.
Theorem 1 in Appendix~\ref{app:proof} covers this scenario.
For source task $\smash{\mathcal{T}^{(j)}=\{ \mathcal{S}, \mathcal{A}, p^{(j)}, r, \gamma, \rho_0\}}$ and target task $\smash{\mathcal{T}^{(i)}=\{ \mathcal{S}, \mathcal{A}, p^{(i)}, r, \gamma, \rho_0\}}$, we can bound the value difference between source policy $\smash{\pi^{(j)}}$ and transferred policy $\smash{\pi^{(i)}}$ by minimizing both reward difference $\smash{|r(s, \pi^{(i)}(s))-r(s,\pi^{(j)}(s))|}$ and total-variation difference in next state distribution $\smash{D_{TV}(p^{(i)}(\cdot|s, \pi^{(i)}(s)),p^{(j)}(\cdot|s, \pi^{(j)}(s))}$.
Accordingly, we modify transfer loss $\mathcal{L}_{trans}$ (Equation~\ref{eq:trans}) with an additional term of reward difference. 

Formally, $\smash{\mathcal{L}_{trans, r}=|r^{(j)}_t - R(s^{(j)}_t, \tilde{a}^{(i)}, z^{(i)})|- \lambda \log F(s^{(j)}_{t+1}|s^{(j)}_t, \tilde{a}^{(i)}, z^{(i)})}$, where $R$ is a learned reward prediction model,  $\lambda$ is a hyper-parameter weight of next state distribution loss, and $\smash{\tilde{a}^{(i)} = H(s^{(j)}_t, a^{(j)}_t, z^{(j)}, z^{(i)})}$ is the translated action.
This objective function drives the action translator H to find an action on the target task leading to a reward and next state, similarly to the source action on the source task.

As explained in Appendix C.1.1, MuJoco environments award the agent considering its velocity of moving forward $v_{torso}$ and the control cost $||a||^2$, i.e. $r = v_{torso} - c||a||^2$.
If the coefficient $c=0$, we can simplify this reward function as $r(s, s')$ because $v_{torso}$ is calculated only based on the current state $s$ and next state $s'$.
If $c>0$, $r$ becomes a general reward function $r(s, a, s')$.
We evaluate our action translator trained with $\mathcal{L}_{trans}$ and $\mathcal{L}_{trans, r}$ for this general case of reward function.
We search the hyper-parameter value of $\lambda$ in $\mathcal{L}_{trans, r}$ and $\lambda=10$ performs well across settings.

\begin{table}[h!]
\centering
\setlength{\tabcolsep}{2pt}
\begin{tabular}{c|c|cccc}
\toprule
\makecell{Control cost \\ coefficient} & \begin{tabular}[c]{@{}c@{}}Source policy \\ on source task\end{tabular} & \begin{tabular}[c]{@{}c@{}}Source policy\\ on target task\end{tabular} & \begin{tabular}[c]{@{}c@{}}Transferred policy\\ \citep{zhang2020learning} \\ on target task \end{tabular} & \begin{tabular}[c]{@{}c@{}}Transferred policy\\ (ours with $\mathcal{L}_{trans}$) \\on target task\end{tabular} & \begin{tabular}[c]{@{}l@{}}Transferred policy\\ (ours with $\mathcal{L}_{trans,r}$) \\ on target task\end{tabular} \\ \midrule
c=0.001  & 511.1  & 54.7   & 133.27  &  193.7 & \textbf{203.1}  \\
c=0.002  & 488.4  & 53.7   & 129.86  &  179.3 & \textbf{195.4}  \\
c=0.005  & 475.8 & 38.9 & 112.36 & 148.5 & \textbf{171.8} \\ \bottomrule
\end{tabular}
\caption{Average episode rewards on Ant environments. We consider the settings with different coefficients for control cost.}
\end{table}

Our action translator with either $\mathcal{L}_{trans}$ or $\mathcal{L}_{trans, r}$ performs well for policy transfer.
When the rewards depend on the action more heavily (i.e. $c$ becomes larger), the advantage of $\mathcal{L}_{trans, r}$ becomes more obvious.
However, ours with $\mathcal{L}_{trans, r}$ requires the extra complexity of learning a reward prediction model $R$.
When the reward function is mostly determined by the states and can be approximately simplified as $r(s,s')$, we recommend $\mathcal{L}_{trans}$ because it is simpler and achieves a competitive performance.

On Hopper and HalfCheetah, the control cost coefficient is $c>0$ by default. Our proposed policy transfer and MCAT achieve performance superior to the baselines on these environments (Sec.~\ref{sec:experiment}). This verifies the merits of our objective function $\mathcal{L}_{trans}$ on tasks with a general reward function $r(s, a, s')$.

\subsection{Implementation Details for Comparison with Context-based Meta-RL Algorithms}
\label{app:mcat_details}
\subsubsection{Environment}
We modify the physics parameters in the environments to get multiple tasks with varying dynamics functions. We delay the environment rewards to make sparse-reward tasks so that the baseline methods may struggle in these environments. The episode length is set as 1000 steps. The details of the training task set and test task set are shown in Table~\ref{tab:env_physics_modification}.
\begin{table}[!ht]
\centering
\setlength{\tabcolsep}{2pt}
\begin{tabular}{ccccc}
\toprule
Environment                  & Reward Delay Steps        & Physics Parameter & Training Tasks                            & Test Tasks                 \\
\midrule
Hopper                       & 100                  & Size              & \{0.02, 0.03, 0.04, 0.05, 0.06\} & \{0.01, 0.07\}         \\ \hline
\multirow{2}{*}{HalfCheetah} & \multirow{2}{*}{500} & Armature          & \{0.2, 0.3, 0.4, 0.5, 0.6\}      & \{0.05,0.1,0.7,0.75\}  \\
                             &                      & Mass              & \{0.5, 1.0, 1.5, 2.0, 2.5\}      & \{0.2, 0.3, 2.7, 2.8\} \\ \hline
\multirow{2}{*}{Ant}   &       \multirow{2}{*}{500}  & Damping           & \{1.0, 10.0, 20.0, 30.0\}      & \{0.5,35.0\} \\
                             &                      & Crippled Leg      & \{ No crippled leg, crippled leg 0, 1, 2\}                      & \{crippled leg 3\}\\
                             \bottomrule

\end{tabular}
\caption{Modified physics parameters used in the experiments.}
\label{tab:env_physics_modification}
\end{table}

\subsubsection{Implementation Details}
In Section~\ref{sec:ours_combination}, we compare our proposed method with other context-based meta-RL algorithms on environments with sparse rewards.
Below we describe the implementation details of each method.

\paragraph{PEARL\citep{rakelly2019efficient}} We use the implementation provided by the authors\footnote{https://github.com/katerakelly/oyster}. The PEARL agent consists of the context encoder model and the policy model. Following the default setup, the context encoder model is an MLP encoder with 3 hidden layers of 200 units each and ReLU activation. We model the policy as Gaussian, where the mean and log variance is also parameterized by MLP with 3 hidden layers of 300 units and ReLU activation. Same to the default setting, the log variance is clamped to [-2, 20]. We mostly use the default hyper-parameters and search the dimension of the context vector in $\{5, 10, 20\}$. We report the performance of the best hyper-parameter, which achieves highest average score on training tasks. 

\paragraph{MQL\citep{fakoor2019meta}} We use the implementation provided by the authors\footnote{https://github.com/amazon-research/meta-q-learning}. The context encoder is a Gated Recurrent Unit model compressing the information in recent historical transitions. The actor network conditioning on the context features is an MLP with 2 hidden layers of 300 units each and a ReLU activation function. The critic network is of the same architecture as the actor network. We search the hyper-parameters: learning rate in $\{0.0003, 0.0005, 0.001\}$, history length in $\{10, 20\}$, GRU hidden units in $\{20, 30\}$, TD3 policy noise in $\{0.1, 0.2\}$, TD3 exploration noise in $\{0.1, 0.2\}$. We report the performance of the best set of hyper-parameters, which achieves highest average score on training tasks.

\paragraph{Distral\citep{teh2017distral}} We use the implementation in the MTRL repository\footnote{https://github.com/facebookresearch/mtrl}. The Distral framework consists of a central policy and several task-specific policies. The actor network of the central policy is an MLP with 3 hidden layers of 400 units each and a ReLU activation function. The actor and critic networks of the task-specific policies are of the same architecture as the actor network of the central policy. As for the hyperparameters, we set $\alpha$ to 0.5 and search $\beta$ in $\{1, 10, 100\}$, where $\smash{\frac{\alpha}{\beta}}$ controls the divergence between central policy and task-specific policies, and $\smash{\frac{1}{\beta}}$ controls the entropy of task-specific policies. When optimizing the actor and critic networks, the learning rates are 1e-3. We report the performance of the best hyper-parameter, which achieves highest average score on training tasks.

\paragraph{HiP-BMDP\citep{zhang2020robust}} We use the implementation in the MTRL repository (same as the Distral baseline above). The actor and critic networks are also the same as the ones in Distral above. When optimizing the actor and critic network, the learning rates for both of them are at 1e-3. The log variance of the policy is bound to [-20, 2]. We search the $\Theta$ learning error weight $\alpha_{\psi}$ in $\{0.01, 0.1, 1\}$, which scales their task bisimulation metric loss. We report the performance of the best hyper-parameter, which achieves highest average score on training tasks.

\paragraph{MCAT (Ours)} The architectures of the context model $C$, forward dynamics model $F$ and the action translator $H$ are the same as introduced in Appendix~\ref{app:fixed_details}.  
The actor network and critic network are both MLPs with 2 hidden layers of 256 units and ReLU activations. 
As described in Algorithm~\ref{alg:mcat}, at each iteration, we collect 5K transition data from training tasks.
Then we train the context model $C$ and forward dynamics model $F$ for 10K training steps.
We train the action translator $H$ for 1K training steps.
The actor and critic networks are updated for 5K training steps.
In order to monitor the performance of transferred and learned policy in recent episodes, we clear the information about episode reward in $R^{(i)}$ and $R^{(j)\rightarrow (i)}$ before the last $G=20000$ steps. 

The learning rate and batch size of training $C$, $F$ and $H$ are the same as introduced in ``Context-conditioned Action Translator" in Appendix~\ref{app:fixed_details}.
The hyper-parameters of learning the actor and critic are the same as listed in Table~\ref{tab:TD3_5_1}.
Besides, we adapt the official implementation\footnote{https://github.com/junhyukoh/self-imitation-learning} to maintain SIL replay buffer with their default hyper-parameters on MuJoCo environments. 

Even though there are a couple of components, they are trained alternatively not jointly.
The dynamics model is learned with $\mathcal{L}_{forw}$ to accurately predict the next state.
The learned context embeddings for different tasks can separate well due to the regularization term $\mathcal{L}_{const}$.
With the fixed context encoder and dynamics model, the action translator can be optimized.
Then, with the fixed context encoder, the context-conditioned policy learns good behavior from data collected by the transferred policy.
These components are not moving simultaneously and this fact facilitates the learning process.
To run our approach on MuJoCo environments, for each job, we need to use one GPU card (NVIDIA GeForce GTX TITAN X) for around 4 days.
Fig.~\ref{fig:app_ours_baseline} and Tab.~\ref{tab:app_ours_baseline} show the performance of our approach and baselines on various environments. 

\begin{table}[!ht]
\centering

\setlength{\tabcolsep}{2pt}
\begin{tabular}{c|ccccc}
\toprule
Setting & \makecell{Hopper \\ Size} & \makecell{HalfCheetah \\ Armature} &     \makecell{HalfCheetah \\ Mass} & \makecell{Ant \\ Damping} & 
\makecell{Ant \\ Cripple} \\ \midrule
MQL & \makecell{1607.5\\[-3pt]\tiny{($\pm$327.5)}} & \makecell{-77.9\\[-3pt]\tiny{($\pm$214.2)}} & \makecell{-413.9\\[-3pt]\tiny{($\pm$11.0)}} & \makecell{103.1\\[-3pt]\tiny{($\pm$35.7)}} & \makecell{38.2\\[-3pt]\tiny{($\pm$4.0)}}
 \\
PEARL & \makecell{1755.8\\[-3pt]\tiny{($\pm$115.3)}} & \makecell{-18.8\\[-3pt]\tiny{($\pm$69.3)}} & \makecell{25.9\\[-3pt]\tiny{($\pm$69.2)}} & \makecell{73.2\\[-3pt]\tiny{($\pm$13.3)}} & \makecell{3.5\\[-3pt]\tiny{($\pm$2.4)}} 
\\
Distral & \makecell{1319.8\\[-3pt]\tiny{($\pm$162.2)}} & \makecell{566.9\\[-3pt]\tiny{($\pm$246.7)}} & \makecell{-29.5\\[-3pt]\tiny{($\pm$3.0)}} & \makecell{90.5\\[-3pt]\tiny{($\pm$28.4)}} & \makecell{-0.1\\[-3pt]\tiny{($\pm$0.7)}}
\\
HiP-BMDP & \makecell{1368.3\\[-3pt]\tiny{($\pm$150.7)}} & \makecell{-102.4\\[-3pt]\tiny{($\pm$24.9)}} & \makecell{-74.8\\[-3pt]\tiny{($\pm$35.4)}} & \makecell{33.1\\[-3pt]\tiny{($\pm$6.0)}} & \makecell{7.3\\[-3pt]\tiny{($\pm$2.6)}} 
\\
MCAT(Ours) & \makecell{\textbf{1914.8}\\[-3pt]\tiny{($\pm$373.2)}} & \makecell{\textbf{2071.5}\\[-3pt]\tiny{($\pm$447.4)}} & \makecell{\textbf{1771.1}\\[-3pt]\tiny{($\pm$617.7)}} & \makecell{\textbf{624.6}\\[-3pt]\tiny{($\pm$218.8)}} & \makecell{\textbf{281.6}\\[-3pt]\tiny{($\pm$65.6)}} 
\\
\bottomrule
\end{tabular}

\caption{Test rewards at 2M timesteps, averaged over 3 runs. This corrects the last column of Tab.~\ref{tab:ours_baseline} in the main text.}
\label{tab:app_ours_baseline}
\end{table}

\begin{figure*}[!ht]
\centering
\begin{minipage}{0.18\textwidth}
    \centering
    \includegraphics[width=\linewidth]{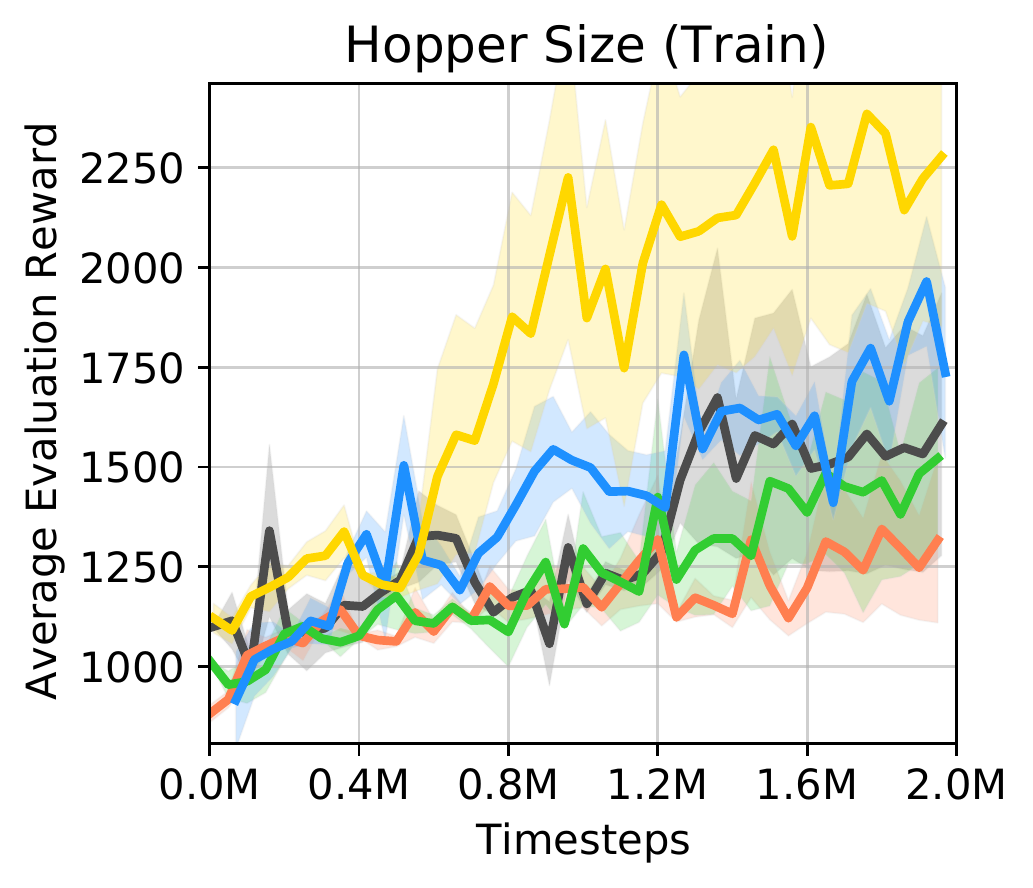}
\end{minipage}%
\hfill
\begin{minipage}{0.18\textwidth}
    \includegraphics[width=\linewidth]{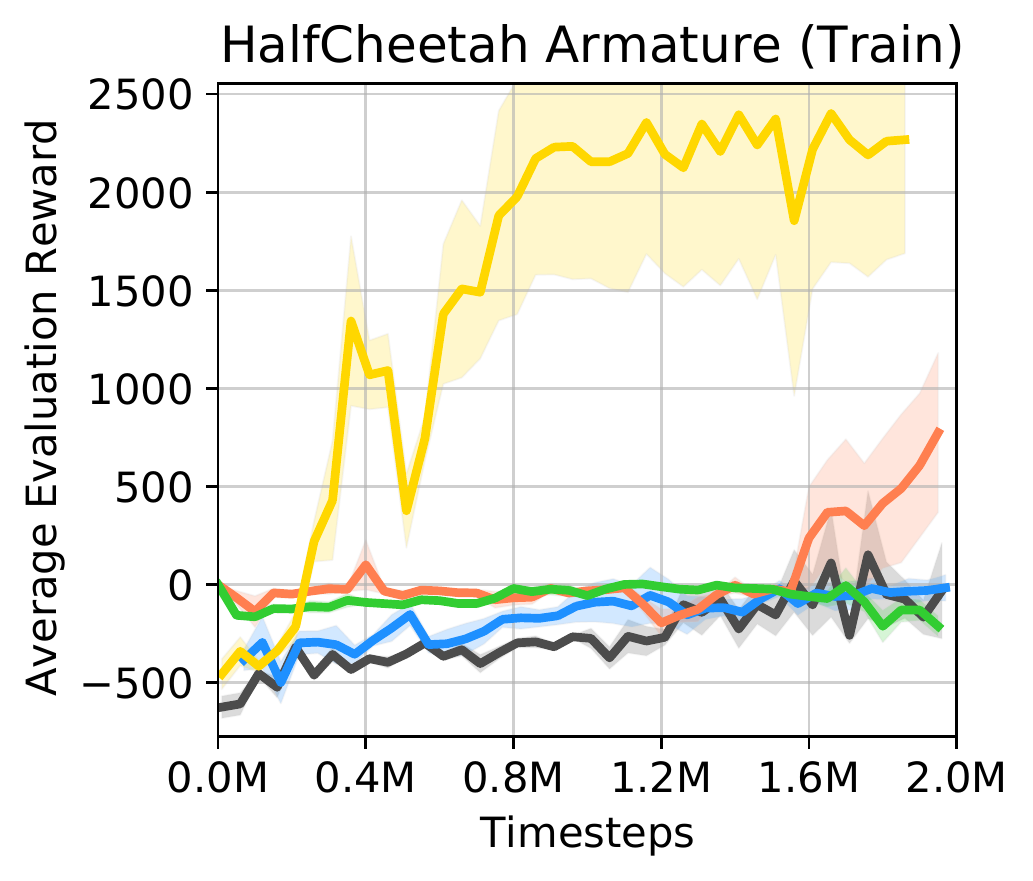}
\end{minipage}%
\hfill
\begin{minipage}{0.18\textwidth}
    \centering
    \includegraphics[width=\linewidth]{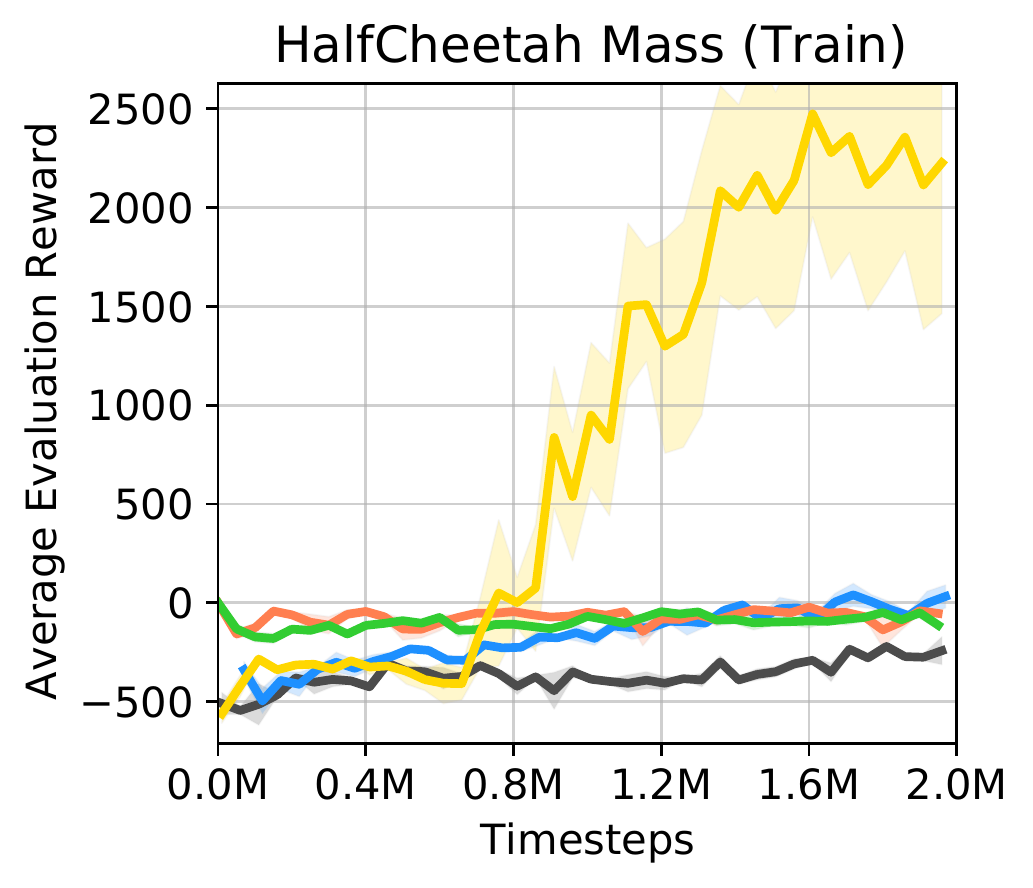}
\end{minipage}
\hfill
\begin{minipage}{0.18\textwidth}
    \centering
    \includegraphics[width=\linewidth]{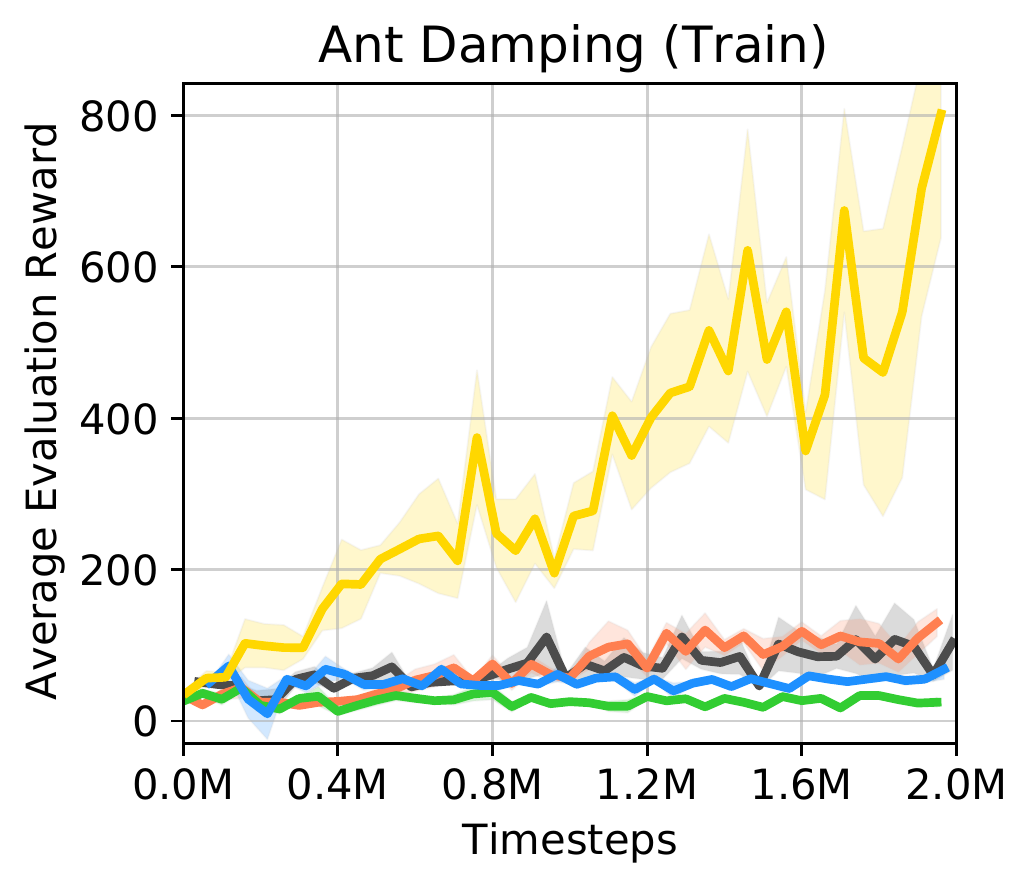}
\end{minipage}
\hfill
\begin{minipage}{0.18\textwidth}
    \centering
    \includegraphics[width=\linewidth]{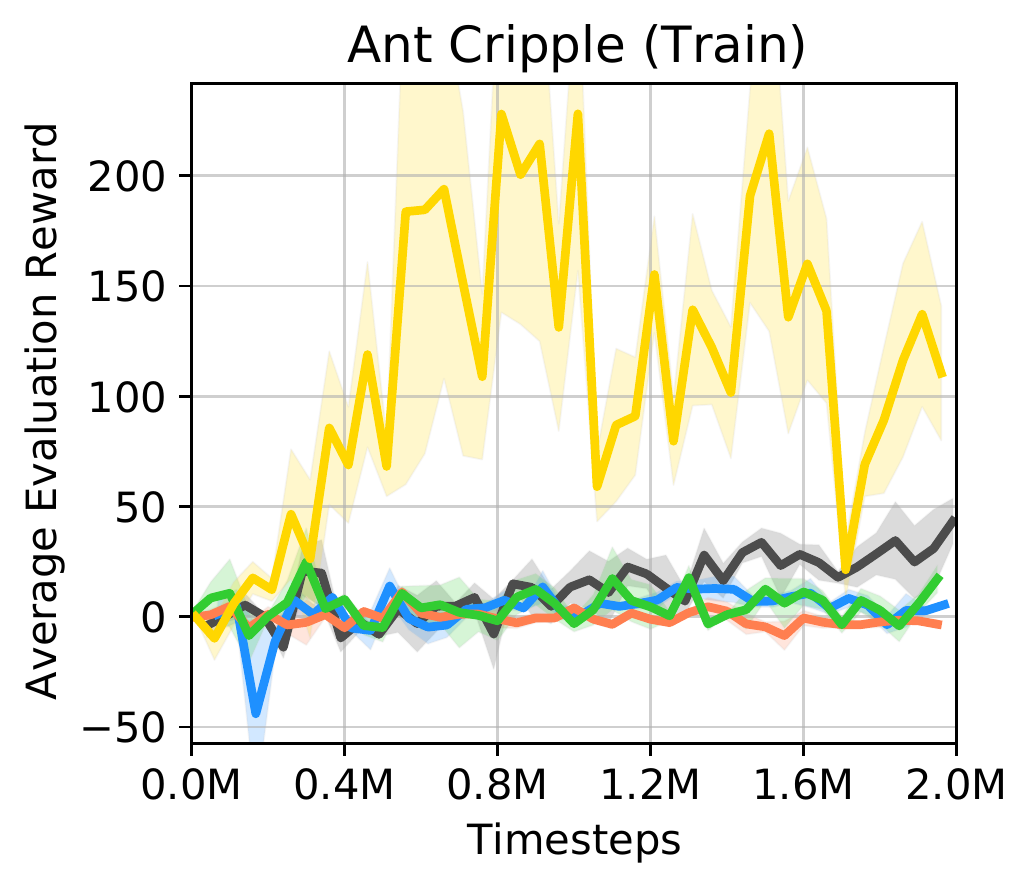}
\end{minipage}
\vfill
\begin{minipage}{0.18\textwidth}
    \centering
    \includegraphics[width=\linewidth]{figures/Hopper_size_test_4_1.pdf}
\end{minipage}%
\hfill
\begin{minipage}{0.18\textwidth}
    \includegraphics[width=\linewidth]{figures/Halfcheetah_arma_test_4_1.pdf}
\end{minipage}%
\hfill
\begin{minipage}{0.18\textwidth}
    \centering
    \includegraphics[width=\linewidth]{figures/Halfcheetah_mass_test_4_1.pdf}
\end{minipage}
\hfill
\begin{minipage}{0.18\textwidth}
    \centering
    \includegraphics[width=\linewidth]{figures/Ant_damping_test_4_1.pdf}
\end{minipage}
\hfill
\begin{minipage}{0.18\textwidth}
    \centering
    \includegraphics[width=\linewidth]{figures/Ant_cripple_test_4_1.pdf}
\end{minipage}

    \caption{Learning curves of episode rewards on both training and test tasks, averaged over 3 runs. 
    Shadow areas indicate standard error. This adds the performance on training tasks in comparison to Fig.~\ref{fig:ours_baseline} in the main text and changes the learning curves on test tasks in Ant Cripple setting to 2M timesteps}
    \label{fig:app_ours_baseline}
\end{figure*}

Furthermore, we present additional experimental results on MetaWorld environment.
In Section~\ref{sec:fixed}, we introduced the tasks of moving objects to target locations and the reward is positive only when the object is close to the goal. We combine context-based TD3 with policy transfer to learn a policy operating multiple objects: drawer, coffee mug, soccer, cube, plate.
Then we test whether the policy could generalize to moving a large cylinder.
In Tab.~\ref{tab:app_metaworld_mcat}, MCAT agent earns higher success rate than the baselines on both training and test tasks after 2M timesteps in the sparse-reward tasks.

\begin{table}[!ht]
\centering
\begin{tabular}{c|cccc}
\toprule
         & \makecell{MQL \\ \citep{fakoor2019meta}} &
         \makecell{PEARL \\ \citep{rakelly2019efficient}} &
         \makecell{PCGrad \\ \citep{yu2020gradient}} & MCAT  \\ \midrule
Training tasks (reward) & 164.8\tiny{($\pm 23.6$)}& 161.2\tiny{($\pm 25.3$)}  & 44.8\tiny{($\pm 31.7$)} & \textbf{204.1}\tiny{($\pm 43.1$)} \\
Test tasks (reward)  & 0.0\tiny{($\pm 0.0$)}& 0.0\tiny{($\pm 0.0$)} & 0.0\tiny{($\pm 0.0$)}  & \textbf{10.2}\tiny{($\pm 8.3$)} \\
Training tasks (success rate) & 40.0\%\tiny{($\pm 0.0\%$)}&
33.3\%\tiny{($\pm 5.4\%$)}& 10.0\%\tiny{($\pm 7.1\%$)} & \textbf{53.3\%}\tiny{($\pm 5.4\%$)} \\
Test tasks (success rate) & 0.0\%\tiny{($\pm 0.0\%$)} & 0.0\%\tiny{($\pm 0.0\%$)} & 0.0\%\tiny{($\pm 0.0\%$)} & \textbf{16.7\%} \tiny{($\pm 13.6\%$)} \\ \bottomrule
\end{tabular}
\caption{Performance of learned policies at 2M timesteps, averaged over 3 runs. }
\label{tab:app_metaworld_mcat}
\end{table}

\clearpage
\section{Ablative Study}
\subsection{Effect of Policy Transfer}
In Section~\ref{sec:ablation}, we investigate the effect of policy transfer (PT). In Figure~\ref{fig:effect_pt} we provide the learning curves of MCAT and MCAT w/o PT on both training tasks and test tasks. 
\begin{figure*}[!ht]
\begin{minipage}{0.19\textwidth}
    \includegraphics[width=\linewidth]{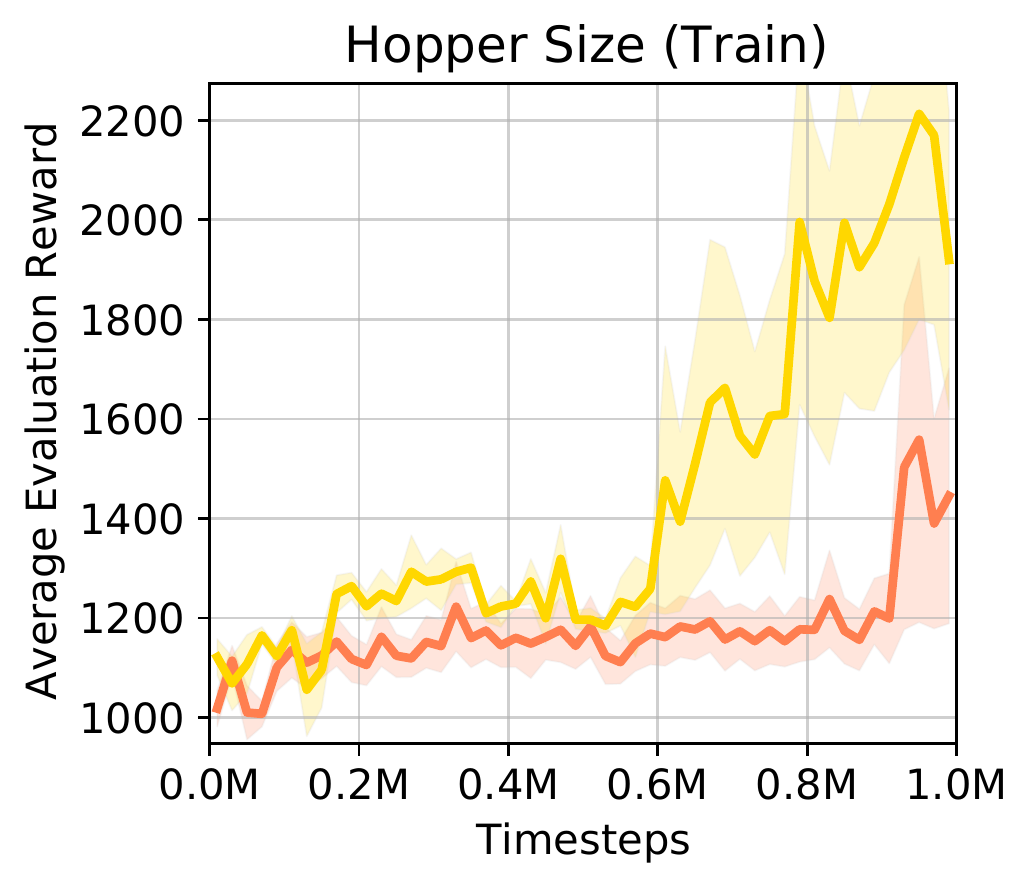}
\end{minipage}%
\hfill
\begin{minipage}{0.19\textwidth}
    \includegraphics[width=\linewidth]{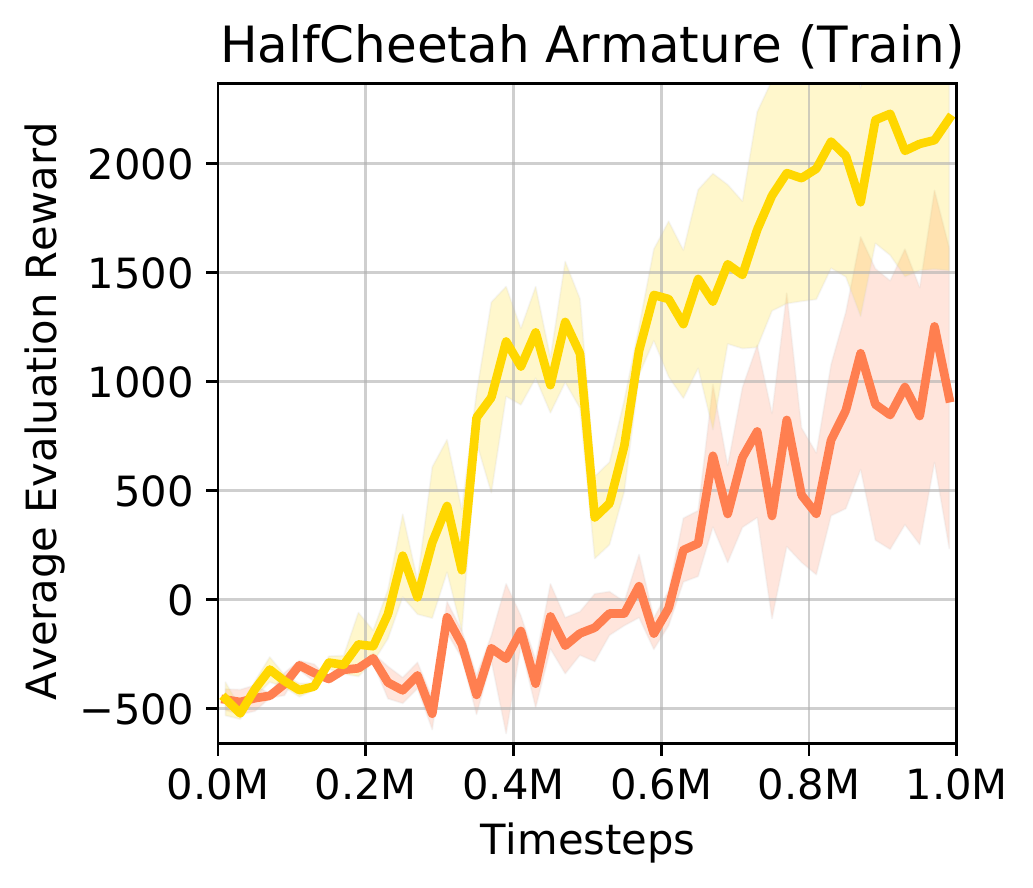}
\end{minipage}%
\hfill
\begin{minipage}{0.19\textwidth}
    \centering
    \includegraphics[width=\linewidth]{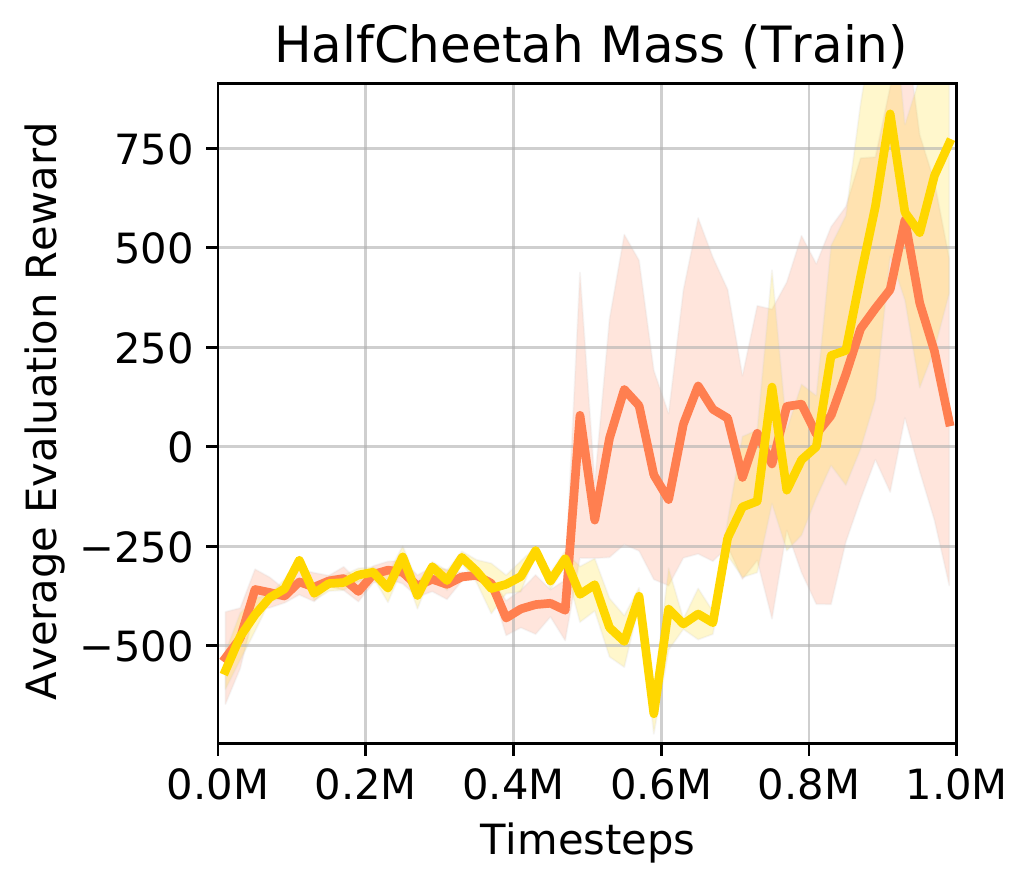}
\end{minipage}
\hfill
\begin{minipage}{0.19\textwidth}
    \centering
    \includegraphics[width=\linewidth]{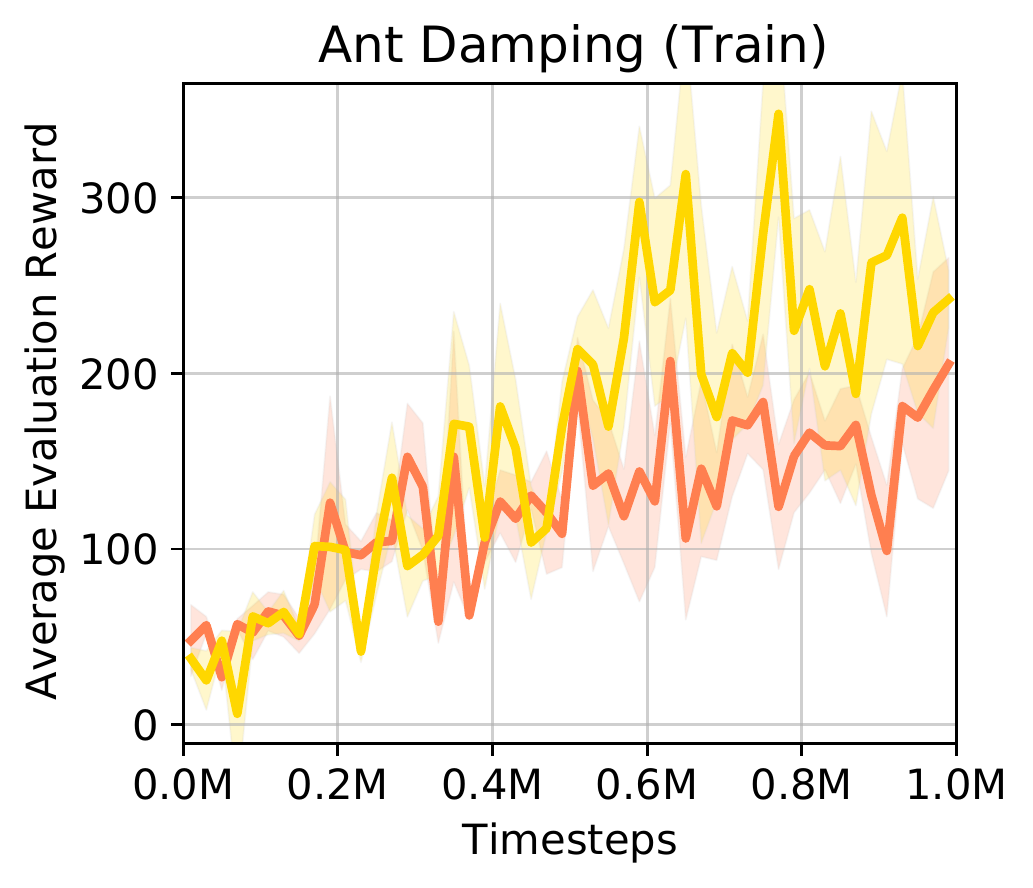}
\end{minipage}
\hfill
\begin{minipage}{0.19\textwidth}
    \centering
    \includegraphics[width=\linewidth]{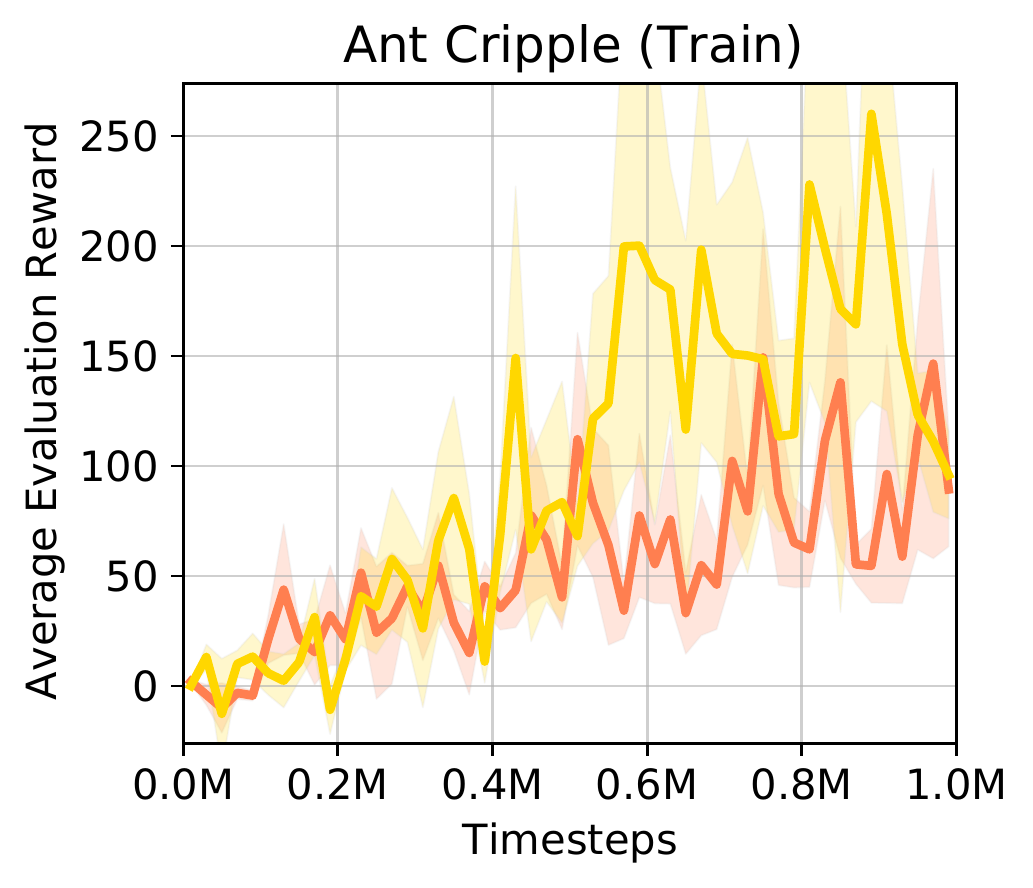}
\end{minipage}
\hfill
\begin{minipage}{0.19\textwidth}
    \centering
    \includegraphics[width=\linewidth]{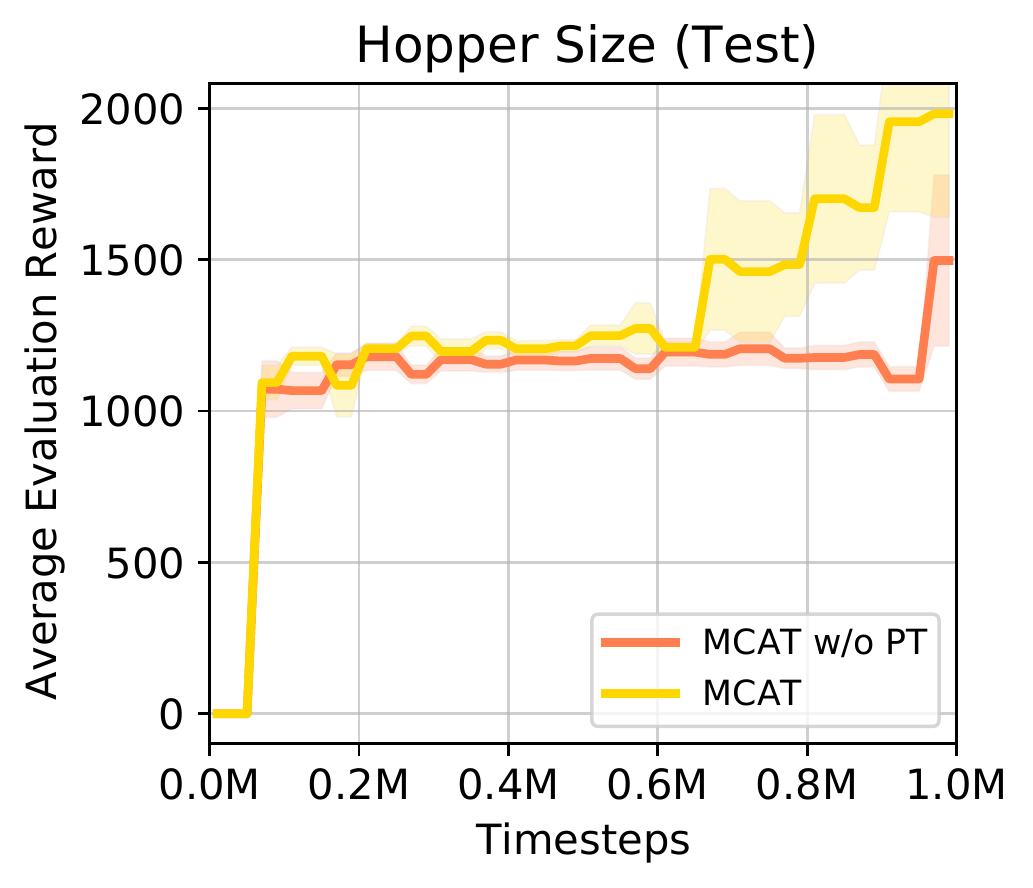}
\end{minipage}%
\hfill
\begin{minipage}{0.19\textwidth}
    \includegraphics[width=\linewidth]{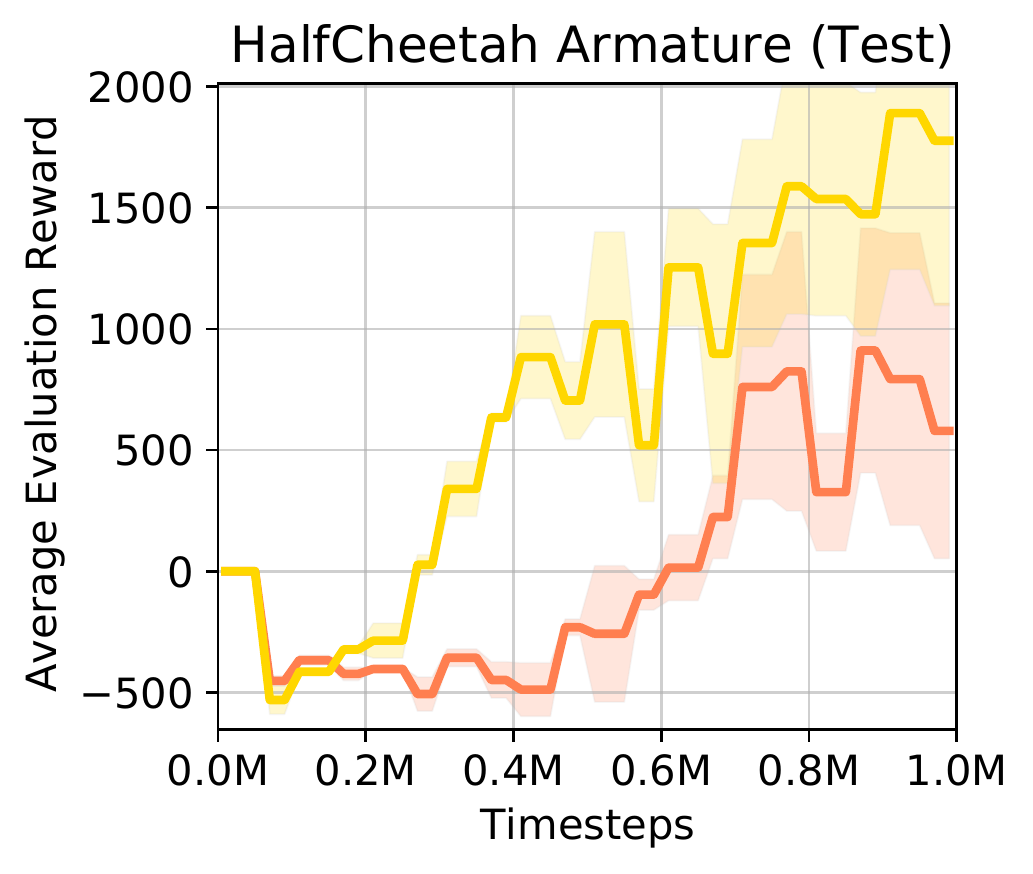}
\end{minipage}%
\hfill
\begin{minipage}{0.19\textwidth}
    \centering
    \includegraphics[width=\linewidth]{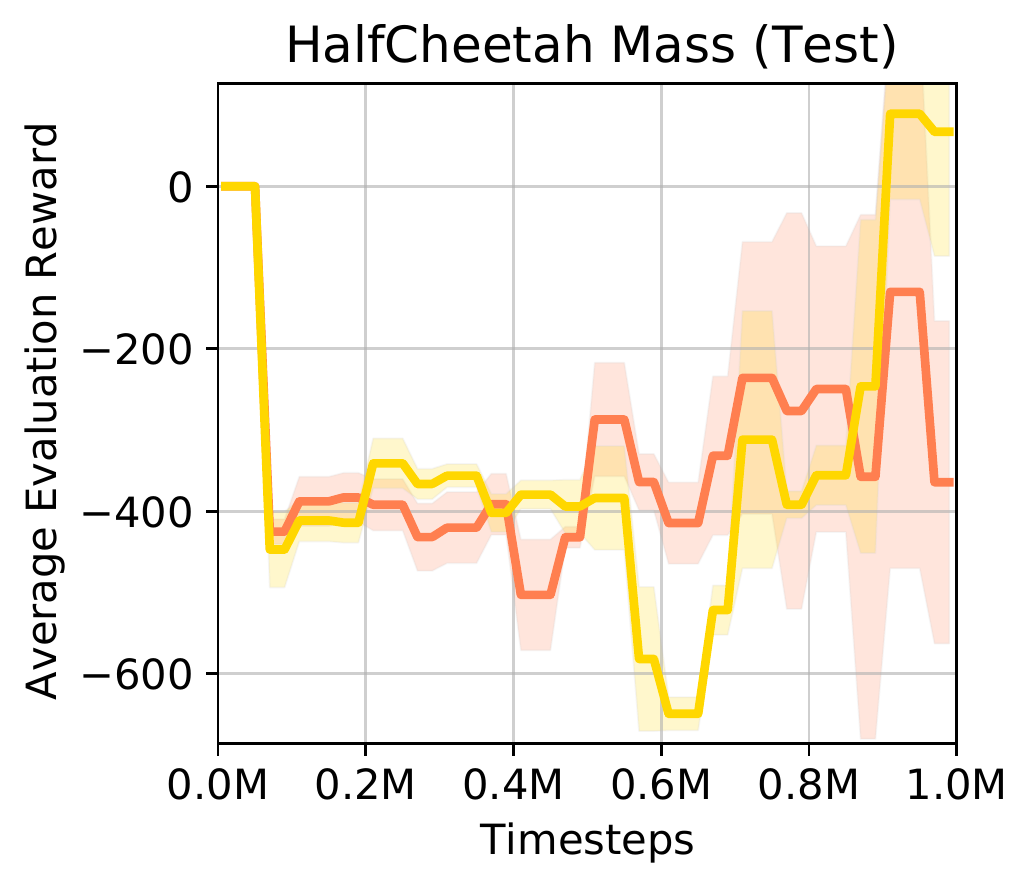}
\end{minipage}
\hfill
\begin{minipage}{0.19\textwidth}
    \centering
    \includegraphics[width=\linewidth]{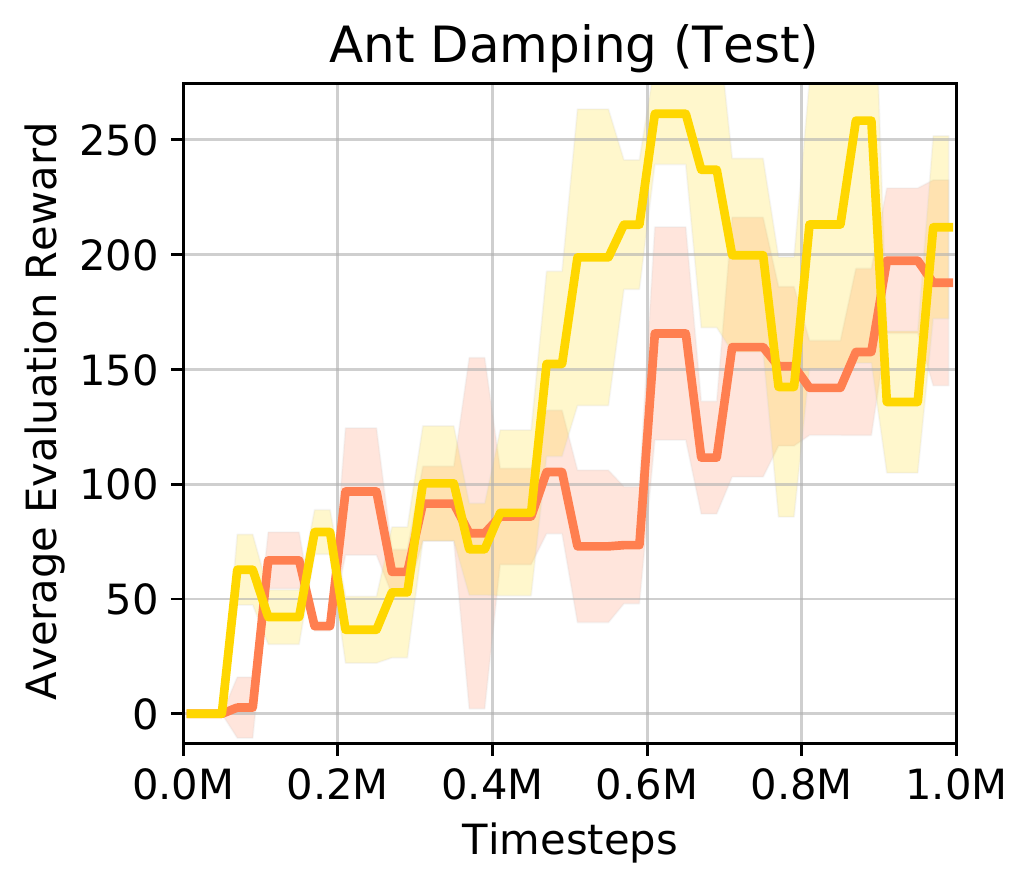}
\end{minipage}
\hfill
\begin{minipage}{0.19\textwidth}
    \centering
    \includegraphics[width=\linewidth]{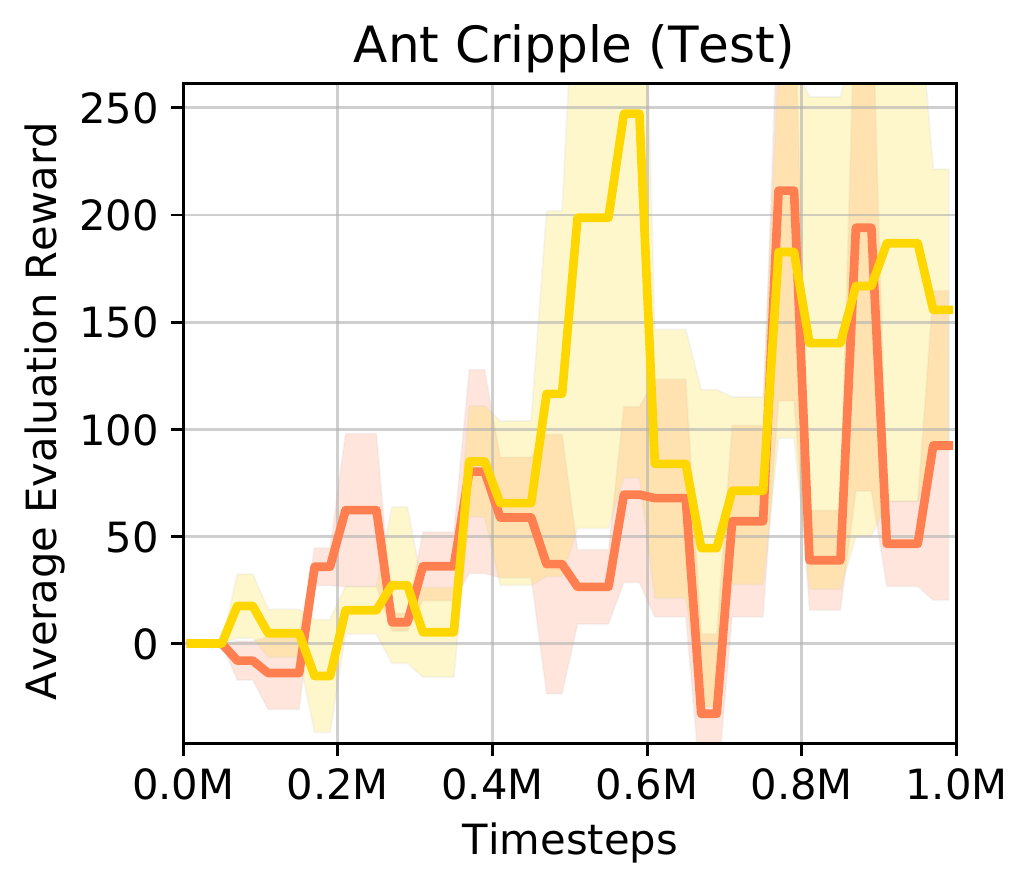}
\end{minipage}
\hfill

    \vspace*{-8pt}
    \caption{Learning curves of the average episode reward, averaged over 3 runs. The average episode reward and standard error are reported on training tasks and test tasks respectively. This repeats Figure~\ref{fig:effect_pt} with addition of learning curves on training tasks.
}
    \label{fig:effect_pt}
    \vspace*{-0.2in}
\end{figure*}

\subsection{More Sparse Rewards}
In Section~\ref{sec:ablation}, we report the effect of policy transfer when the rewards become more sparse in the environments. On HalfCheetah, we delay the environment rewards for different number of steps 200, 350, 500. In Figure~\ref{fig:effect_sparse}, we show the learning curves on training and test tasks. In Table~\ref{tab:more_sparse}, we report the average episode rewards and standard error over 3 runs at 1M timesteps.
\begin{figure*}[!ht]
\begin{minipage}{0.16\textwidth}
    \includegraphics[width=\linewidth]{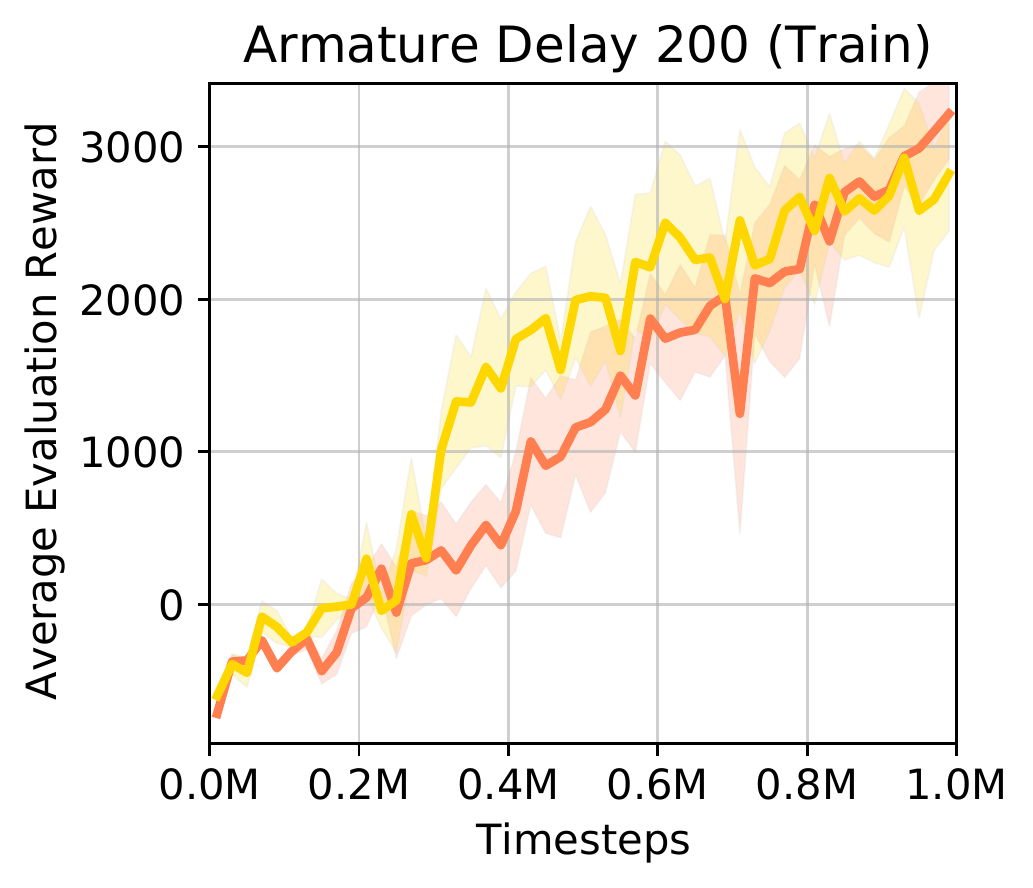}
\end{minipage}%
\hfill
\begin{minipage}{0.16\textwidth}
    \includegraphics[width=\linewidth]{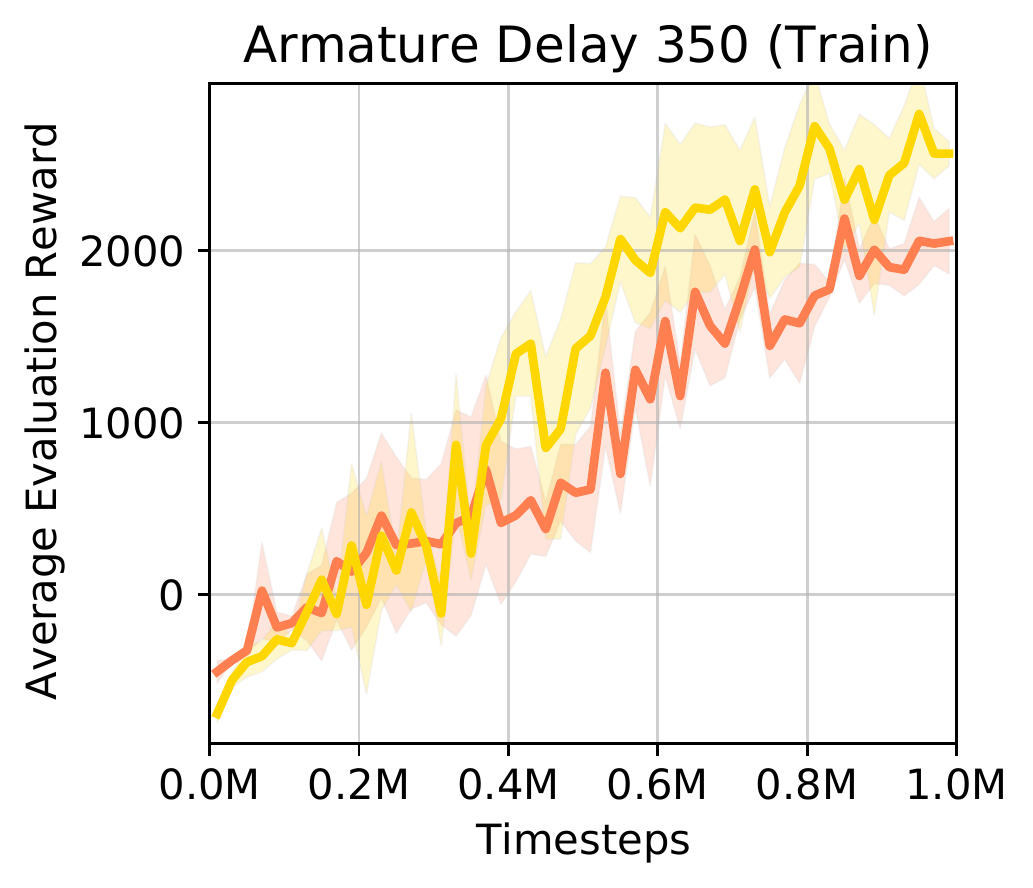}
\end{minipage}%
\hfill
\begin{minipage}{0.16\textwidth}
    \centering
    \includegraphics[width=\linewidth]{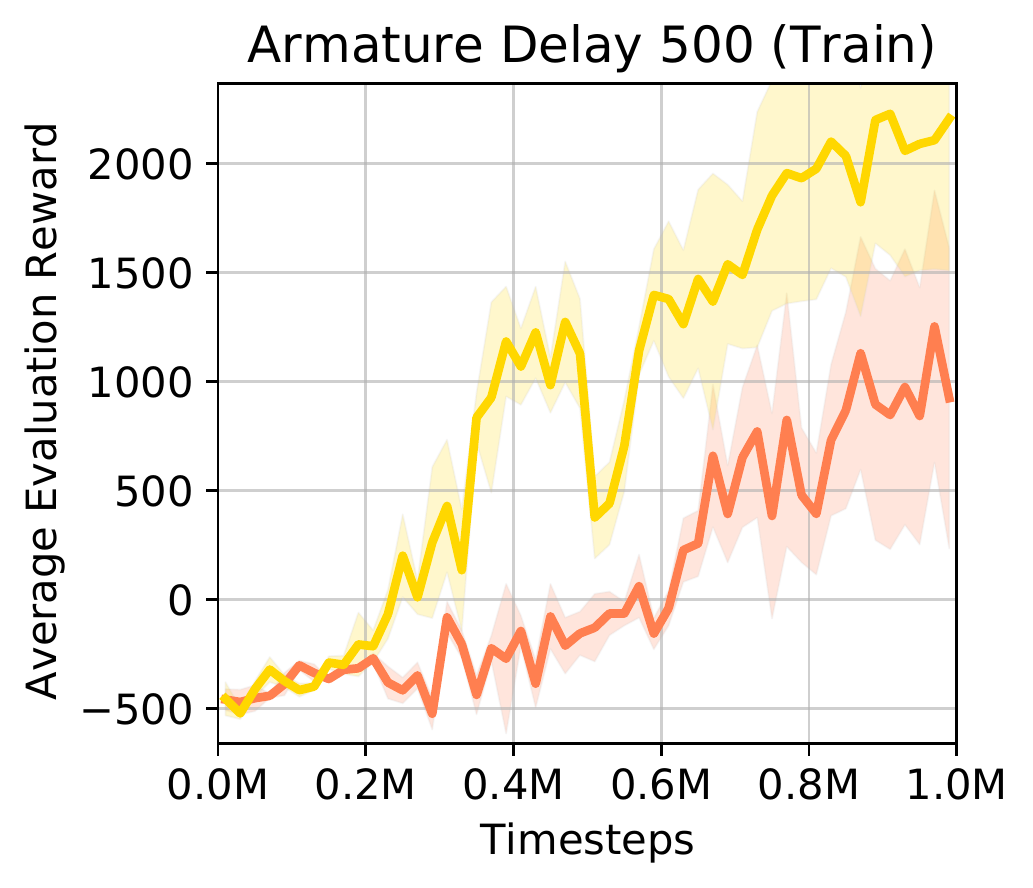}
\end{minipage}
\hfill
\begin{minipage}{0.16\textwidth}
    \centering
    \includegraphics[width=\linewidth]{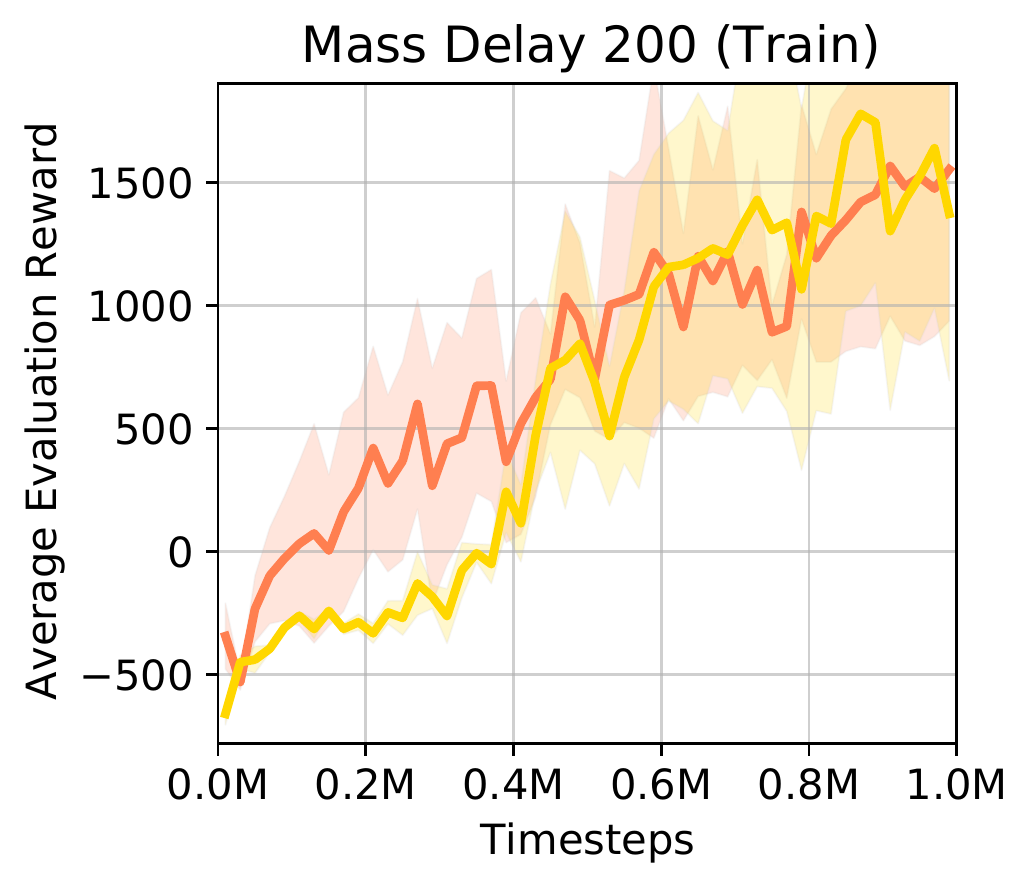}
\end{minipage}
\hfill
\begin{minipage}{0.16\textwidth}
    \centering
    \includegraphics[width=\linewidth]{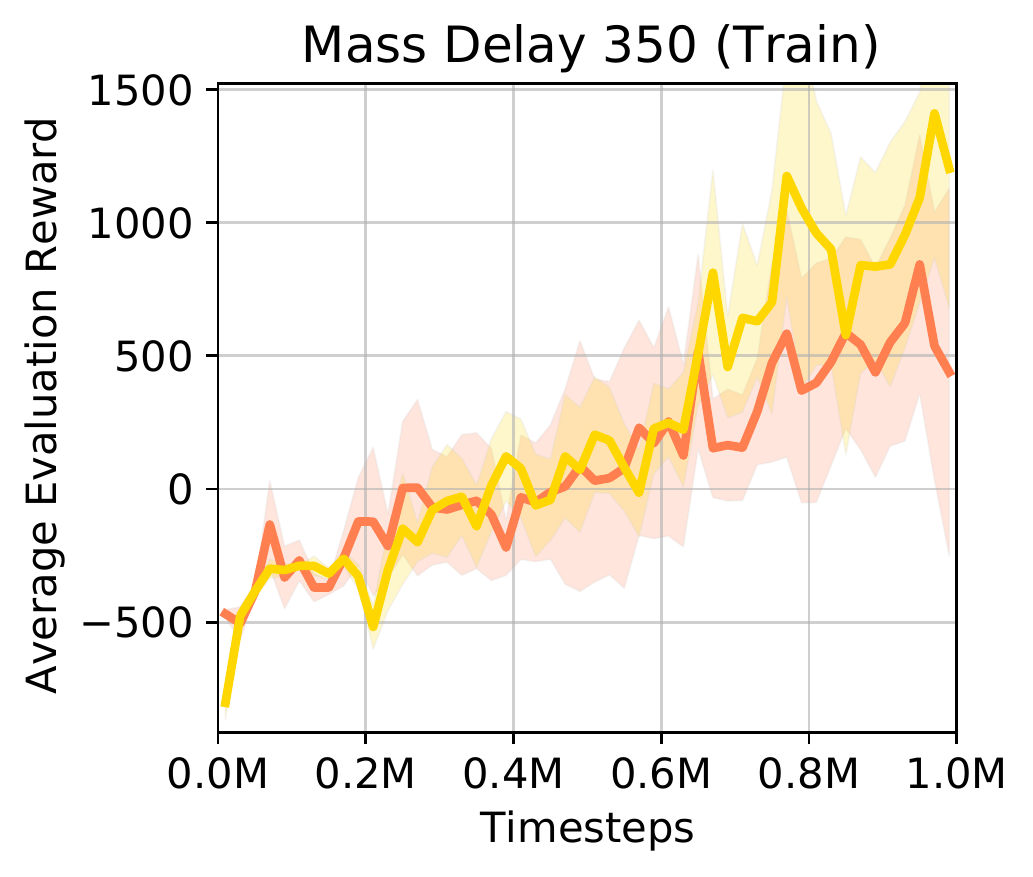}
\end{minipage}
\hfill
\begin{minipage}{0.16\textwidth}
    \centering
    \includegraphics[width=\linewidth]{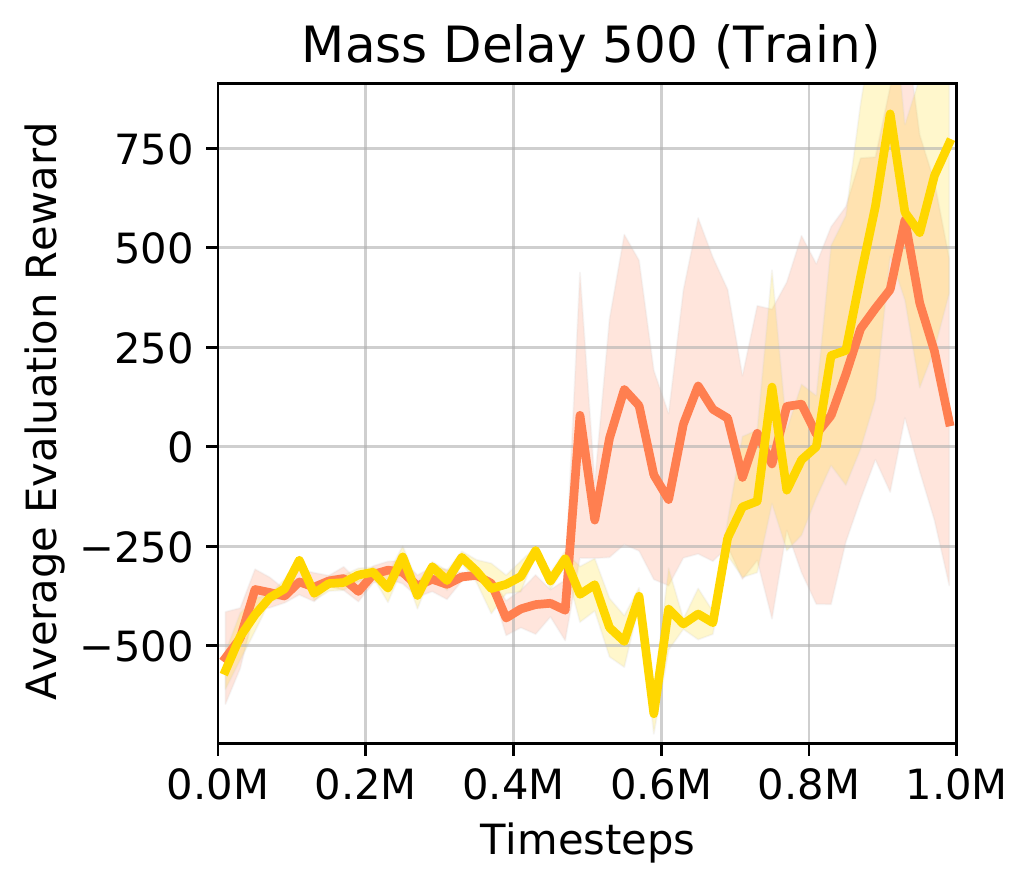}
\end{minipage}
\hfill
\begin{minipage}{0.16\textwidth}
    \centering
    \includegraphics[width=\linewidth]{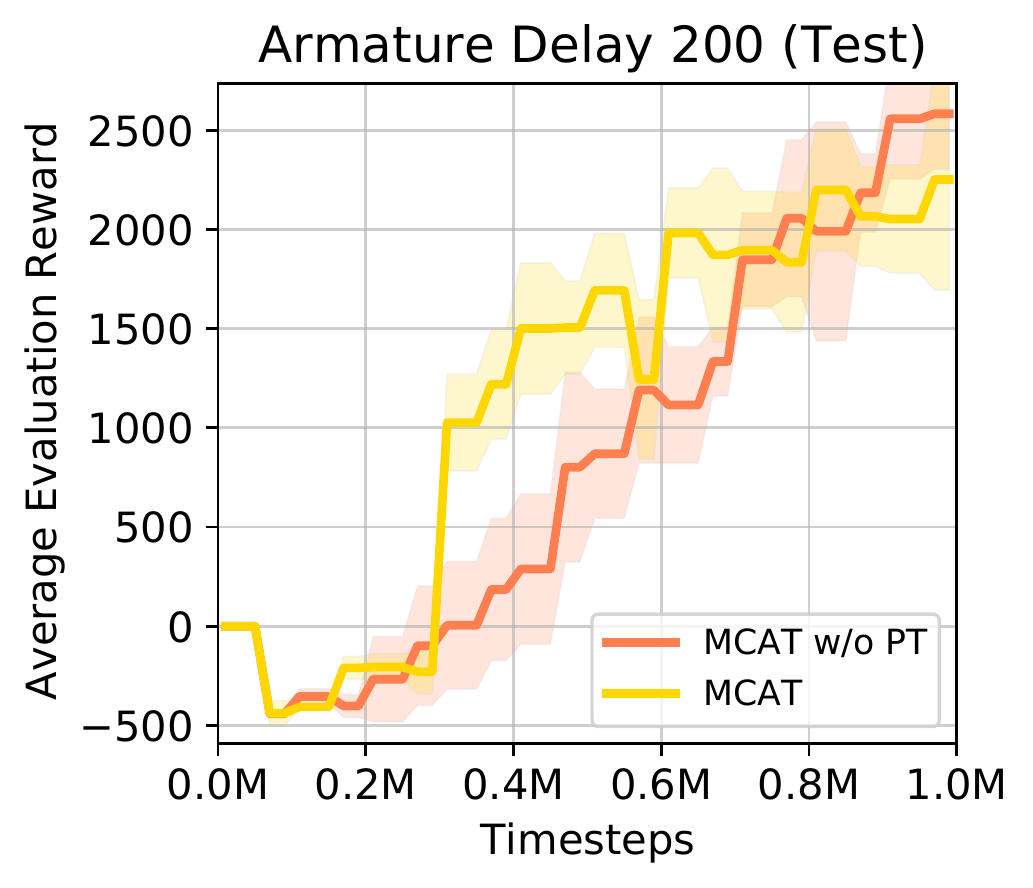}
\end{minipage}%
\hfill
\begin{minipage}{0.16\textwidth}
    \includegraphics[width=\linewidth]{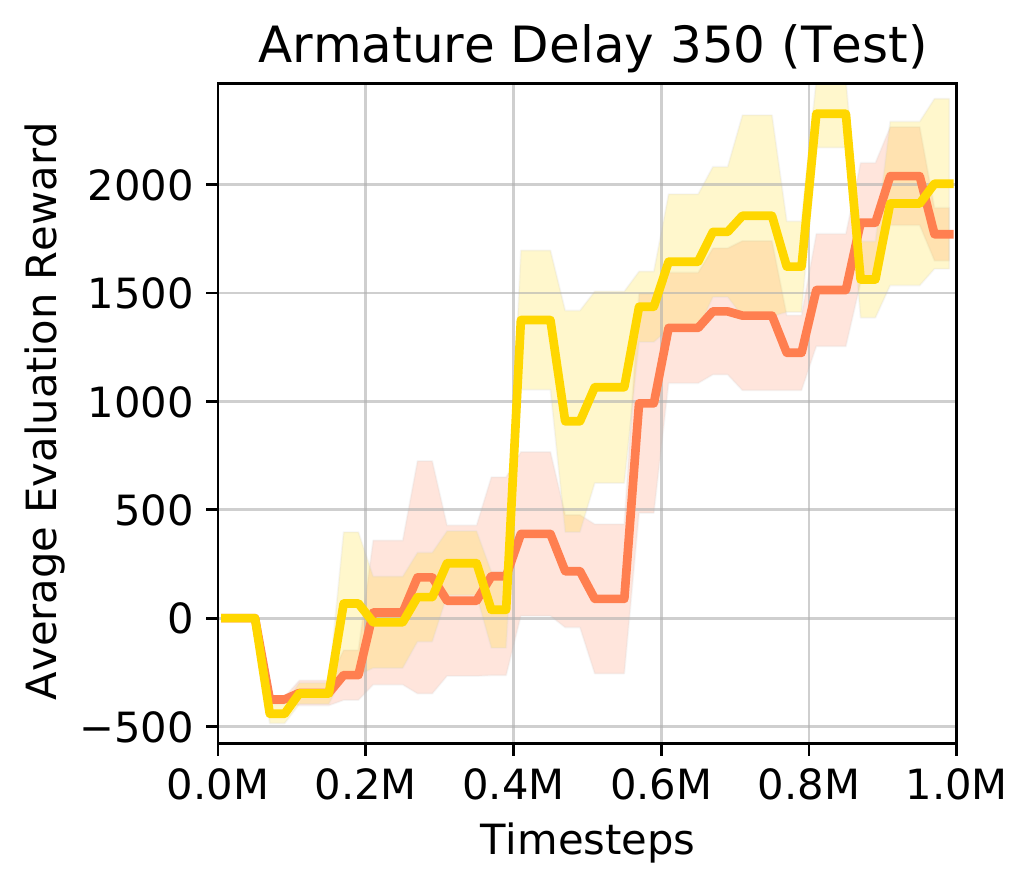}
\end{minipage}%
\hfill
\begin{minipage}{0.16\textwidth}
    \centering
    \includegraphics[width=\linewidth]{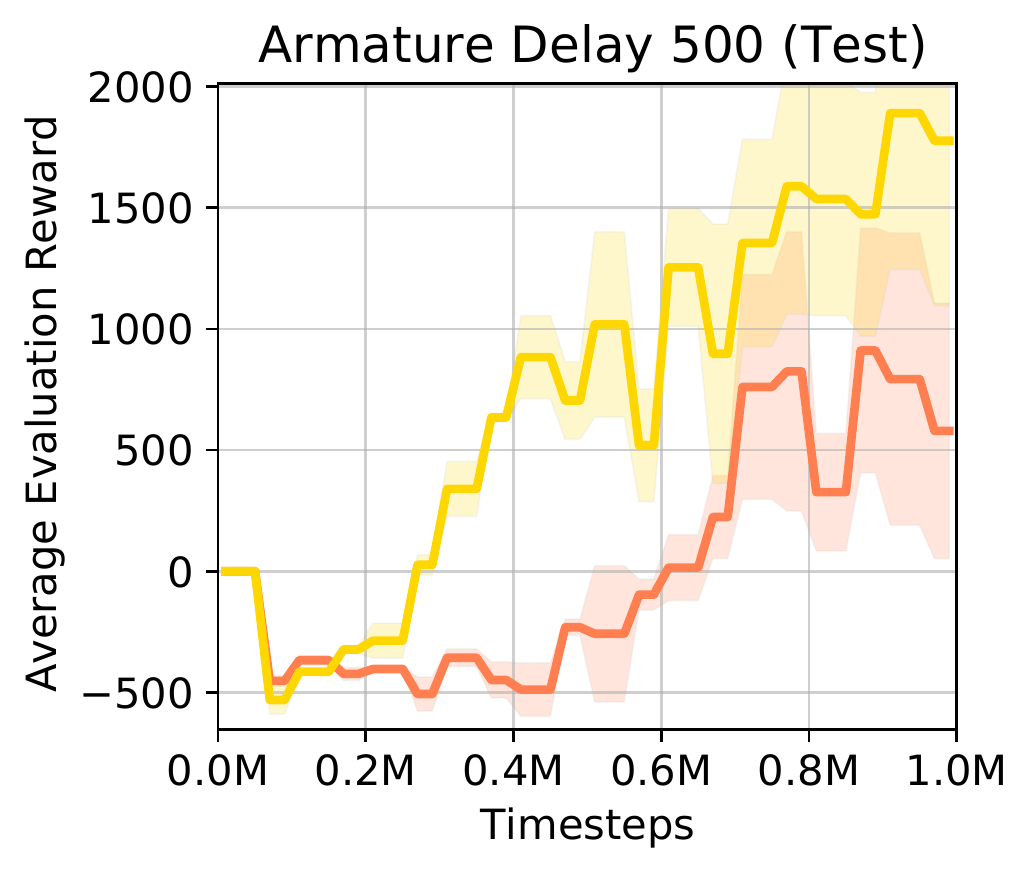}
\end{minipage}
\hfill
\begin{minipage}{0.16\textwidth}
    \centering
    \includegraphics[width=\linewidth]{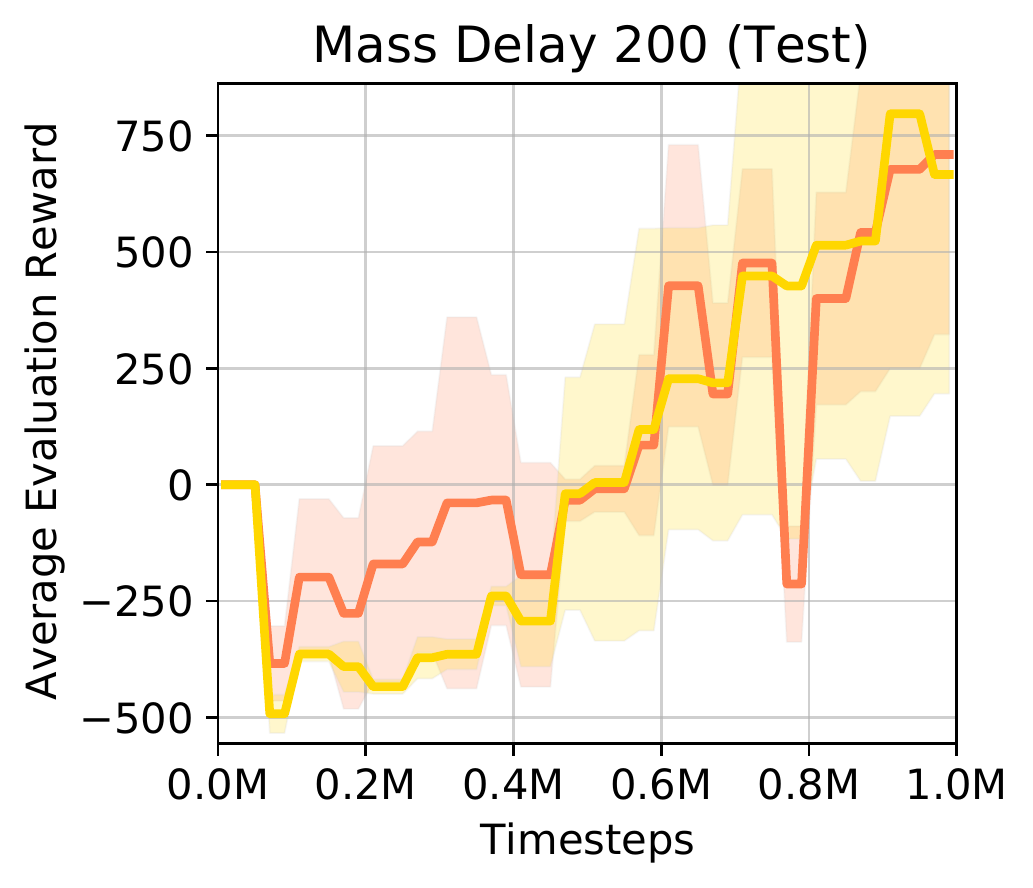}
\end{minipage}
\hfill
\begin{minipage}{0.16\textwidth}
    \centering
    \includegraphics[width=\linewidth]{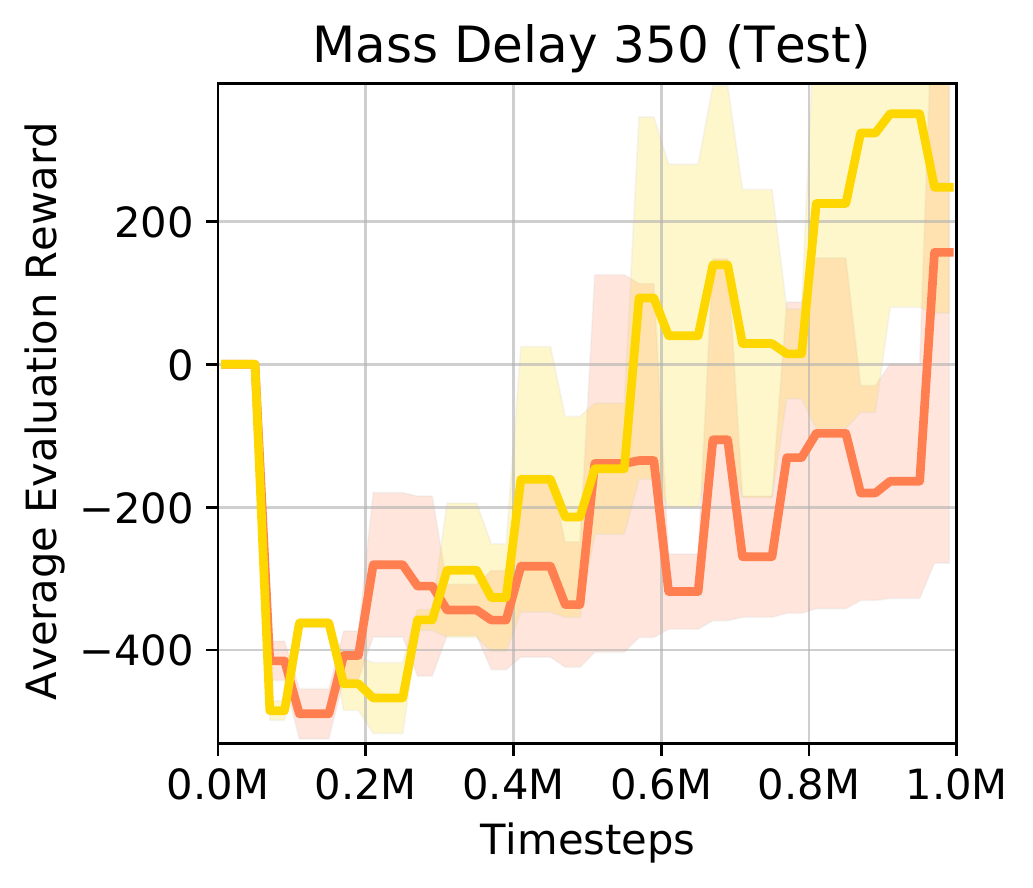}
\end{minipage}
\hfill
\begin{minipage}{0.16\textwidth}
    \centering
    \includegraphics[width=\linewidth]{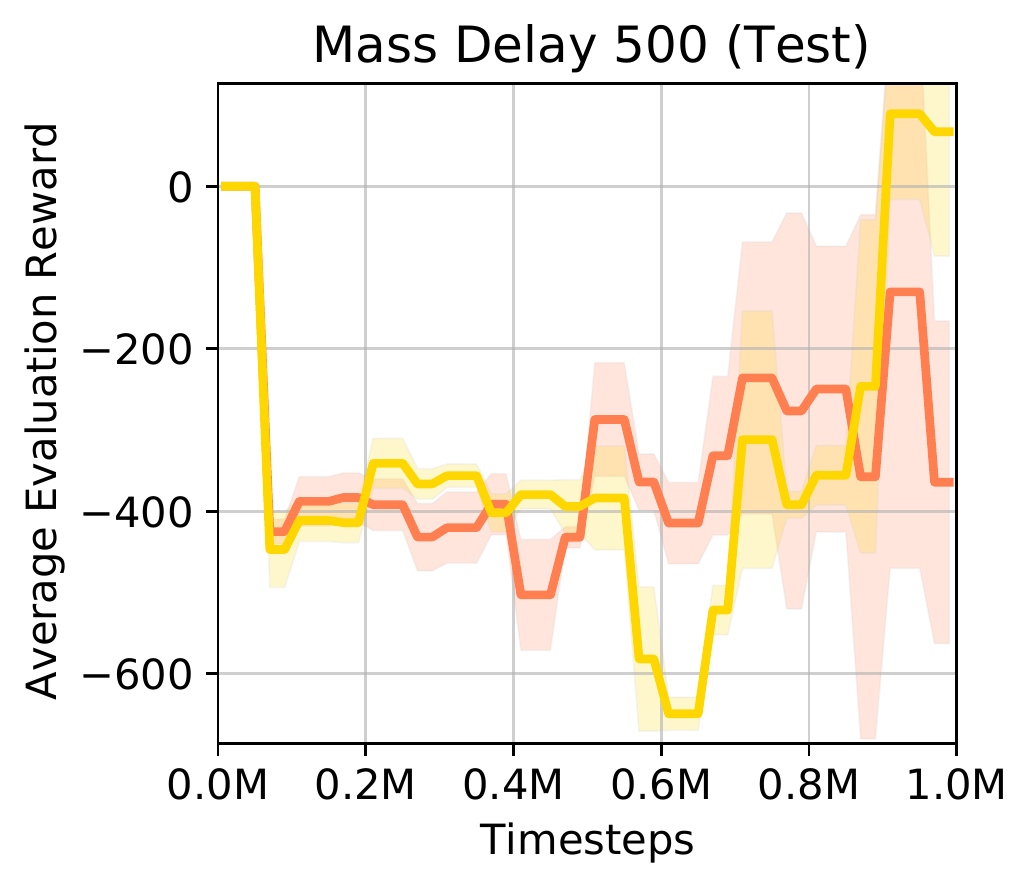}
\end{minipage}
\hfill

    \vspace*{-8pt}
    \caption{Learning curves of the average episode reward, averaged over 3 runs. The average episode reward and standard error are reported on training tasks and test tasks respectively.
}
    \label{fig:effect_sparse}
    \vspace*{-0.2in}
\end{figure*}

\subsection{More Diverse Tasks}
\label{app:diverse}
We include more settings of training and test tasks where the discrepancy among training tasks varies. On HalfCheetah, the environment rewards are delayed for 500 steps.
In Table~\ref{tab:more_sets}, we list the details of the settings.
\begin{table}[!ht]
\centering

\begin{tabular}{cccc}
\toprule
Physics Parameter         & Setting & Train                         & Test                     \\
\midrule
\multirow{3}{*}{Armature} & Set 1   & \{0.2, 0.25, 0.3, 0.35, 0.4\} & \{0.05, 0.1, 0.5, 0.55\} \\
                          & Set 2   & \{0.2, 0.3, 0.4, 0.5, 0.6\}   & \{0.2, 0.3, 0.7, 0.75\}  \\
                          & Set 3   & \{0.2, 0.35, 0.5, 0.65, 0.8\} & \{0.2, 0.3, 0.9, 0.95\}  \\ \hline
\multirow{3}{*}{Mass}     & Set 1   & \{0.5, 0.75, 1.0, 1.25, 1.5\} & \{0.2, 0.3, 1.7, 1.8\}   \\
                          & Set 2   & \{0.5, 1.0, 1.5, 2.0, 2.5\}   & \{0.2, 0.3, 2.7, 2.8\}   \\
                          & Set 3   & \{0.5, 1.25, 2.0, 2.75, 3.5\} & \{0.2, 0.3, 3.7, 3.8\} \\
\bottomrule
\end{tabular}

\caption{Modified physics parameters used in the experiments.}
\label{tab:more_sets}
\end{table}

We consider baseline MQL because it performs reasonably well on HalfCheetah among all the baselines (Figure~\ref{fig:ours_baseline}). Table~\ref{tab:more_distinctive} demonstrates that policy transfer (PT) is generally and consistently effective.
In Figure~\ref{fig:effect_distinct}, we show the learning curves on training and test tasks. In Table~\ref{tab:more_distinctive}, we report the average episode rewards and standard error over 3 runs at 1M timesteps.

\begin{table}[!ht]
\centering
\small
\setlength{\tabcolsep}{3pt}
\begin{tabular}{c|cccccc|cccccc}
\toprule
Setting & \multicolumn{2}{c}{\makecell{Armature\\Set 1}}&
\multicolumn{2}{c}{\makecell{Armature\\Set 2}}&
\multicolumn{2}{c|}{\makecell{Armature\\Set 3}}&
\multicolumn{2}{c}{\makecell{Mass\\Set 1}}  & \multicolumn{2}{c}{\makecell{Mass\\Set 2}}  & \multicolumn{2}{c}{\makecell{Mass\\Set 3}}          \\ \midrule
Task & Train & Test & Train & Test & Train & Test & Train & Test & Train & Test & Train & Test \\ \midrule
MQL & \makecell{-129.3\\[-3pt]\tiny{($\pm$46.7)}} & \makecell{-248.0\\[-3pt]\tiny{($\pm$32.0)}} &
\makecell{-277.2\\[-3pt]\tiny{($\pm$25.2)}} & \makecell{-335.0\\[-3pt]\tiny{($\pm$20.8)}} & \makecell{-85.0\\[-3pt]\tiny{($\pm$33.5)}} & \makecell{-214.7\\[-3pt]\tiny{($\pm$28.9)}} &
\makecell{-100.8\\[-3pt]\tiny{($\pm$37.8)}} & \makecell{-291.3\\[-3pt]\tiny{($\pm$25.8)}} &
\makecell{-403.7\\[-3pt]\tiny{($\pm$16.1)}} & \makecell{-467.8\\[-3pt]\tiny{($\pm$6.5)}} &   \makecell{-175.3\\[-3pt]\tiny{($\pm$6.2)}}& \makecell{-287.9\\[-3pt]\tiny{($\pm$11.7)}} \\[0.5em] 
MCAT w/o PT & \makecell{837.6\\[-3pt]\tiny{($\pm$646.5)}} & \makecell{785.3\\[-3pt]\tiny{($\pm$733.1)}} &
\makecell{924.0\\[-3pt]\tiny{($\pm$690.1)}} & \makecell{579.1\\[-3pt]\tiny{($\pm$527.1)}} & \makecell{452.8\\[-3pt]\tiny{($\pm$386.6)}} & \makecell{616.5\\[-3pt]\tiny{($\pm$305.0)}} &
\makecell{-60.5\\[-3pt]\tiny{($\pm$313.4)}} & \makecell{-258.2\\[-3pt]\tiny{($\pm$151.1)}} &
\makecell{62.5\\[-3pt]\tiny{($\pm$411.0)}} & \makecell{-364.3\\[-3pt]\tiny{($\pm$198.5)}} &   \makecell{-328.1\\[-3pt]\tiny{($\pm$55.8)}}& \makecell{-412.4\\[-3pt]\tiny{($\pm$7.7)}} \\[0.5em] 
MCAT & \makecell{3372.1\\[-3pt]\tiny{($\pm$186.4)}} & \makecell{2821.9\\[-3pt]\tiny{($\pm$137.7)}}  &
\makecell{2207.3\\[-3pt]\tiny{($\pm$697.7)}} & \makecell{1776.8\\[-3pt]\tiny{($\pm$680.8)}} & \makecell{1622.2\\[-3pt]\tiny{($\pm$402.2)}}  & \makecell{918.3\\[-3pt]\tiny{($\pm$142.5)}} &
\makecell{1222.2\\[-3pt]\tiny{($\pm$754.9)}} & \makecell{482.4\\[-3pt]\tiny{($\pm$624.2)}} &
\makecell{763.4\\[-3pt]\tiny{($\pm$377.7)}} & \makecell{67.1\\[-3pt]\tiny{($\pm$152.9)}} & \makecell{705.7\\[-3pt]\tiny{($\pm$503.4)}} & \makecell{-86.2\\[-3pt]\tiny{($\pm$111.8)}} \\ [0.5em] 
Improvement(\%) & 302.6 & 259.3 & 133.9 & 206.8 & 258.3 & 49.0 & 2120.2 &286.8 & 1121.4 & 118.4 & 315.1 & 79.1 \\ \bottomrule
\end{tabular}

\vspace*{-8pt}
\caption{The performance of learned policy on various task settings. We modify \textit{armature} and \textit{mass} to get 5 training tasks and 4 test tasks in each setting. We compute the improvement of MCAT over MCAT w/o PT.}
\label{tab:more_distinctive}
\vspace*{-0.1in}
\end{table}

\begin{figure*}[!ht]
\begin{minipage}{0.16\textwidth}
    \includegraphics[width=\linewidth]{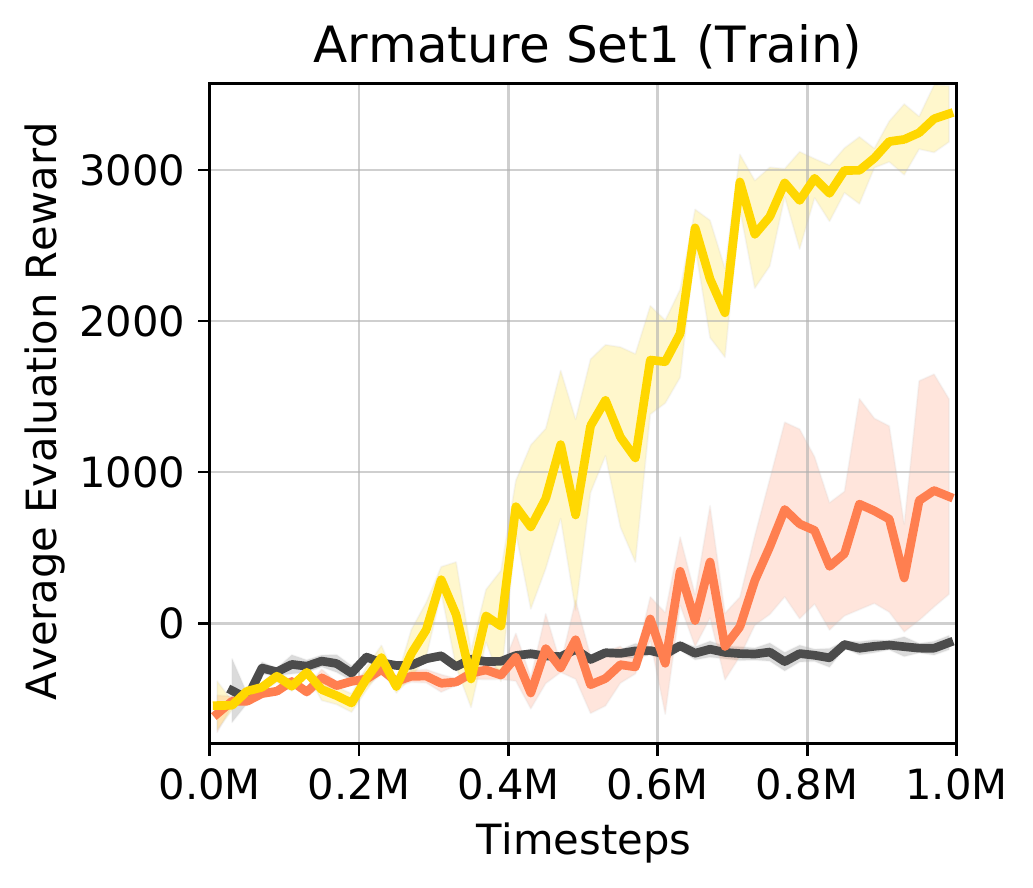}
\end{minipage}%
\hfill
\begin{minipage}{0.16\textwidth}
    \includegraphics[width=\linewidth]{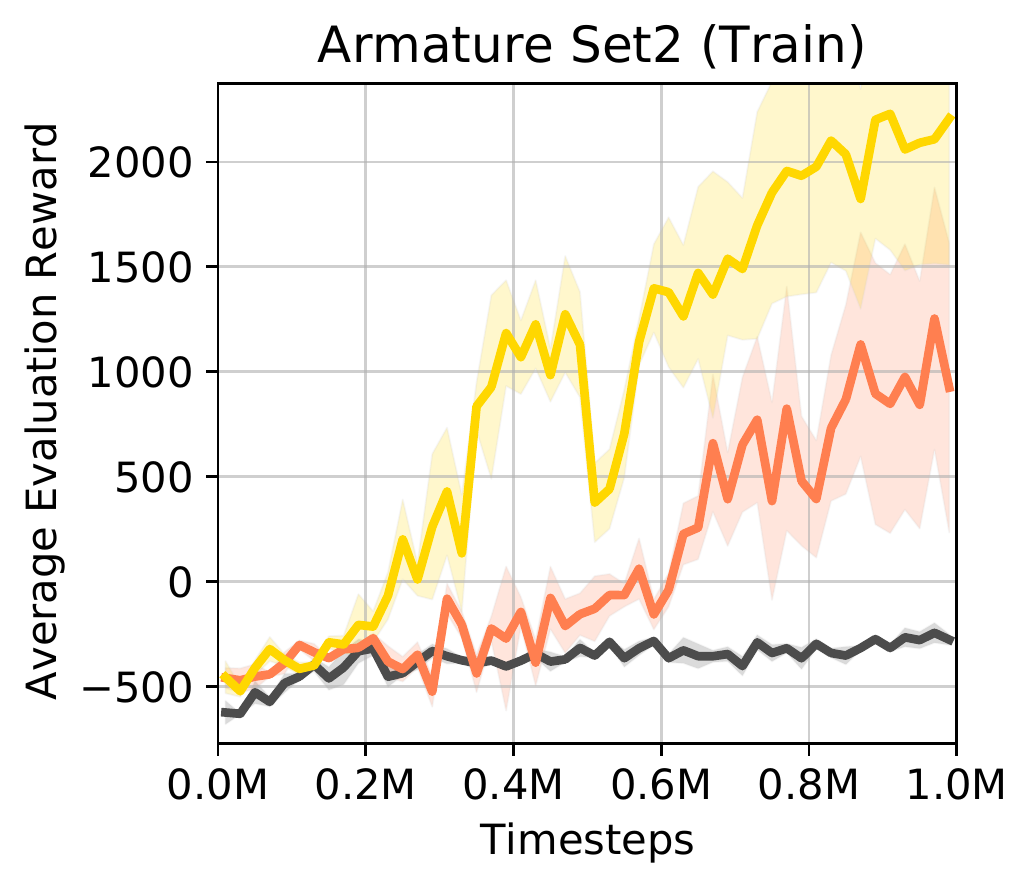}
\end{minipage}%
\hfill
\begin{minipage}{0.16\textwidth}
    \centering
    \includegraphics[width=\linewidth]{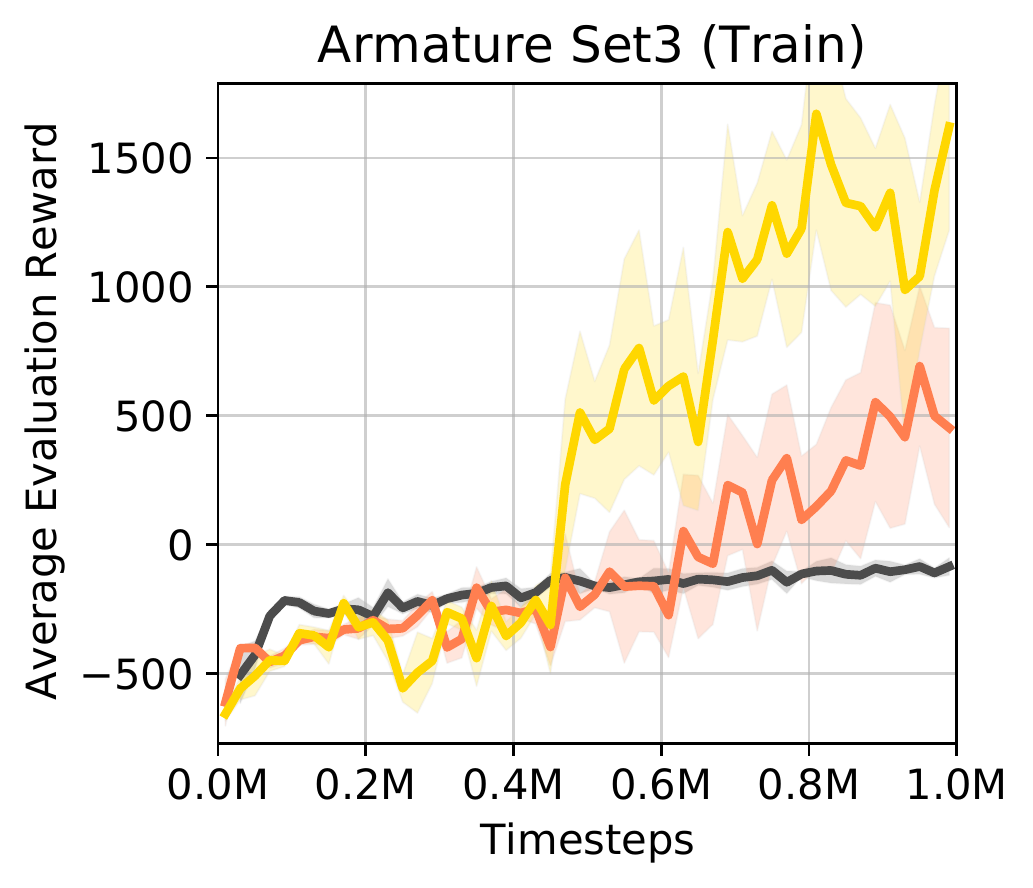}
\end{minipage}
\hfill
\begin{minipage}{0.16\textwidth}
    \centering
    \includegraphics[width=\linewidth]{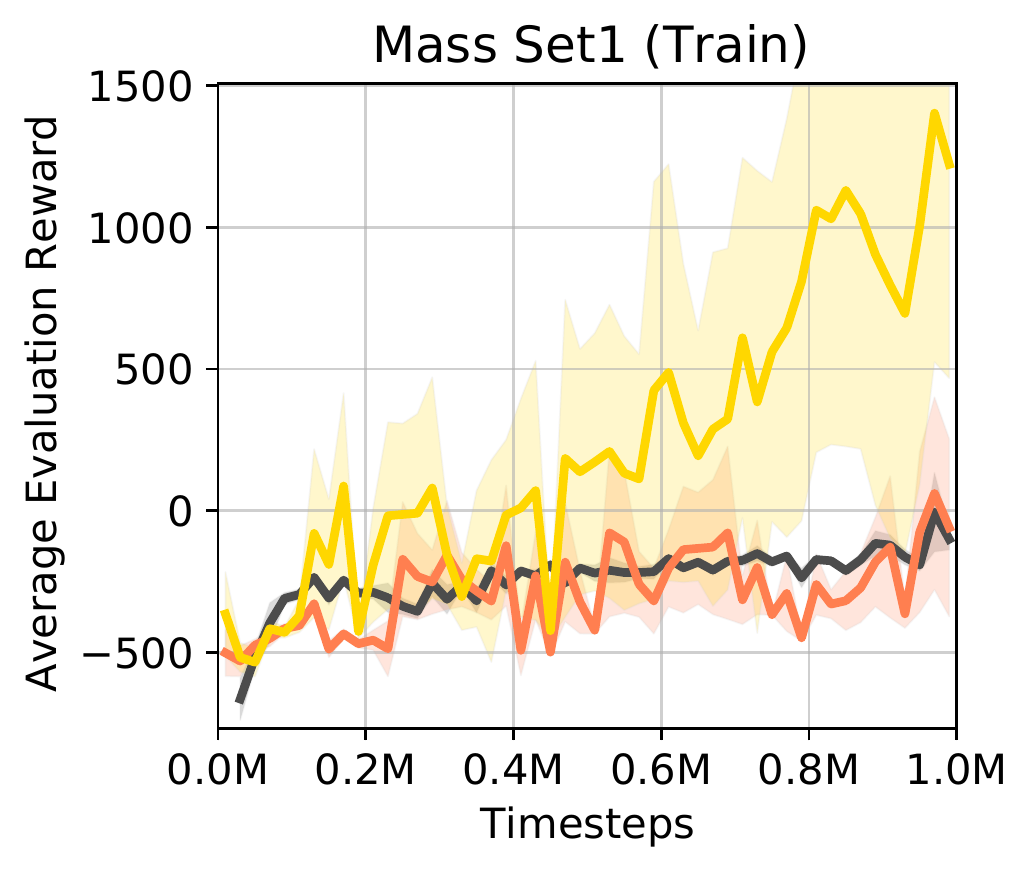}
\end{minipage}
\hfill
\begin{minipage}{0.16\textwidth}
    \centering
    \includegraphics[width=\linewidth]{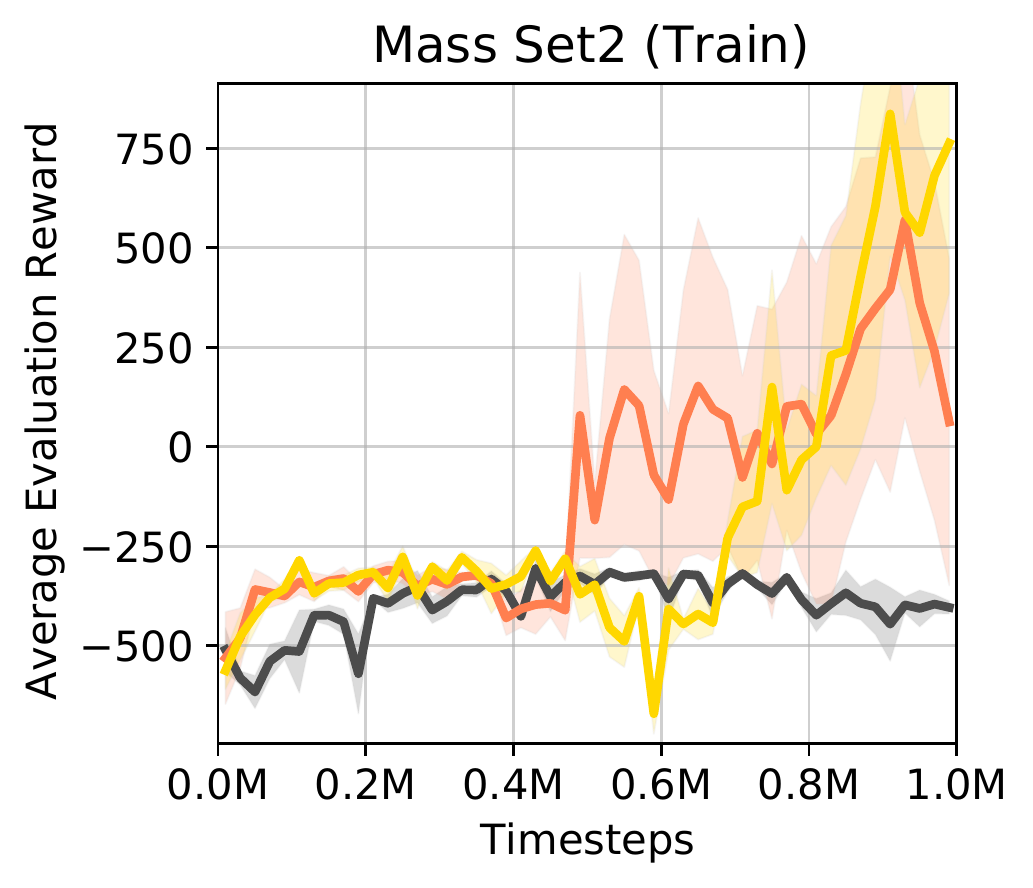}
\end{minipage}
\hfill
\begin{minipage}{0.16\textwidth}
    \centering
    \includegraphics[width=\linewidth]{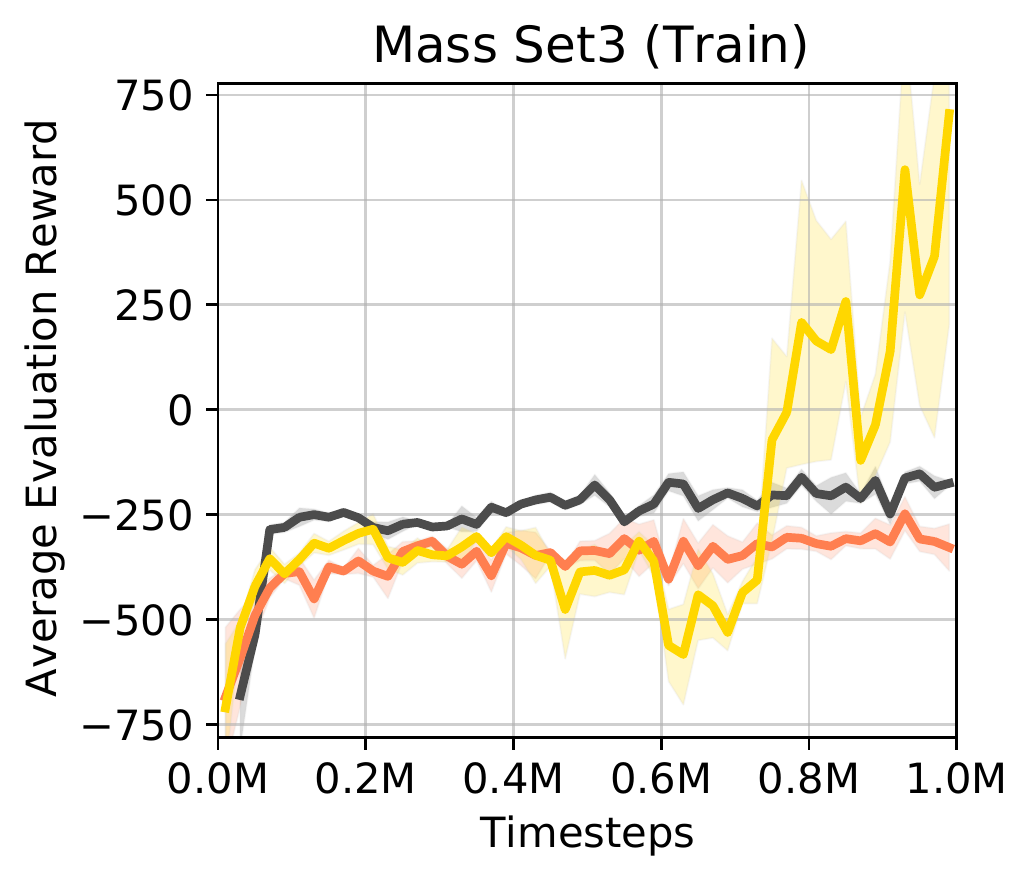}
\end{minipage}
\hfill
\begin{minipage}{0.16\textwidth}
    \centering
    \includegraphics[width=\linewidth]{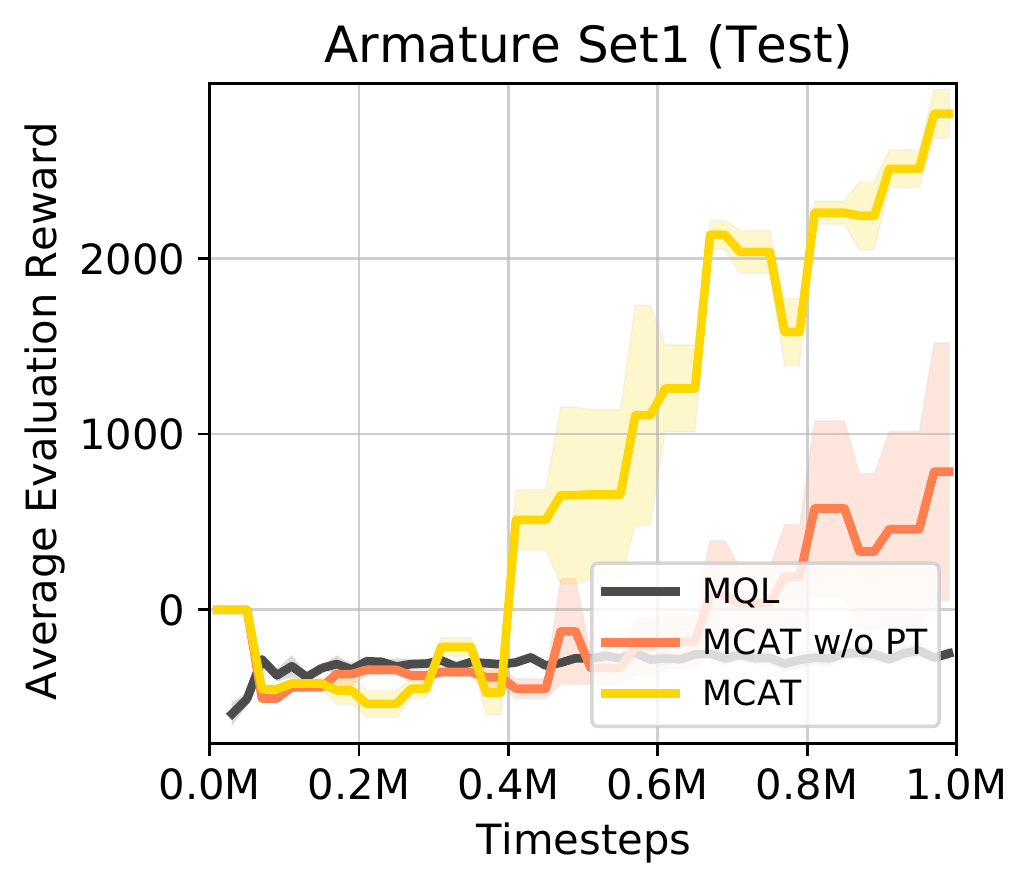}
\end{minipage}%
\hfill
\begin{minipage}{0.16\textwidth}
    \includegraphics[width=\linewidth]{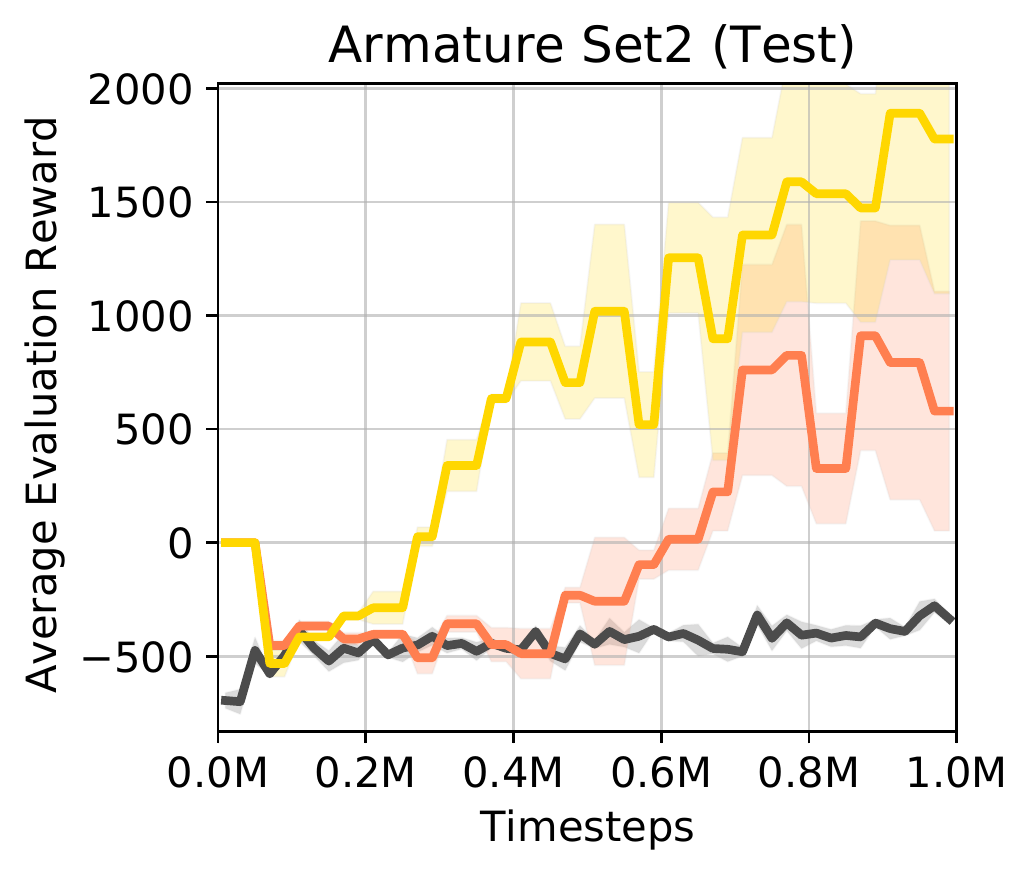}
\end{minipage}%
\hfill
\begin{minipage}{0.16\textwidth}
    \centering
    \includegraphics[width=\linewidth]{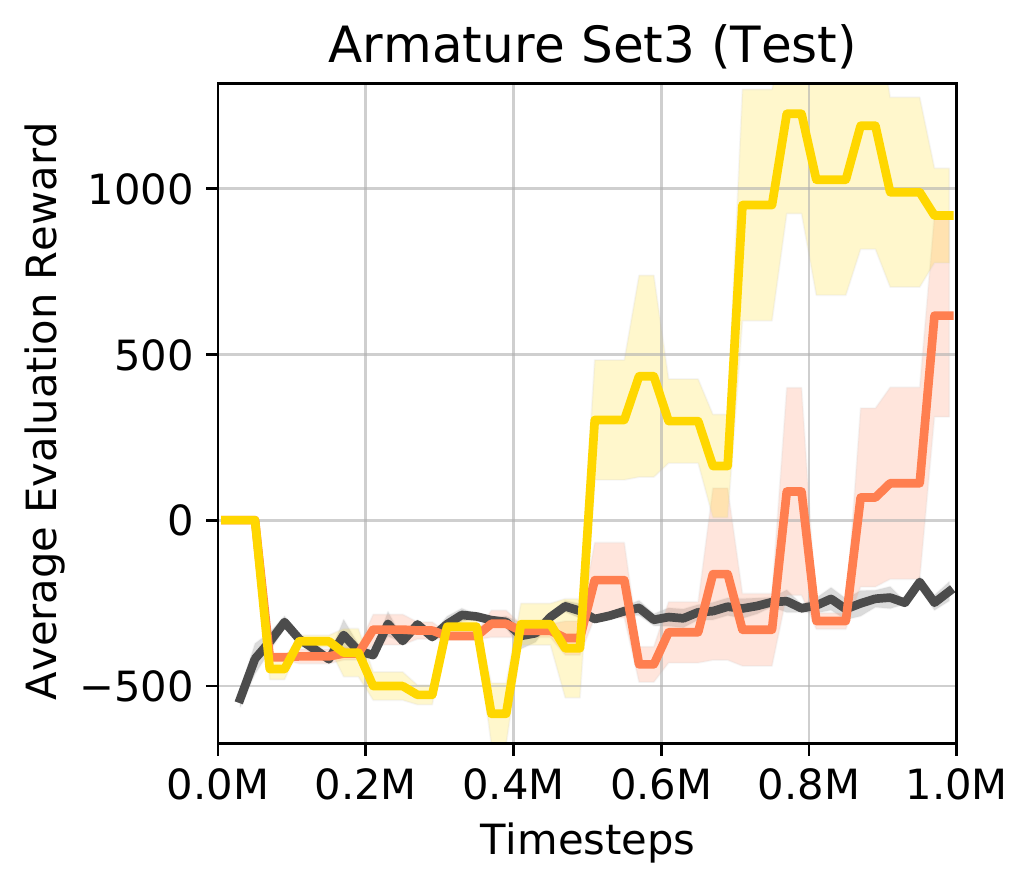}
\end{minipage}
\hfill
\begin{minipage}{0.16\textwidth}
    \centering
    \includegraphics[width=\linewidth]{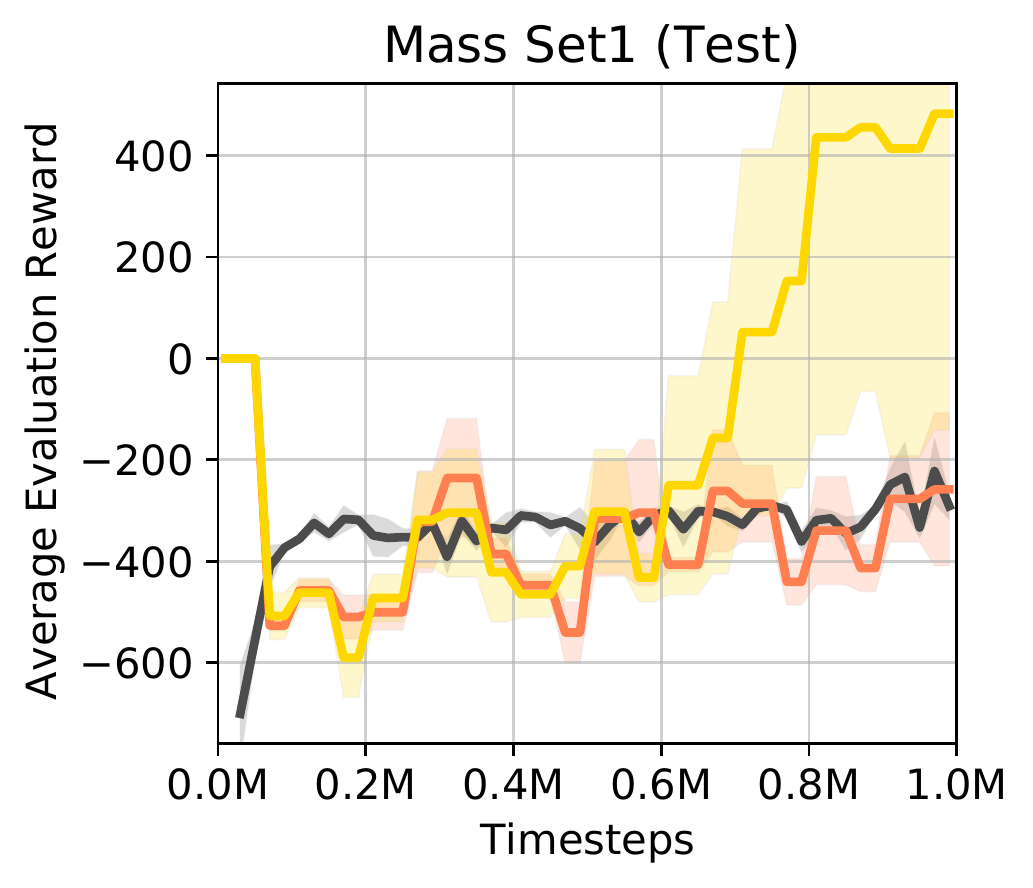}
\end{minipage}
\hfill
\begin{minipage}{0.16\textwidth}
    \centering
    \includegraphics[width=\linewidth]{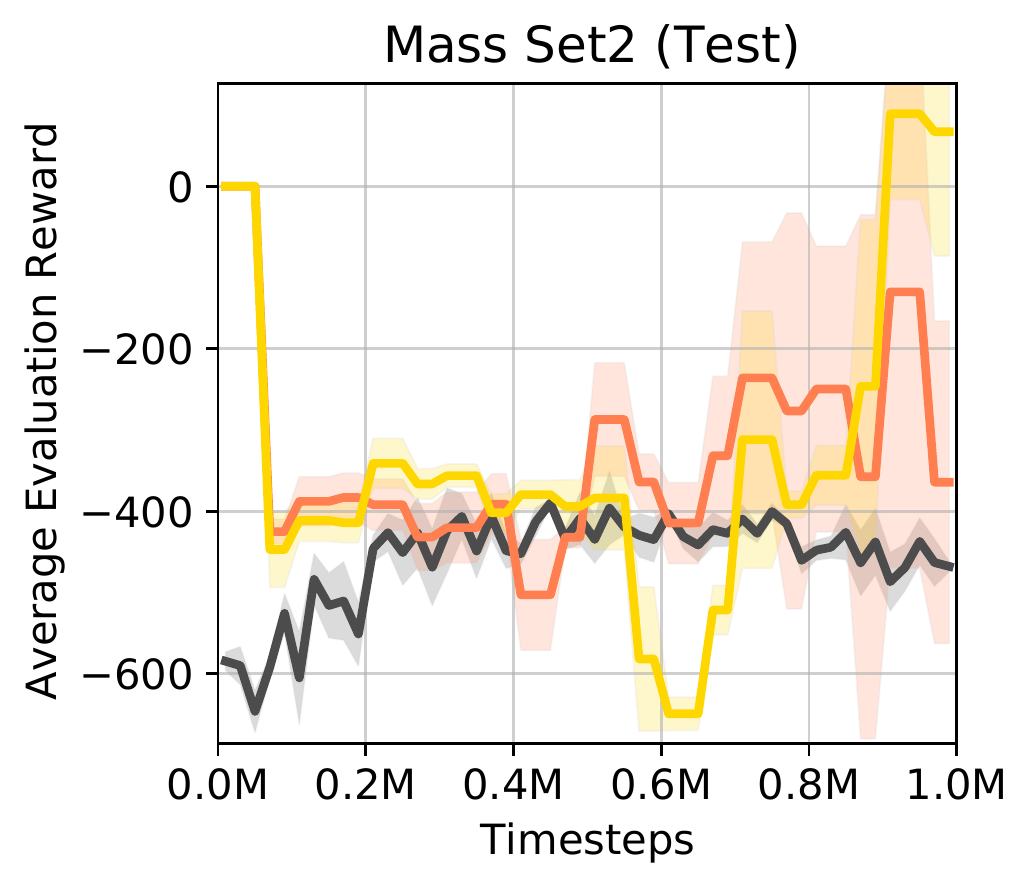}
\end{minipage}
\hfill
\begin{minipage}{0.16\textwidth}
    \centering
    \includegraphics[width=\linewidth]{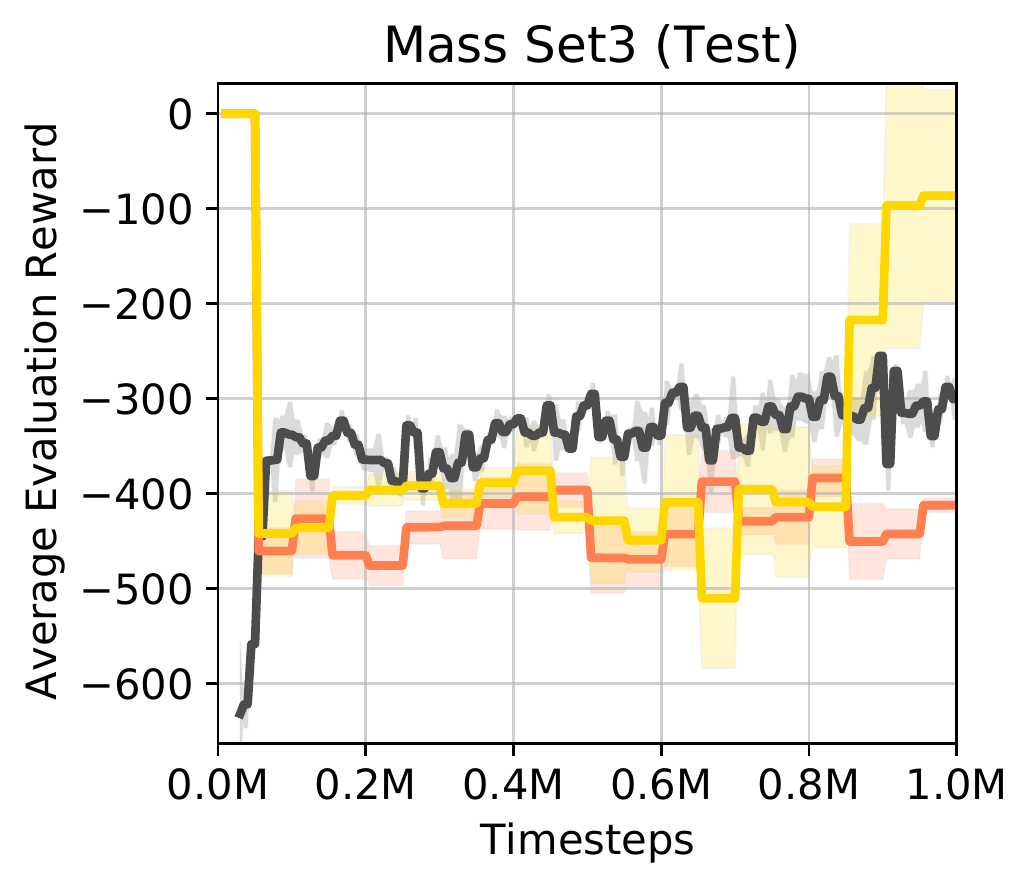}
\end{minipage}
\hfill

    \vspace*{-8pt}
    \caption{Learning curves of the average episode reward, averaged over 3 runs. The average episode reward and standard error are reported on training tasks and test tasks respectively.
}
    \label{fig:effect_distinct}
    \vspace*{-0.2in}
\end{figure*}

\subsection{Effect of Self-Imitation Learning}
\label{app:effect_sil}
We run experiments combining baseline methods with self-imitation learning (SIL) \citep{oh2018self}. SIL brings improvement to baselines but still ours shows significant advantages. In Tab.~\ref{tab:app_effec_sil}, MCAT w/o SIL compares favorably with the baseline methods. MCAT further improves the performance of MCAT w/o SIL, and MCAT outperform the variants of baseline methods with SIL.

\begin{table}[!ht]
\small
\setlength{\tabcolsep}{5pt}
\centering
\begin{tabular}{c|cccccccccc}
\toprule
            Setting 
            &\multicolumn{2}{c}{\begin{tabular}[c]{@{}c@{}}Hopper\\ Size\end{tabular}}            
            &\multicolumn{2}{c}{\begin{tabular}[c]{@{}c@{}}HalfCheetah\\ Armature\end{tabular}}
            &\multicolumn{2}{c}{\begin{tabular}[c]{@{}c@{}}HalfCheetah\\ Mass\end{tabular}}
            &\multicolumn{2}{c}{\begin{tabular}[c]{@{}c@{}}Ant\\ Damping\end{tabular}}
            &\multicolumn{2}{c}{\begin{tabular}[c]{@{}c@{}}Ant\\ Cripple\end{tabular}}\\
            \midrule
            
 Task & Train
 & Test
 & Train
 & Test 
 & Train
 & Test
 & Train
 & Test
 & Train
 & Test
\\ \midrule

\makecell{MQL\\\citep{fakoor2019meta}}
&\begin{tabular}[c]{@{}c@{}}1586.1\\[-3pt]\tiny{($\pm$ 321.4)}\end{tabular}
&\begin{tabular}[c]{@{}c@{}}1607.5\\[-3pt]\tiny{($\pm$ 327.5)}\end{tabular}
& \begin{tabular}[c]{@{}c@{}}-31.4\\[-3pt]\tiny{($\pm$ 243.5)}\end{tabular}
& \begin{tabular}[c]{@{}c@{}}-77.9\\[-3pt]\tiny{($\pm$ 214.3)}\end{tabular}
& \begin{tabular}[c]{@{}c@{}}-243.1\\[-3pt]\tiny{($\pm$ 69.8)}\end{tabular}
& \begin{tabular}[c]{@{}c@{}}-413.9\\[-3pt]\tiny{($\pm$ 11.1)}\end{tabular}
&\begin{tabular}[c]{@{}c@{}}93.8\\[-3pt]\tiny{($\pm$ 24.5)}\end{tabular}
&\begin{tabular}[c]{@{}c@{}}103.1\\[-3pt]\tiny{($\pm$ 35.7)}\end{tabular}
&\begin{tabular}[c]{@{}c@{}}17.4\\[-3pt]\tiny{($\pm$ 4.3)}\end{tabular}
&\begin{tabular}[c]{@{}c@{}}38.2\\[-3pt]\tiny{($\pm$ 4.0)}\end{tabular}
\\ 
\makecell{Distral\\\citep{teh2017distral}}
&\begin{tabular}[c]{@{}c@{}}1364.0\\[-3pt]\tiny{($\pm$ 216.3)}\end{tabular}
&\begin{tabular}[c]{@{}c@{}}1319.8\\[-3pt]\tiny{($\pm$ 162.2)}\end{tabular}
&\begin{tabular}[c]{@{}c@{}}774.7\\[-3pt]\tiny{($\pm$ 405.9})\end{tabular}
&\begin{tabular}[c]{@{}c@{}}566.9\\[-3pt]\tiny{($\pm$ 246.7)}\end{tabular}
&\begin{tabular}[c]{@{}c@{}}-54.3\\[-3pt]\tiny{($\pm$ 14.8)}\end{tabular}
&\begin{tabular}[c]{@{}c@{}}-29.5\\[-3pt]\tiny{($\pm$ 3.0)}\end{tabular}   
&\begin{tabular}[c]{@{}c@{}}123.0\\[-3pt]\tiny{($\pm$ 20.0)}\end{tabular}
&\begin{tabular}[c]{@{}c@{}}90.5\\[-3pt]\tiny{($\pm$ 28.4)}\end{tabular}
&\begin{tabular}[c]{@{}c@{}}-2.5\\[-3pt]\tiny{($\pm$ 1.7)}\end{tabular}
&\begin{tabular}[c]{@{}c@{}}-0.1\\[-3pt]\tiny{($\pm$ 0.7)}\end{tabular}
\\
\makecell{HiP-BMDP\\\citep{zhang2020robust}}
&\begin{tabular}[c]{@{}c@{}}1590.3\\[-3pt]\tiny{($\pm$ 238.7)}\end{tabular}
&\begin{tabular}[c]{@{}c@{}}1368.3\\[-3pt]\tiny{($\pm$ 150.7)}\end{tabular}
&\begin{tabular}[c]{@{}c@{}}-212.4\\[-3pt]\tiny{($\pm$ 52.2)}\end{tabular}
&\begin{tabular}[c]{@{}c@{}}-102.4\\[-3pt]\tiny{($\pm$ 24.9)}\end{tabular}
&\begin{tabular}[c]{@{}c@{}}-81.3\\[-3pt]\tiny{($\pm$ 8.31)}\end{tabular}
&\begin{tabular}[c]{@{}c@{}}-101.8\\[-3pt]\tiny{($\pm$ 29.6)}\end{tabular}
&\begin{tabular}[c]{@{}c@{}}15.0\\[-3pt]\tiny{($\pm$ 5.7)}\end{tabular}
&\begin{tabular}[c]{@{}c@{}}33.1\\[-3pt]\tiny{($\pm$ 6.0)}\end{tabular}
&\begin{tabular}[c]{@{}c@{}}12.7\\[-3pt]\tiny{($\pm$ 5.3)}\end{tabular}
&\begin{tabular}[c]{@{}c@{}}7.3\\[-3pt]\tiny{($\pm$ 2.6)}\end{tabular}
\\
MCAT w/o SIL
&\begin{tabular}[c]{@{}c@{}}1261.6\\[-3pt]\tiny{($\pm$ 55.2)}\end{tabular}
&\begin{tabular}[c]{@{}c@{}}1165.1\\[-3pt]\tiny{($\pm$ 8.6)}\end{tabular}
&\begin{tabular}[c]{@{}c@{}}1548.8\\[-3pt]\tiny{($\pm$ 418.4)}\end{tabular}
&\begin{tabular}[c]{@{}c@{}}883.8\\[-3pt]\tiny{($\pm$ 267.2)}\end{tabular}
&\begin{tabular}[c]{@{}c@{}}610.6\\[-3pt]\tiny{($\pm$ 482.3)}\end{tabular}
&\begin{tabular}[c]{@{}c@{}}119.0\\[-3pt]\tiny{($\pm$ 210.0)}\end{tabular}
&\begin{tabular}[c]{@{}c@{}}123.3\\[-3pt]\tiny{($\pm$ 25.8})\end{tabular}
&\begin{tabular}[c]{@{}c@{}}123.8\\[-3pt]\tiny{($\pm$ 26.9)}\end{tabular}
&\begin{tabular}[c]{@{}c@{}}97.3\\[-3pt]\tiny{($\pm$ 3.6)}\end{tabular}
&\begin{tabular}[c]{@{}c@{}}163.1\\[-3pt]\tiny{($\pm$ 26.1)}\end{tabular}
\\ \midrule
MQL+SIL
&\begin{tabular}[c]{@{}c@{}}1395.5\\[-3pt]\tiny{($\pm$ 60.8)}\end{tabular}
&\begin{tabular}[c]{@{}c@{}}1398.9\\[-3pt]\tiny{($\pm$ 85.9)}\end{tabular}
& \begin{tabular}[c]{@{}c@{}}1399.7\\[-3pt]\tiny{($\pm$ 350.2)}\end{tabular}
& \begin{tabular}[c]{@{}c@{}}743.5\\[-3pt]\tiny{($\pm$ 246.1)}\end{tabular}
& \begin{tabular}[c]{@{}c@{}}617.8\\[-3pt]\tiny{($\pm$ 133.1)}\end{tabular}
&\begin{tabular}[c]{@{}c@{}}-63.3\\[-3pt]\tiny{($\pm$ 158.3)}\end{tabular}
&\begin{tabular}[c]{@{}c@{}}153.0\\[-3pt]\tiny{($\pm$ 28.3)}\end{tabular}
&\begin{tabular}[c]{@{}c@{}}144.3\\[-3pt]\tiny{($\pm$ 28.1)}\end{tabular}
&\begin{tabular}[c]{@{}c@{}}13.9\\[-3pt]\tiny{($\pm$ 19.8)}\end{tabular}
&\begin{tabular}[c]{@{}c@{}}10.2\\[-3pt]\tiny{($\pm$ 2.3)}\end{tabular}
\\
Distral+SIL
&\begin{tabular}[c]{@{}c@{}}1090.2\\[-3pt]\tiny{($\pm$ 18.7)}\end{tabular}
&\begin{tabular}[c]{@{}c@{}}1090.9\\[-3pt]\tiny{($\pm$ 7.8)}\end{tabular}
&\begin{tabular}[c]{@{}c@{}}1014.1\\[-3pt]\tiny{($\pm$ 121.4)}\end{tabular}
&\begin{tabular}[c]{@{}c@{}}970.3\\[-3pt]\tiny{($\pm$ 164.2)}\end{tabular}
&\begin{tabular}[c]{@{}c@{}}809.7\\[-3pt]\tiny{($\pm$ 294.2)}\end{tabular}
&\begin{tabular}[c]{@{}c@{}}746.7\\[-3pt]\tiny{($\pm$ 120.5)}\end{tabular}
&\begin{tabular}[c]{@{}c@{}}174.3\\[-3pt]\tiny{($\pm$ 66.1)}\end{tabular}
&\begin{tabular}[c]{@{}c@{}}122.2\\[-3pt]\tiny{($\pm$ 44.5)}\end{tabular}
&\begin{tabular}[c]{@{}c@{}}107.7\\[-3pt]\tiny{($\pm$ 57.7)}\end{tabular}
&\begin{tabular}[c]{@{}c@{}}9.1\\[-3pt]\tiny{($\pm$ 5.0)}\end{tabular}
\\
HiP-BMDP+SIL 
&\begin{tabular}[c]{@{}c@{}}1573.3\\[-3pt]\tiny{($\pm$ 32.4)}\end{tabular}
&\begin{tabular}[c]{@{}c@{}}1589.5\\[-3pt]\tiny{($\pm$ 110.3)}\end{tabular}
&\begin{tabular}[c]{@{}c@{}}954.8\\[-3pt]\tiny{($\pm$ 192.3)}\end{tabular}
&\begin{tabular}[c]{@{}c@{}}713.3\\[-3pt]\tiny{($\pm$ 85.4)}\end{tabular}
&\begin{tabular}[c]{@{}c@{}}953.5\\[-3pt]\tiny{($\pm$ 61.2)}\end{tabular}
&\begin{tabular}[c]{@{}c@{}}506.6\\[-3pt]\tiny{($\pm$ 99.0)}\end{tabular}
&\begin{tabular}[c]{@{}c@{}}653.9\\[-3pt]\tiny{($\pm$ 262.6)}\end{tabular}
&\begin{tabular}[c]{@{}c@{}}523.6\\[-3pt]\tiny{($\pm$ 300.8)}\end{tabular}
&\begin{tabular}[c]{@{}c@{}}\textbf{170.9}\\[-3pt]\tiny{($\pm$ 68.7)}\end{tabular}
&\begin{tabular}[c]{@{}c@{}}215.4\\[-3pt]\tiny{($\pm$ 130.3)}\end{tabular}
\\
MCAT (Ours)
&\begin{tabular}[c]{@{}c@{}}\textbf{2278.8}\\[-3pt]\tiny{($\pm$ 426.2)}\end{tabular}
&\begin{tabular}[c]{@{}c@{}}\textbf{1914.8}\\[-3pt]\tiny{($\pm$ 373.2)}\end{tabular}
&\begin{tabular}[c]{@{}c@{}}\textbf{2267.2}\\[-3pt]\tiny{($\pm$ 579.2)}\end{tabular}
&\begin{tabular}[c]{@{}c@{}}\textbf{2071.5}\\[-3pt]\tiny{($\pm$ 447.4)}\end{tabular}  
&\begin{tabular}[c]{@{}c@{}}\textbf{2226.3}\\[-3pt]\tiny{($\pm$ 762.6)}\end{tabular}
&\begin{tabular}[c]{@{}c@{}}\textbf{1771.1}\\[-3pt]\tiny{($\pm$ 617.7)}\end{tabular}
&\begin{tabular}[c]{@{}c@{}}\textbf{1322.7}\\[-3pt]\tiny{($\pm$ 57.4)}\end{tabular}
&\begin{tabular}[c]{@{}c@{}}\textbf{1014.0}\\[-3pt]\tiny{($\pm$ 69.9)}\end{tabular}
&\begin{tabular}[c]{@{}c@{}}110.4\\[-3pt]\tiny{($\pm$ 30.5)}\end{tabular}
&\begin{tabular}[c]{@{}c@{}}\textbf{281.6}\\[-3pt]\tiny{($\pm$ 65.6)}\end{tabular}
\\ \bottomrule
\end{tabular}
\caption{Mean ($\pm$ standard error) of episode rewards on the training and test tasks, at 2M timesteps.}
\label{tab:app_effec_sil}
\end{table}

On one task, SIL boosts the performance by exploiting the successful past experiences.
But on multiple tasks, enhancing performance on one task with luckily collected good experiences may not benefit the exploration on other tasks.
If other tasks have never seen the good performance before, SIL might even prevent the exploration on these tasks because the shared policy is trained to overfit highly-rewarding transitions on the one task with good past trajectories.
We observe that after combining with SIL, the baselines show even more severe performance imbalance among multiple training tasks.
Therefore, we believe the idea of policy transfer is complementary to SIL, in that it makes each task benefit from good policies on any other tasks.

\subsection{Effect of Contrastive Loss}
\label{app:effect_cont}
To show the contrastive loss indeed helps policy transfer, we compare our method with and without the contrastive loss $\mathcal{L}_{cont}$ ( Equation~\ref{eq:cont}).
In Fig.~\ref{fig:contrastive}, one can observe that $\smash{\mathcal{L}_{cont}}$ helps cluster the embeddings of samples from the same task and separate the embeddings from different tasks.
We note that the tasks $\mathcal{T}^{(1)}, \mathcal{T}^{(2)}, \mathcal{T}^{(3)}, \mathcal{T}^{(4)}, \mathcal{T}^{(5)}$ have different values of the physics parameter armature $0.2, 0.3, 0.4, 0.5, 0.6$.
As mentioned in Sec.~\ref{sec:context}, the learned context embeddings maintain the similarity between tasks.
In Fig.~\ref{fig:contrastive}, the context embeddings of two tasks are closer if their values of armature is closer.

\begin{figure*}[!ht]
\vspace*{-0.1in}
\centering
\begin{subfigure}{.3\textwidth}
  \centering
  \includegraphics[width=0.8\linewidth]{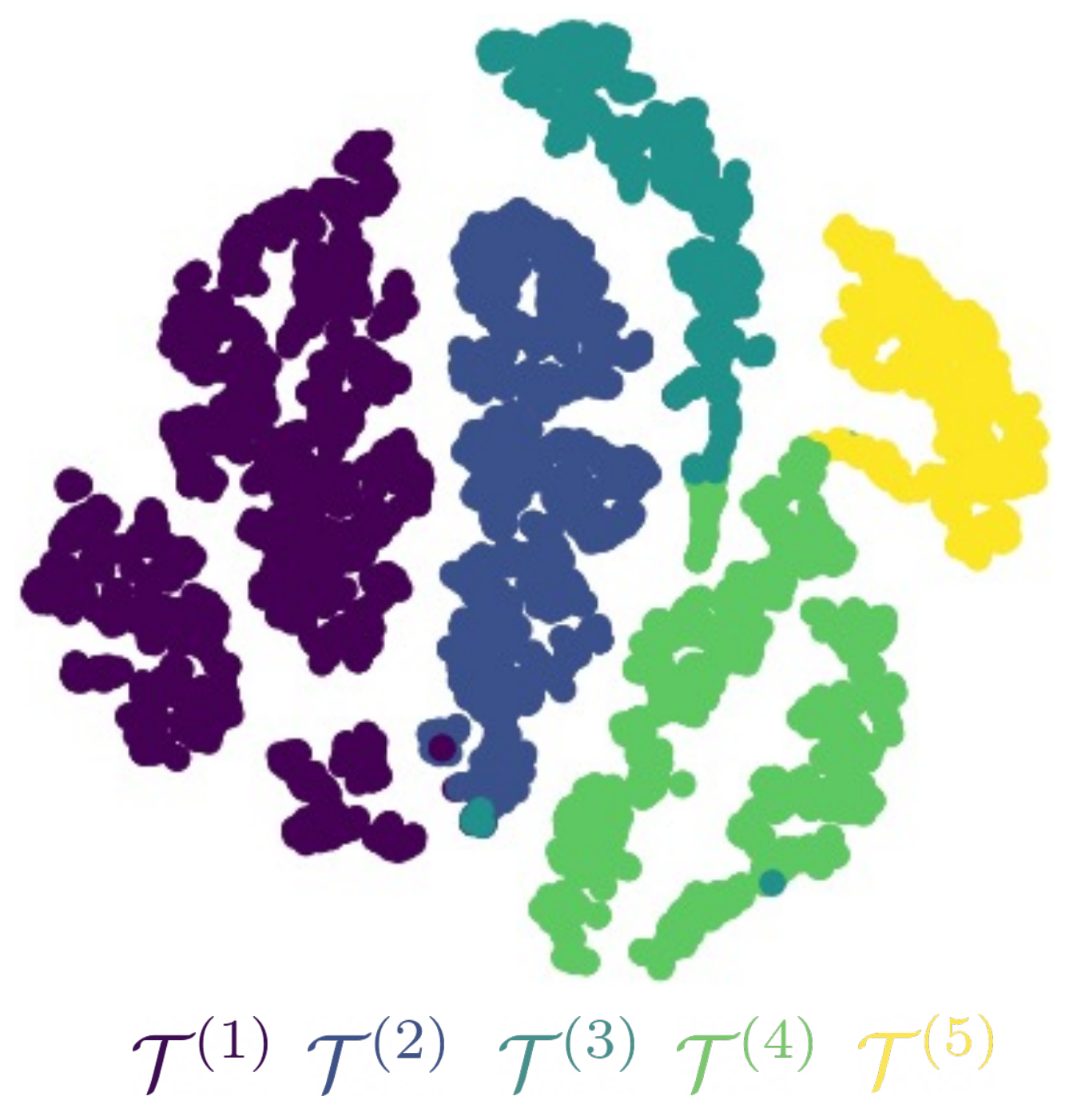}
  \vspace*{-8pt}
  \captionsetup{font=tiny, labelfont=tiny}
  \caption{Context embeddings of random samples in MCAT}
  \label{fig:with_cont}
\end{subfigure}%
\begin{subfigure}{.3\textwidth}
  \centering
  \includegraphics[width=0.8\linewidth]{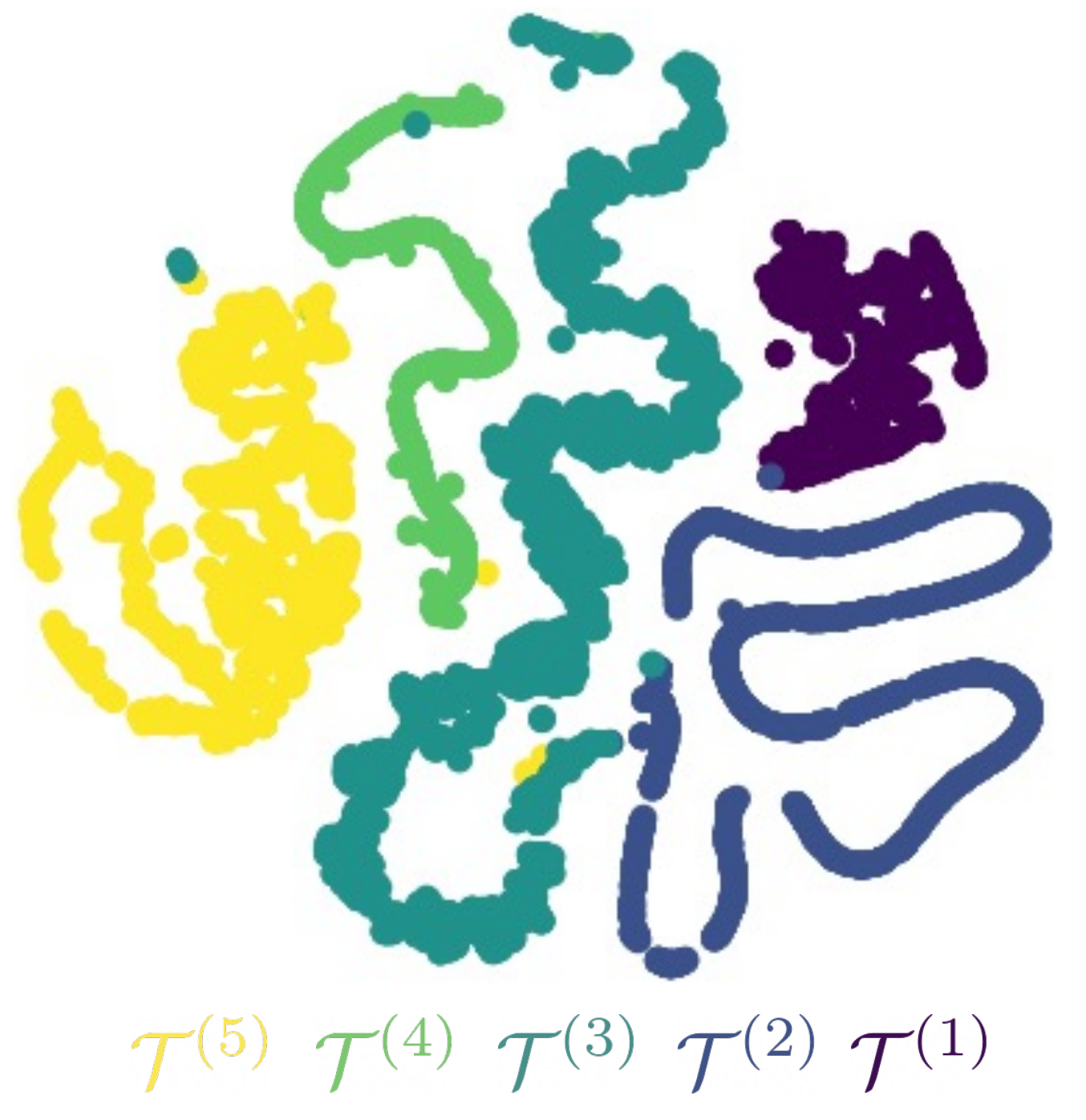}
  \vspace*{-8pt}
  \captionsetup{font=tiny, labelfont=tiny}
  \caption{Context embeddings in MCAT without contrastive loss}
  \label{fig:without_cont}
\end{subfigure}%
\vspace*{-8pt}
\caption{t-SNE visualization\citep{van2008visualizing} of the context embeddings learned via our method with and without contrastive loss. 
Different colors correspond to different training tasks.}
\label{fig:contrastive}
\vspace*{-0.1in}
\end{figure*}

MCAT shows superior performance to the variant without the contrastive loss.
Here we show the learning curves on training and test tasks separately(Fig.~\ref{fig:effect_contrastive}).
\begin{figure*}[!ht]
\centering
\begin{minipage}{0.2\textwidth}
    \includegraphics[width=\linewidth]{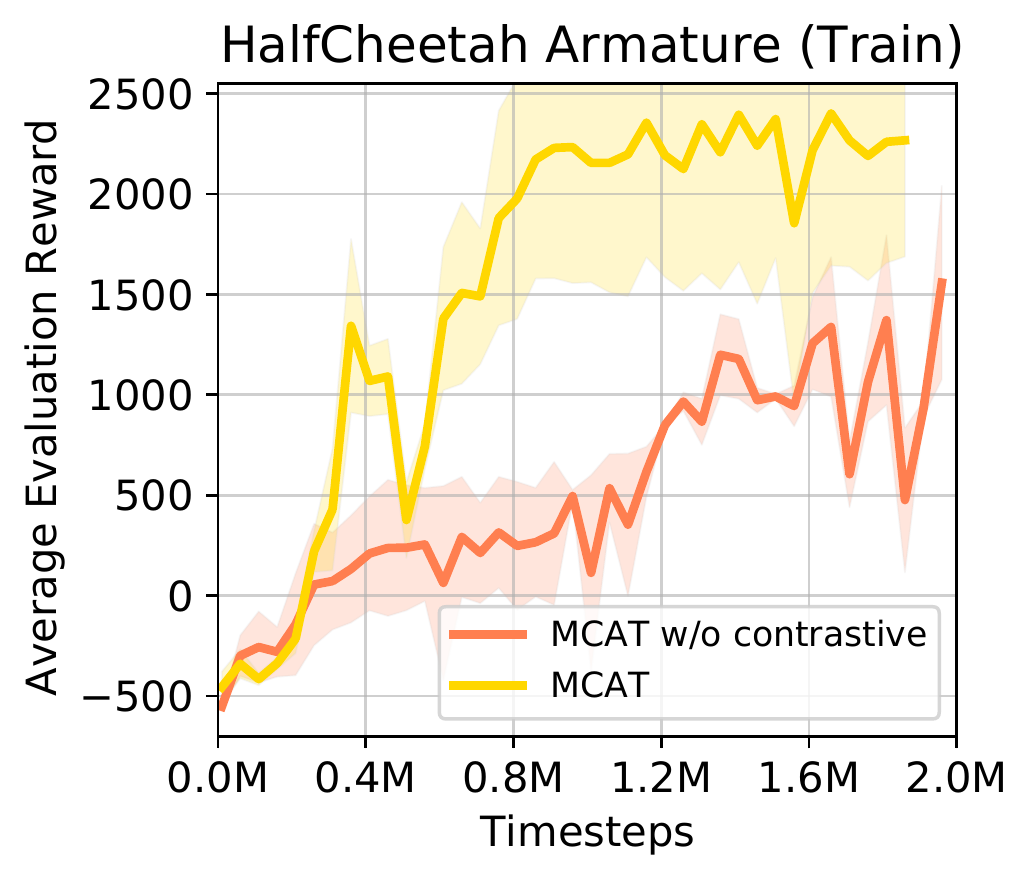}
\end{minipage}%
\hspace*{2pt}
\begin{minipage}{0.2\textwidth}
    \includegraphics[width=\linewidth]{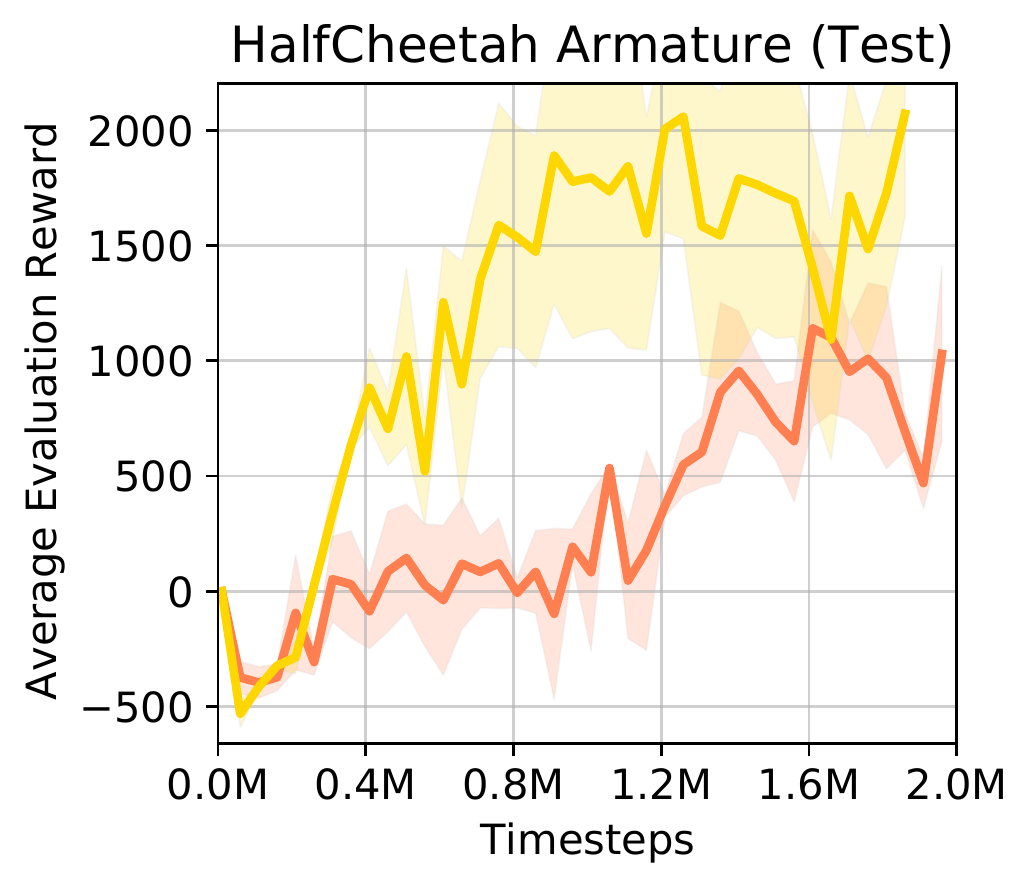}
\end{minipage}%
\hspace*{2pt}
\begin{minipage}{0.2\textwidth}
    \centering
    \includegraphics[width=\linewidth]{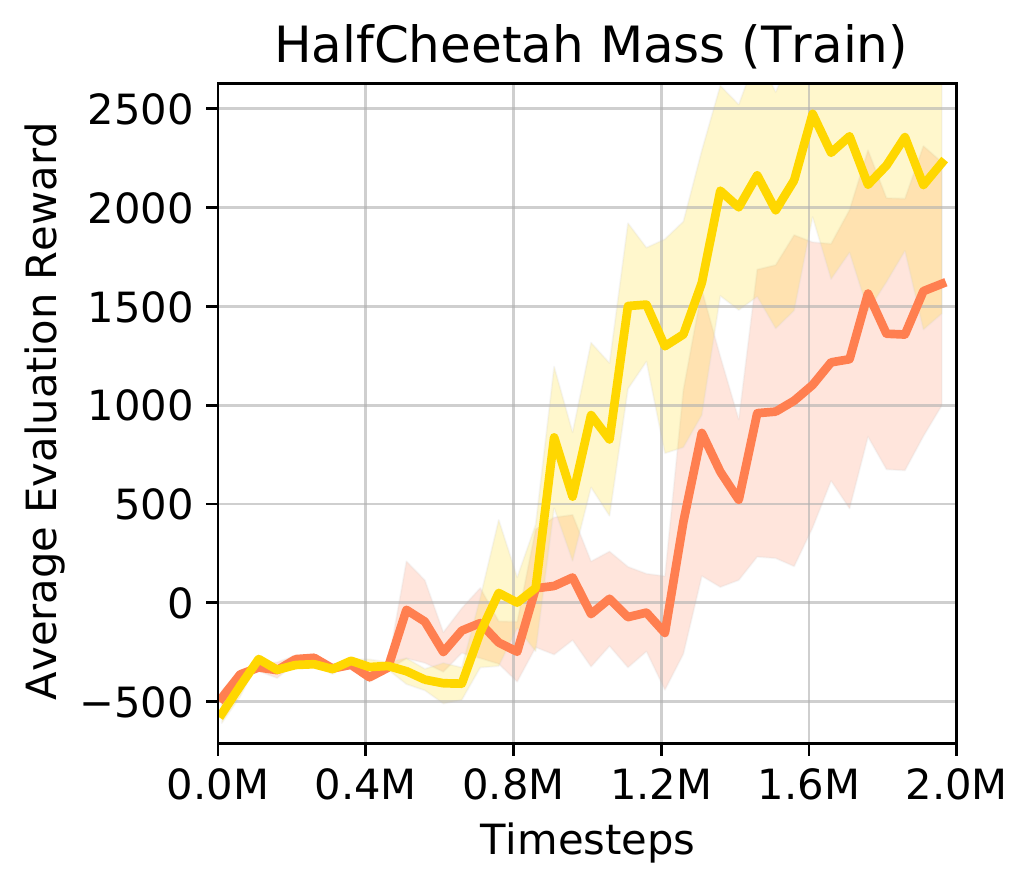}
\end{minipage}
\hspace*{2pt}
\begin{minipage}{0.2\textwidth}
    \centering
    \includegraphics[width=\linewidth]{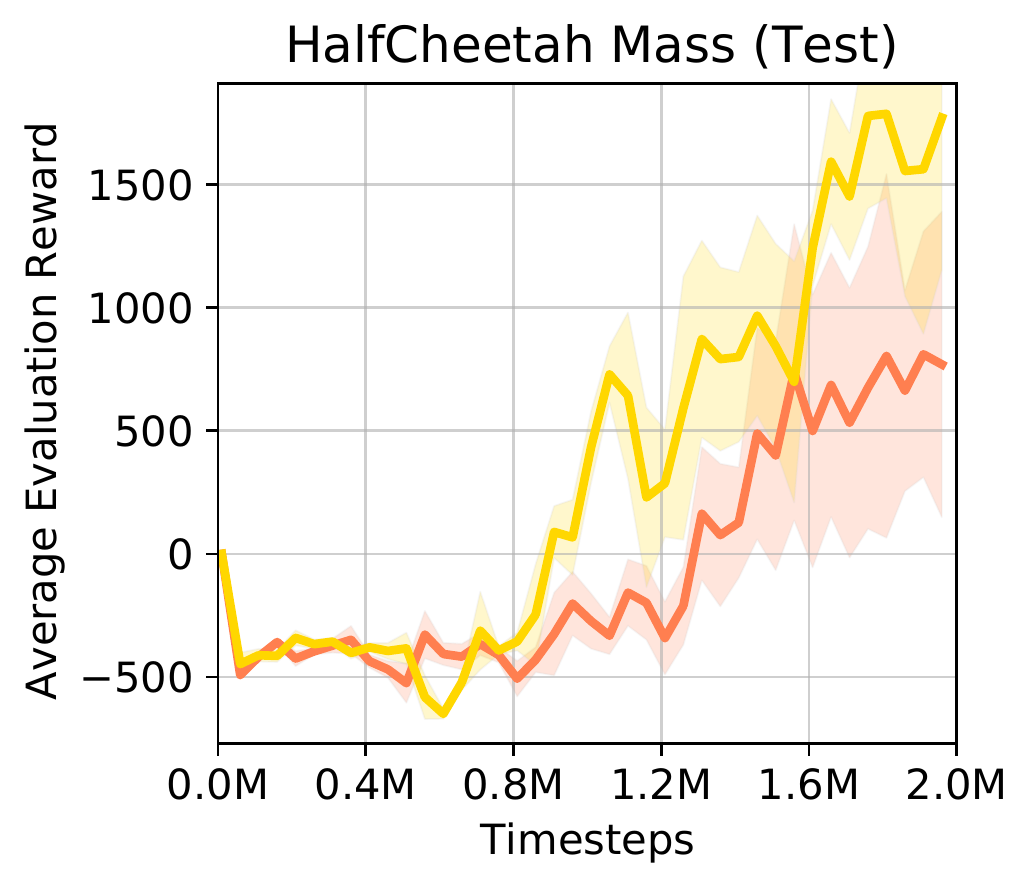}
\end{minipage}
\hspace*{2pt}

    \vspace*{-8pt}
    \caption{Learning curves of the average episode reward, averaged over 3 runs. The average episode reward and standard error are reported on training tasks and test tasks respectively.
}
    \label{fig:effect_contrastive}
    \vspace*{-0.2in}
\end{figure*}

\subsection{Design Choice of Action Translator}
\label{app:their_at}
We add this experimental comparison with the action translator by \citep{zhang2020learning}.
To learn a shared policy solving multiple tasks, we combine the context-based TD3 algorithm, self-imitation learning, and policy transfer with their action translator.
Using their action translator underperforms ours. 
The main reason is that, with changing datasets and policies, their action translator may be harder to tune because there are more moving components (i.e. another action translator, a discriminator) and more loss terms to be balanced (i.e. domain cycle-consistency loss, adversarial loss).

\begin{table}[h!]
\centering
\begin{tabular}{c|cc|cc}
\toprule
Setting & \multicolumn{2}{c|}{\begin{tabular}[c]{@{}c@{}}HalfCheetah Armature\end{tabular}}  & \multicolumn{2}{c}{\begin{tabular}[c]{@{}c@{}}HalfCheetah Mass\end{tabular}}  \\ \midrule
Tasks & Training & Test & Training & Test                                  \\ \midrule
MCAT            & \begin{tabular}[c]{@{}c@{}}\textbf{2267.2} \tiny{($\pm$ 579.2)}\end{tabular}                                   & \begin{tabular}[c]{@{}c@{}}\textbf{2071.5} \tiny{($\pm$ 447.4)}\end{tabular}                                 &
\begin{tabular}[c]{@{}c@{}}\textbf{2226.3}
\tiny{($\pm$ 762.6)}\end{tabular}
                      &
\begin{tabular}[c]{@{}c@{}}\textbf{1771.1}\tiny{($\pm$ 617.7)}\end{tabular}  \\ \midrule
\begin{tabular}[c]{@{}c@{}}MCAT with \citep{zhang2020learning}\\ action translator\end{tabular} & \begin{tabular}[c]{@{}c@{}}2255.2 \tiny{($\pm$ 644.4)}\end{tabular}                                   & \begin{tabular}[c]{@{}c@{}}1664.8 \tiny{($\pm$ 660.8)}\end{tabular}   &\begin{tabular}[c]{@{}c@{}}1185.8 \tiny{($\pm$ 798.0)}\end{tabular}  &\begin{tabular}[c]{@{}c@{}}684.7\tiny{($\pm$ 759.0)}\end{tabular}                                       \\ \bottomrule
\end{tabular}
\caption{Mean ($\pm$ standard error) of episode rewards on training and test tasks at 2M timesteps.}
\end{table}

\clearpage
\section{Extension of Policy Transfer}
\label{app:extension}

As clarified in Sec.~\ref{sec:problem}, in this work, we mainly focus on tasks with the same state space, action space, reward function but varying dynamics. However, we note that our proposed method of learning action translator may be extended to tackle the challenge of policy transfer in more general cases, such as (1) Tasks differing in reward function, (2) Tasks differing in state space and action space.
In this section, we establish the theory and method in details to extend our policy transfer approach, as a supplement to Sec.~\ref{sec:discussion}.

\subsection{Theoretical Analysis}
Intuitively, on two general tasks, we aim to discover correspondent state-action pairs achieving the same reward and transiting to correspondent next states.
With the state and action correspondence, the behavior of good source policy can be ``replicated'' in the target task and the high value of the good source policy can be maintained by the transferred policy on the target task.
Inspired by this idea, we extend our theory in Sec.~\ref{sec:theory} and Appendix~\ref{app:proof}.

We first define a binary relation for states to describe the equivalent states on two MDPs (Definition 2) and define an invertible function to capture the state equivalence relation (Definition 3). Building upon the notion of state equivalence, we derive the upper bound of value difference between policies on two MDPs (Theorem 2). Finally, we reach a proposition for the upper bound of value difference (Proposition 2) to explain that our objective function in learning action translator can be extended to bound the value difference between the source and transferred policy.

\begin{restatable}{definition}{state_action_equivalence}
\label{def:state_action_binary_relation}
Given two MDPs $\mathcal{T}^{(i)}=\{\mathcal{S}^{(i)}, \mathcal{A}^{(i)}, p^{(i)}, r^{(i)}, \gamma, \rho^{(i)}_0\}$ and  $\mathcal{T}^{(j)}=\{\mathcal{S}^{(j)}, \mathcal{A}^{(j)}, p^{(j)}, r^{(j)}, \gamma, \rho^{(j)}_0\}$,
we define a binary relation $B \in \mathcal{S}^{(i)} \times \mathcal{S}^{(j)}$ called \textbf{state equivalence relation}.
Let $s'^{(i)}$ denote the next state following state $s^{(i)}$, and $s'^{(j)}$ denote the next state following state $s^{(j)}$.
For states $s^{(i)}\in \mathcal{S}^{(i)}$, $s^{(j)}\in \mathcal{S}^{(j)}$, we have $(s^{(i)}, s^{(j)}) \in B$ (i.e. $s^{(i)} B s^{(j)}$) if for any $a^{(i)}\in \mathcal{A}^{(i)}$ there exists $a^{(j)}\in \mathcal{A}^{(j)}$ satisfying the following conditions:

\begin{equation*}
r^{(i)}(s^{(i)}, a^{(i)}) = r^{(j)}(s^{(j)}, a^{(j)})
\end{equation*}

\begin{equation*}
\forall s'^{(i)} \in S^{(i)}, \exists s'^{(j)} \in \mathcal{S}^{(j)} \text{ s.t. } p^{(i)}(s'^{(i)}|s^{(i)}, a^{(i)}) = p^{(j)}(s'^{(j)}|s^{(i)}, a^{(j)}) \text{ and } s'^{(i)}Bs'^{(j)}
\end{equation*}

\end{restatable}

We call the state $\smash{s^{(i)}}$ and $\smash{s^{(j)}}$ are correspondent/equivalent when $\smash{(s^{(i)}, s^{(j)}) \in B}$. Also, in this case, the action $\smash{a^{(i)}}$ for state $\smash{s^{(i)}}$ on the MDP $\smash{\mathcal{T}^{(i)}}$ is equivalent to the action $\smash{a^{(j)}}$ for state $\smash{s^{(j)}}$ on the MDP $\smash{\mathcal{T}^{(j)}}$.

This definition is related to stochastic bisimulation relation in \citep{ferns2004metrics, zhang2020invariant, zhang2020robust}. Unlike these prior works about state bisimulation, we allow the equivalent actions $\smash{a^{(j)} \neq a^{(i)}}$.
So action $a$ on the task $\smash{\mathcal{T}^{(i)}}$ might not be equivalent to $a$ on the task $\smash{\mathcal{T}^{(j)}}$, and hence we need to involve action translator in learning of both the state correspondence and action correspondence.

Drawing upon Definition~\ref{def:state_action_binary_relation}, we define a one-to-one mapping to identify the equivalent state across two spaces $\mathcal{S}^{(i)}$ and $\mathcal{S}^{(j)}$.

\begin{restatable}{definition}{state_translator}
\label{def:invertible_function}
Given two MDPs $\mathcal{T}^{(i)}=\{\mathcal{S}^{(i)}, \mathcal{A}^{(i)}, p^{(i)}, r^{(i)}, \gamma, \rho^{(i)}_0\}$ and  $\mathcal{T}^{(j)}=\{\mathcal{S}^{(j)}, \mathcal{A}^{(j)}, p^{(j)}, r^{(j)}, \gamma, \rho^{(j)}_0\}$ with state equivalence relation $B$,
we consider subsets $\mathcal{S}_B^{(i)} \subset \mathcal{S}^{(i)}$ and $\mathcal{S}_B^{(j)} \subset \mathcal{S}^{(j)}$ satisfying:
$\forall s^{(i)}\in \mathcal{S}^{(i)}_B, \exists s^{(j)}\in \mathcal{S}^{(j)}_B \text{ s.t. } (s^{(i)}, s^{(j)}) \in B$.
We define a invertible function $G: \mathcal{S}_B^{(i)} \rightarrow \mathcal{S}_B^{(j)}$ called \textbf{state translator function}, satisfying: $(s^{(i)}, G(s^{(i)})) \in B$.
\end{restatable}

Based on Defintion~\ref{def:state_action_binary_relation} and \ref{def:invertible_function}, given two correspondent states $s^{(i)} \in S_B^{(i)}$ and $s^{(j)} \in S_B^{(j)}$, we can derive the upper bound for the value difference between $V^{\pi^{(i)}}(s^{(i)}, \mathcal{T}^{(i)})$ and $V^{\pi^{(j)}}(s^{(j)}, \mathcal{T}^{(j)})$.

\begin{restatable}{thm}{state_action_boundpolicytransfer}
\label{thm:state_action_bound_policy}
$\mathcal{T}^{(i)}=\{\mathcal{S}^{(i)}, \mathcal{A}^{(j)}, p^{(i)}, r^{(i)}, \gamma, \rho^{(i)}_0\}$ and  $\mathcal{T}^{(j)}=\{\mathcal{S}^{(j)}, \mathcal{A}^{(j)}, p^{(j)}, r^{(j)}, \gamma, \rho^{(j)}_0\}$ are two MDPs sampled from the distribution of tasks $p(\mathcal{T})$.
$\pi^{(i)}$ is a deterministic policy on $\mathcal{T}^{(i)}$ and $\pi^{(j)}$ is a deterministic policy on $\mathcal{T}^{(j)}$. 
We assume there exist state equivalence relation $B\in \mathcal{S}^{(i)} \times \mathcal{S}^{(j)} $ and a state translator function $G$ defining a one-to-one mapping from $\mathcal{S}^{(i)}_B$ to $\mathcal{S}^{(j)}_B$.
Let $M=\sup_{s^{(i)}\in \mathcal{S}^{(i)}} |V^{\pi^{(i)}}(s^{(i)}, \mathcal{T}^{(i)})|$ and $d = \sup_{s^{(i)}\in\mathcal{S}_B^{(i)}} \left[ |r^{(i)}(s^{(i)}, \pi^{(i)}(s)) - r^{(j)}(s^{(j)}, \pi^{(j)}(s))| +2\gamma M D_{TV}(p^{(i)}(\cdot|s^{(i)},\pi^{(i)}(s^{(i)})), p^{(j)}(G(\cdot)|s^{(j)},\pi^{(j)}(s^{(j)})))\right]$.
Then $\forall s^{(i)} \in \mathcal{S}_B^{(i)}, s^{(j)} = G(s^{(i)})$, we have

\begin{equation*}
\left|V^{\pi^{(i)}}(s^{(i)}, \mathcal{T}^{(i)}) - V^{\pi^{(j)}}(s^{(j)}, \mathcal{T}^{(j)})\right| \leq \frac{d}{1-\gamma}
\end{equation*}

\end{restatable}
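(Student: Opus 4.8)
The plan is to mirror the telescoping argument used for Theorem~\ref{thm:bound_policy}, but to carry the state correspondence through the translator $G$ at every step of the recursion. Fix any correspondent pair $s^{(i)}\in\mathcal{S}_B^{(i)}$ and $s^{(j)}=G(s^{(i)})$, and write $a^{(i)}=\pi^{(i)}(s^{(i)})$, $a^{(j)}=\pi^{(j)}(s^{(j)})$. First I would expand both values by one Bellman step, $V^{\pi^{(i)}}(s^{(i)},\mathcal{T}^{(i)})=r^{(i)}(s^{(i)},a^{(i)})+\gamma\sum_{s'^{(i)}}p^{(i)}(s'^{(i)}|s^{(i)},a^{(i)})V^{\pi^{(i)}}(s'^{(i)},\mathcal{T}^{(i)})$ and analogously for $\mathcal{T}^{(j)}$, and subtract.

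The key manipulation is to re-index the target-task expectation through $G$. Because $G$ is a one-to-one mapping on $\mathcal{S}_B^{(j)}$, the quantity $p^{(j)}(G(\cdot)|s^{(j)},a^{(j)})$ is a well-defined distribution over next states $s'^{(i)}\in\mathcal{S}_B^{(i)}$, and I would add and subtract the intermediate term $\gamma\sum_{s'^{(i)}}p^{(i)}(s'^{(i)}|s^{(i)},a^{(i)})V^{\pi^{(j)}}(G(s'^{(i)}),\mathcal{T}^{(j)})$. Taking absolute values and applying the triangle inequality splits the gap into three pieces: the reward difference $|r^{(i)}(s^{(i)},a^{(i)})-r^{(j)}(s^{(j)},a^{(j)})|$; a distribution-mismatch piece $\gamma|\sum_{s'^{(i)}}[p^{(i)}(s'^{(i)}|s^{(i)},a^{(i)})-p^{(j)}(G(s'^{(i)})|s^{(j)},a^{(j)})]V^{\pi^{(j)}}(G(s'^{(i)}),\mathcal{T}^{(j)})|$, which is at most $2\gamma M\,D_{TV}(p^{(i)}(\cdot|s^{(i)},a^{(i)}),p^{(j)}(G(\cdot)|s^{(j)},a^{(j)}))$ by the countable-set total-variation bound already used in Theorem~\ref{thm:bound_policy} together with the definition of $M$; and a recursive piece $\gamma\sum_{s'^{(i)}}p^{(i)}(s'^{(i)}|s^{(i)},a^{(i)})[V^{\pi^{(i)}}(s'^{(i)},\mathcal{T}^{(i)})-V^{\pi^{(j)}}(G(s'^{(i)}),\mathcal{T}^{(j)})]$. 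The first two pieces are together bounded by $d$.

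For the recursion to close I would invoke Definition~\ref{def:state_action_binary_relation}: since $s^{(i)}Bs^{(j)}$, for the action $a^{(i)}$ every next state $s'^{(i)}$ carrying positive probability under $p^{(i)}(\cdot|s^{(i)},a^{(i)})$ is again correspondent, so $s'^{(i)}\in\mathcal{S}_B^{(i)}$ and $(s'^{(i)},G(s'^{(i)}))\in B$. Hence the recursive piece only involves value gaps evaluated at correspondent pairs and is bounded by $\gamma\sup_{s'^{(i)}\in\mathcal{S}_B^{(i)}}|V^{\pi^{(i)}}(s'^{(i)},\mathcal{T}^{(i)})-V^{\pi^{(j)}}(G(s'^{(i)}),\mathcal{T}^{(j)})|$. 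Iterating this one-step inequality and summing the resulting geometric series $d(1+\gamma+\gamma^2+\cdots)=\frac{d}{1-\gamma}$ yields the claim.

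I expect the main obstacle to be the bookkeeping around $G$ rather than any new analytic idea: one must check that the pushforward $p^{(j)}(G(\cdot)|s^{(j)},a^{(j)})$ really is the distribution appearing in the definition of $d$ (which uses the bijectivity of $G$ and the matching-probability clause of Definition~\ref{def:state_action_binary_relation}), and that the support of each one-step transition stays inside $\mathcal{S}_B^{(i)}$ so that the supremum in the recursion ranges over correspondent pairs only. Once these closure and reparametrization facts are nailed down, the remainder is the same geometric-series collapse as in Theorem~\ref{thm:bound_policy}.
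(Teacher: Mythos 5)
Your skeleton is the same as the paper's: one-step Bellman expansion, re-indexing of the task-$j$ expectation through $G$ (using bijectivity of $G$ and closure of $\mathcal{S}_B^{(i)}$, $\mathcal{S}_B^{(j)}$ under one-step transitions per Definition~\ref{def:state_action_binary_relation}), a total-variation bound on the distribution-mismatch term, and a geometric-series recursion over correspondent pairs. However, there is one concrete defect: your choice of cross term. You add and subtract $\gamma\sum_{s'^{(i)}}p^{(i)}(s'^{(i)}|s^{(i)},a^{(i)})\,V^{\pi^{(j)}}(G(s'^{(i)}),\mathcal{T}^{(j)})$, the literal mirror of Theorem~\ref{thm:bound_policy}. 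With that choice the mismatch piece is $\gamma\sum_{s'^{(i)}}\bigl[p^{(i)}(s'^{(i)}|s^{(i)},a^{(i)})-p^{(j)}(G(s'^{(i)})|s^{(j)},a^{(j)})\bigr]V^{\pi^{(j)}}(G(s'^{(i)}),\mathcal{T}^{(j)})$, so the countable-set TV bound produces the factor $\sup_{s^{(j)}}|V^{\pi^{(j)}}(s^{(j)},\mathcal{T}^{(j)})|$ --- not the theorem's $M$, which is defined as $\sup_{s^{(i)}}|V^{\pi^{(i)}}(s^{(i)},\mathcal{T}^{(i)})|$. (In Theorem~\ref{thm:bound_policy} this distinction is invisible because there $M$ is defined through $V^{\pi^{(j)}}$; in Theorem~\ref{thm:state_action_bound_policy} it is defined through $V^{\pi^{(i)}}$, so your appeal to ``the definition of $M$'' is false as written.) Your argument therefore establishes a valid sibling bound in which $d$ is built from $\sup|V^{\pi^{(j)}}|$, but it does not establish the statement as given, since neither sup-norm dominates the other in general.

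The repair is mechanical and is exactly what the paper does: telescope in the other direction, i.e. add and subtract $\gamma\sum_{s'^{(i)}}p^{(j)}(G(s'^{(i)})|s^{(j)},a^{(j)})\,V^{\pi^{(i)}}(s'^{(i)},\mathcal{T}^{(i)})$. Then the mismatch piece is weighted by $V^{\pi^{(i)}}(\cdot,\mathcal{T}^{(i)})$ and is bounded by $2\gamma M D_{TV}$ with the theorem's $M$, while the recursive piece becomes $\gamma\sum_{s'^{(j)}}p^{(j)}(s'^{(j)}|s^{(j)},a^{(j)})\bigl[V^{\pi^{(i)}}(G^{-1}(s'^{(j)}),\mathcal{T}^{(i)})-V^{\pi^{(j)}}(s'^{(j)},\mathcal{T}^{(j)})\bigr]$, again a value gap over correspondent pairs. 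The closure and reparametrization facts you identified, and the geometric-series collapse, then go through unchanged.
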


\begin{proof}
Let $a^{(i)} = \pi^{(i)}(s^{(i)})$ and $a^{(j)} = \pi^{(j)}(s^{(j)})$. We rewrite the value difference.

\begin{eqnarray*}
&&V^{\pi^{(i)}}(s^{(i)}, \mathcal{T}^{(i)}) - V^{\pi^{(j)}}(s^{(j)}, \mathcal{T}^{(j)}) \\ \nonumber 
&=& r^{(i)}(s^{(i)}, a^{(i)}) + \gamma \sum_{s'^{(i)} \in \mathcal{S}^{(i)}}p^{(i)}(s'^{(i)}|s^{(i)}, a^{(i)}) V^{\pi^{(i)}}(s'^{(i)}, \mathcal{T}^{(i)})  
- r^{(j)}(s^{(j)}, a^{(j)}) - \gamma \sum_{s'^{(j)}\in \mathcal{S}^{(j)}}p^{(j)}(s'^{(j)}|s^{(j)}, a^{(j)}) V^{\pi^{(j)}}(s'^{(j)}, \mathcal{T}^{(j)}) \\ \nonumber
&= & (r^{(i)}(s^{(i)}, a^{(i)})- r^{(j)}(s^{(j)}, a^{(j)})) \\ \nonumber
&+& \gamma (\sum_{s'^{(i)}\in\mathcal{S}^{(i)}}p^{(i)}(s'^{(i)}|s^{(i)}, a^{(i)}) V^{\pi^{(i)}}(s'^{(i)},\mathcal{T}^{(i)}) - \sum_{s'^{(j)}\in\mathcal{S}^{(j)}}p^{(j)}(s'^{(j)}|s^{(j)}, a^{(j)}) V^{\pi^{(j)}}(s'^{(j)}, \mathcal{T}^{(j)})) \\
\end{eqnarray*}

According to Definition 2, since $s^{(i)} \in \mathcal{S}^{(i)}_B$, we have $s'^{(i)} \in \mathcal{S}^{(i)}_B $. Similarly, $s'^{(j)}\in \mathcal{S}^{(j)}_B$.

Then we derive the second term in the right side of the equation above:

\begin{eqnarray*}
\nonumber
&&\sum_{s'^{(i)}\in\mathcal{S}^{(i)}}p^{(i)}(s'^{(i)}|s^{(i)}, a^{(i)}) V^{\pi^{(i)}}(s'^{(i)},\mathcal{T}^{(i)}) - \sum_{s'^{(j)}\in\mathcal{S}^{(j)}}p^{(j)}(s'^{(j)}|s^{(j)}, a^{(j)}) V^{\pi^{(j)}}(s'^{(j)}, \mathcal{T}^{(j)}) \\ \nonumber
&&\text{\hspace{1in} {\color{blue} *replace $\mathcal{S}^{(i)}$ by $\mathcal{S}_B^{(i)}$ because $s'^{(i)}\in \mathcal{S}^{(i)}_B$, replace $\mathcal{S}^{(j)}$ by $\mathcal{S}_B^{(j)}$ because $s'^{(j)}\in \mathcal{S}^{(j)}_B$}} \\ \nonumber
&&\text{\hspace{1in} {\color{blue} **minus and plus $\sum_{s'^{(i)} \in\mathcal{S}_B^{(i)}} p^{(j)}(G(s'^{(i)})|s^{(j)}, a^{(j)}))V^{\pi^{(i)}}(s'^{(i)}, \mathcal{T}^{(i)})$}} \\ \nonumber
&=& \sum_{s'^{(i)} \in\mathcal{S}_B^{(i)}}p^{(i)}(s'^{(i)}|s^{(i)}, a^{(i)}) V^{\pi^{(i)}}(s'^{(i)}, \mathcal{T}^{(i)}) - \sum_{s'^{(i)} \in\mathcal{S}_B^{(i)}} p^{(j)}(G(s'^{(i)})|s^{(j)}, a^{(j)}))V^{\pi^{(i)}}(s'^{(i)}, \mathcal{T}^{(i)}) \\ \nonumber
&+& \sum_{s'^{(i)} \in\mathcal{S}_B^{(i)}} p^{(j)}(G(s'^{(i)})|s^{(j)}, a^{(j)}))V^{\pi^{(i)}}(s'^{(i)}, \mathcal{T}^{(i)}) - \sum_{s'^{(j)}\in\mathcal{S}_B^{(j)}}p^{(j)}(s'^{(j)}|s^{(j)}, a^{(j)}) V^{\pi^{(j)}}(s'^{(j)}, \mathcal{T}^{(j)}) \\ \nonumber
&&\text{\hspace{1in} {\color{blue} *combine the first two terms, rewrite the third term because $G$ is invertible function }} \\ \nonumber
&=& \sum_{s'^{(i)} \in \mathcal{S}_B^{(i)}}\left[p^{(i)}(s'^{(i)}|s^{(i)}, a^{(i)})-p^{(j)}(G(s'^{(i)})|s^{(j)}, a^{(j)})\right] V^{\pi^{(i)}}(s'^{(i)}, \mathcal{T}^{(i)})  \\ \nonumber
&+& \sum_{s'^{(j)} \in\mathcal{S}_B^{(j)}} p^{(j)}(s'^{(j)}|s^{(j)}, a^{(j)}))V^{\pi^{(i)}}(G^{-1}(s'^{(j)}), \mathcal{T}^{(i)}) - \sum_{s'^{(j)}\in\mathcal{S}_B^{(j)}}p^{(j)}(s'^{(j)}|s^{(j)}, a^{(j)}) V^{\pi^{(j)}}(s'^{(j)}, \mathcal{T}^{(j)}) \\ \nonumber
&&\text{\hspace{4in} {\color{blue} *combine the last two terms}} \\ \nonumber
&=& \sum_{s'^{(i)} \in \mathcal{S}_B^{(i)}}\left[p^{(i)}(s'^{(i)}|s^{(i)}, a^{(i)})-p^{(j)}(G(s'^{(i)})|s^{(j)}, a^{(j)})\right] V^{\pi^{(i)}}(s'^{(i)}, \mathcal{T}^{(i)}) \\
&+& \sum_{s'^{(j)} \in\mathcal{S}_B^{(j)}} p^{(j)}(s'^{(j)}|s^{(j)}, a^{(j)})) \left[V^{\pi^{(i)}}(G^{-1}(s'^{(j)}), \mathcal{T}^{(i)}) - V^{\pi^{(j)}}(s'^{(j)}, \mathcal{T}^{(j)})\right]
\end{eqnarray*}

Therefore, we can bound the absolute value of the value difference according to the two equation arrays above:

\begin{eqnarray*}
\left|V^{\pi^{(i)}}(s^{(i)}, \mathcal{T}^{(i)}) - V^{\pi^{(j)}}(s^{(j)}, \mathcal{T}^{(j)})\right|&\leq & \left|r^{(i)}(s^{(i)}, a^{(i)})- r^{(j)}(s^{(j)}, a^{(j)})\right| \\ \nonumber
&+& \gamma \sum_{s'^{(i)} \in \mathcal{S}_B^{(i)}}\left[p^{(i)}(s'^{(i)}|s^{(i)}, a^{(i)})-p^{(j)}(G(s'^{(i)})|s^{(j)}, a^{(j)})\right] V^{\pi^{(i)}}(s'^{(i)}, \mathcal{T}^{(i)}) \\ \nonumber
&+& \gamma \sum_{s'^{(j)} \in\mathcal{S}_B^{(j)}} p^{(j)}(s'^{(j)}|s^{(j)}, a^{(j)})) \left[V^{\pi^{(i)}}(G^{-1}(s'^{(j)}), \mathcal{T}^{(i)}) - V^{\pi^{(j)}}(s'^{(j)}, \mathcal{T}^{(j)})\right] \\ \nonumber
&\leq & \left|r^{(i)}(s^{(i)}, a^{(i)})- r^{(j)}(s^{(j)}, a^{(j)})\right| 
+ 2 \gamma M D_{TV}(p^{(i)}(\cdot|s^{(i)}, a^{(i)}), p^{(j)}(G(\cdot)|s^{(j)}, a^{(j)})) \\ \nonumber
&+&\gamma \sup_{s'^{(j)} \in \mathcal{S}_B^{(j)}} \left|V^{\pi^{(i)}}(G^{-1}(s'^{(j)}), \mathcal{T}^{(i)}) - V^{\pi^{(j)}}(s'^{(j)}, \mathcal{T}^{(j)})\right| \\ \nonumber
&\leq& d + \gamma \sup_{s'^{(i)} \in \mathcal{S}_B^{(i)}}\left|V^{\pi^{(i)}}(s'^{(i)}, \mathcal{T}^{(i)}) - V^{\pi^{(j)}}(G(s'^{(i)}), \mathcal{T}^{(j)})\right| 
\leq \frac{d}{1-\gamma}\\ \nonumber
\end{eqnarray*}
\end{proof}

Theorem 2 proves the value difference is upper bounded by a scalar $d$, depending on the reward difference $\smash{|r^{(i)}(s^{(i)}, \pi^{(i)}(s^{(j)})) - r^{(j)}(s^{(j)}, \pi^{(j)}(s^{(j)}))|}$ and $\smash{D_{TV}(p^{(i)}(\cdot|s^{(i)}, a^{(i)}), p^{(j)}(G(\cdot)|s^{(j)}, a^{(j)}))}$, i.e. the total-variation distance between probability distribution of next state on $\mathcal{T}^{(i)}$ and probability distribution of correspondent next state on $\mathcal{T}^{(j)}$.
Indeed, if the state equivalence relation is only true for identical states (i.e. $G$ is an identity mapping, $s^{(i)}Bs^{(j)}$ if and only if $s^{(i)}=s^{(j)}$), then Theorem 2 degenerates into Theorem 1.
We note the proof of Theorem 2 is similar to proof of Theorem 1 in Appendix A.

For a special case, where the reward only depends on the current state and next state, we can formulate a simpler definition of scalar $d$. The following Proposition 2 is analogous to Proposition 1 in the assumption about reward function.

\begin{restatable}{cor}{state_action_rew_boundpolicytransfer}
\label{cor:state_action_bound_policy}
$\mathcal{T}^{(i)}=\{\mathcal{S}^{(i)}, \mathcal{A}{(i)}, p^{(i)}, r^{(i)}, \gamma, \rho^{(i)}_0\}$ and  $\mathcal{T}^{(j)}=\{\mathcal{S}^{(j)}, \mathcal{A}^{(j)}, p^{(j)}, r^{(j)}, \gamma, \rho^{(j)}_0\}$ are two MDPs sampled from the distribution of tasks $p(\mathcal{T})$.
$\pi^{(i)}$ is a deterministic policy on $\mathcal{T}^{(i)}$ and $\pi^{(j)}$ is a deterministic policy on $\mathcal{T}^{(j)}$. 
We assume there exist state equivalence relation $B\in \mathcal{S}^{(i)} \times \mathcal{S}^{(j)} $ and a state translator function $G$ defining a one-to-one mapping from $\mathcal{S}^{(i)}_B$ to $\mathcal{S}^{(j)}_B$.
Suppose that the reward function $r^{(i)}(s^{(i)},a^{(i)},s'^{(i)})=r^{(i)}(s^{(i)},s'^{(i)})$ and $r^{(j)}(s^{(j)},a^{(j)},s'^{(j)})=r^{(j)}(s^{(j)},s'^{(j)})$. 
If $s^{(j)}=G(s^{(i)})$ and $s'^{(j)}=G(s'^{(i)})$, $r^{(i)}(s^{(i)},s'^{(i)})=r^{(j)}(s^{(j)},s'^{(j)})$.
Let $M=\sup_{s^{(i)}\in \mathcal{S}^{(i)}} |r^{(i)}(s^{(i)}, s'^{(i)})+\gamma V^{\pi^{(i)}}(s'^{(i)}, \mathcal{T}^{(i)})|$ and $d=\sup_{s^{(i)}\in\mathcal{S}_B^{(i)}} 2 M D_{TV}(p^{(i)}(\cdot|s^{(i)}, \pi^{(i)}(s^{(i)})), p^{(j)}(G(\cdot)|s^{(j)}, \pi^{(j)}(s^{(j)})))$.

Then $\forall s^{(i)} \in \mathcal{S}_B^{(i)}, s^{(j)} = G(s^{(i)})$, we have

$$\left|V^{\pi^{(i)}}(s^{(i)}, \mathcal{T}^{(i)}) - V^{\pi^{(j)}}(s^{(j)}, \mathcal{T}^{(j)})\right| \leq \frac{d}{1-\gamma}$$
\end{restatable}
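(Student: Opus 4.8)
The plan is to mirror the proof of Proposition~\ref{cor:bound_policy} while carrying the state translator $G$ through every step exactly as in the proof of Theorem~\ref{thm:state_action_bound_policy}. The essential simplification afforded by the reward assumption is that the value function on $\mathcal{T}^{(i)}$ can be written purely as an expectation over next states, namely $V^{\pi^{(i)}}(s^{(i)},\mathcal{T}^{(i)})=\sum_{s'^{(i)}} p^{(i)}(s'^{(i)}|s^{(i)},a^{(i)})\left[r^{(i)}(s^{(i)},s'^{(i)})+\gamma V^{\pi^{(i)}}(s'^{(i)},\mathcal{T}^{(i)})\right]$ with $a^{(i)}=\pi^{(i)}(s^{(i)})$, and analogously on $\mathcal{T}^{(j)}$. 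Folding the reward into the transition sum in this way is precisely what allows the single scalar $M=\sup|r^{(i)}(s^{(i)},s'^{(i)})+\gamma V^{\pi^{(i)}}(s'^{(i)},\mathcal{T}^{(i)})|$ to control the reward and the transition mismatch simultaneously.

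First I would restrict both sums to the equivalence subsets: since $s^{(i)}\in\mathcal{S}_B^{(i)}$ forces every reachable next state $s'^{(i)}$ into $\mathcal{S}_B^{(i)}$ by Definition~\ref{def:state_action_binary_relation} (and symmetrically on $\mathcal{T}^{(j)}$), the sums run over $\mathcal{S}_B^{(i)}$ and $\mathcal{S}_B^{(j)}$ respectively. I then reindex the $j$-task sum through the invertible $G$, substituting $s'^{(j)}=G(s'^{(i)})$ and invoking the reward-correspondence hypothesis $r^{(j)}(s^{(j)},s'^{(j)})=r^{(i)}(s^{(i)},s'^{(i)})$ to re-express the target reward as the source reward. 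The central manipulation is to add and subtract the bridging quantity $\sum_{s'^{(i)}} p^{(j)}(G(s'^{(i)})|s^{(j)},a^{(j)})\left[r^{(i)}(s^{(i)},s'^{(i)})+\gamma V^{\pi^{(i)}}(s'^{(i)},\mathcal{T}^{(i)})\right]$, which cleanly splits the value gap into a term weighted by the signed measure $p^{(i)}(s'^{(i)}|s^{(i)},a^{(i)})-p^{(j)}(G(s'^{(i)})|s^{(j)},a^{(j)})$ and a term in which the transferred value differences $V^{\pi^{(i)}}(s'^{(i)},\mathcal{T}^{(i)})-V^{\pi^{(j)}}(G(s'^{(i)}),\mathcal{T}^{(j)})$ appear under a genuine probability weight.

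Taking absolute values, the first term is bounded via the countable total-variation inequality by $2M\,D_{TV}(p^{(i)}(\cdot|s^{(i)},a^{(i)}),p^{(j)}(G(\cdot)|s^{(j)},a^{(j)}))\le d$, and the second by $\gamma\sup_{s'^{(i)}\in\mathcal{S}_B^{(i)}}|V^{\pi^{(i)}}(s'^{(i)},\mathcal{T}^{(i)})-V^{\pi^{(j)}}(G(s'^{(i)}),\mathcal{T}^{(j)})|$. Taking the supremum over correspondent pairs on the left side and unrolling the resulting self-referential inequality $\sup(\cdot)\le d+\gamma\sup(\cdot)$ produces the geometric series $d+\gamma d+\gamma^2 d+\cdots=\frac{d}{1-\gamma}$, closing the bound exactly as in Proposition~\ref{cor:bound_policy}.

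The step I expect to demand the most care is the reindexing through $G$: I must verify that $p^{(j)}(G(\cdot)|s^{(j)},a^{(j)})$ is a legitimate probability distribution over $\mathcal{S}_B^{(i)}$, so that it may serve both as the weight on the second term and as one argument of $D_{TV}$. This holds because Definition~\ref{def:state_action_binary_relation} guarantees all transition mass from a $B$-state lands in $\mathcal{S}_B^{(j)}$ and $G$ is a bijection onto $\mathcal{S}_B^{(j)}$, whence $\sum_{s'^{(i)}} p^{(j)}(G(s'^{(i)})|s^{(j)},a^{(j)})=1$; the reward-matching hypothesis then makes the reward contributions in the bridging term cancel exactly. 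The remaining manipulations are the same bookkeeping as in the proof of Theorem~\ref{thm:state_action_bound_policy}.
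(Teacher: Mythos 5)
Your proposal is correct and follows essentially the same route as the paper's proof: the same folding of the state-only reward into the transition expectation, the identical bridging term $\sum_{s'^{(i)}} p^{(j)}(G(s'^{(i)})|s^{(j)},a^{(j)})\bigl[r^{(i)}(s^{(i)},s'^{(i)})+\gamma V^{\pi^{(i)}}(s'^{(i)},\mathcal{T}^{(i)})\bigr]$ added and subtracted, the same cancellation of rewards via the correspondence hypothesis, the same $2M D_{TV}$ bound on the signed-measure term, and the same geometric-series unrolling. Your explicit check that $p^{(j)}(G(\cdot)|s^{(j)},a^{(j)})$ is a genuine probability distribution on $\mathcal{S}_B^{(i)}$ is a point the paper leaves implicit, but it does not change the argument.
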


\begin{proof}
Let $a^{(i)} = \pi^{(i)}(s^{(i)})$ and $a^{(j)} = \pi^{(j)}(s^{(j)})$. $s'^{(i)}$ denotes the next state following state $s^{(i)}$.

Because the reward solely depends on the current and next state, we rewrite the value function:

\begin{eqnarray*}
V^{\pi^{(i)}}(s^{(i)}, \mathcal{T}^{(i)}) &=& r^{(i)}(s^{(i)}, a^{(i)}) + \gamma \sum_{s'^{(i)} \in \mathcal{S}^{(i)}}p^{(i)}(s'^{(i)}|s^{(i)}, a^{(i)}) V^{\pi^{(i)}}(s'^{(i)},\mathcal{T}^{(i)}) \\ \nonumber
&=& \sum_{s'^{(i)} \in \mathcal{S}^{(i)}}p^{(i)}(s'^{(i)}|s^{(i)}, a^{(i)}) r^{(i)}(s^{(i)},s'^{(i)}) + \gamma \sum_{s'^{(i)} \in \mathcal{S}^{(i)}}p^{(i)}(s'^{(i)}|s^{(i)}, a^{(i)}) V^{\pi^{(i)}}(s'^{(i)},\mathcal{T}^{(i)}) \\ \nonumber
&=& \sum_{s'^{(i)} \in \mathcal{S}^{(i)}}p^{(i)}(s'^{(i)}|s^{(i)}, a^{(i)}) \left[r^{(i)}(s^{(i)}, s'^{(i)}) + \gamma V^{\pi^{(i)}}(s'^{(i)}, \mathcal{T}^{(i)})\right] 
\end{eqnarray*}

Then we derive the value difference:

\begin{eqnarray*}
&&V^{\pi^{(i)}}(s^{(i)}, \mathcal{T}^{(i)}) - V^{\pi^{(j)}}(s^{(i)}, \mathcal{T}^{(j)}) \\ \nonumber
&=&  \sum_{s'^{(i)}}p^{(i)}(s'^{(i)}|s^{(i)}, a^{(i)}) \left[r^{(i)}(s^{(i)}, s'^{(i)}) + \gamma V^{\pi^{(i)}}(s'^{(i)}, \mathcal{T}^{(i)})\right] 
- \sum_{s'^{(j)}}p^{(j)}(s'^{(j)}|s^{(j)}, a^{(j)})\left[r^{(j)}(s^{(j)}, s'^{(j)})+\gamma V^{\pi^{(j)}}(s'^{(j)}, \mathcal{T}^{(j)})\right] \\ \nonumber
&&\text{\hspace{2.5in} {\color{blue} *minus and plus $\sum_{s'^{(i)}}p^{(j)}(G(s'^{(i)})|s^{(j)}, a^{(j)})\left[r^{(i)}(s^{(i)}, s'^{(i)})+\gamma V^{\pi^{(i)}}(s'^{(i)}, \mathcal{T}^{(i)})\right]$}} \\ \nonumber
&=& \sum_{s'^{(i)}}p^{(i)}(s'^{(i)}|s^{(i)}, a^{(i)}) \left[r^{(i)}(s^{(i)}, s'^{(i)}) + \gamma V^{\pi^{(i)}}(s'^{(i)}, \mathcal{T}^{(i)})\right] - \sum_{s'^{(i)}}p^{(j)}(G(s'^{(i)})|s^{(j)}, a^{(j)})\left[r^{(i)}(s^{(i)}, s'^{(i)})+\gamma V^{\pi^{(i)}}(s'^{(i)}, \mathcal{T}^{(i)})\right] \\ \nonumber
& + & \sum_{s'^{(i)}}p^{(j)}(G(s'^{(i)})|s^{(j)}, a^{(j)})\left[r^{(i)}(s^{(i)}, s'^{(i)})+\gamma V^{\pi^{(i)}}(s'^{(i)}, \mathcal{T}^{(i)})\right] - \sum_{s'^{(j)}}p^{(j)}(s'^{(j)}|s^{(j)}, a^{(j)})\left[r^{(j)}(s^{(j)}, s'^{(j)})+\gamma V^{\pi^{(j)}}(s'^{(j)}, \mathcal{T}^{(j)})\right] \\ \nonumber
&&\text{\hspace{2.8in} {\color{blue} *combine first two terms, rewrite the third term given invertible function $G$}} \\ \nonumber
&=& \sum_{s'^{(i)}}\left[p^{(i)}(s'^{(i)}|s^{(i)}, a^{(i)})-p^{(j)}(G(s'^{(i)})|s^{(j)}, a^{(j)})\right] \left[r^{(i)}(s^{(i)}, s'^{(i)})+\gamma V^{\pi^{(i)}}(s'^{(i)}, \mathcal{T}^{(i)})\right] \\ \nonumber
&+& \sum_{s'^{(j)}}p^{(j)}(s'^{(j)}|s^{(j)}, a^{(j)})\left[r^{(i)}(G^{-1}(s^{(j)}), G^{-1}(s'^{(j)}))+\gamma V^{\pi^{(i)}}(G^{-1}(s'^{(j)}), \mathcal{T}^{(i)})\right] \\ \nonumber
&-& \sum_{s'^{(j)}}p^{(j)}(s'^{(j)}|s^{(j)}, a^{(j)})\left[r^{(j)}(s^{(j)}, s'^{(j)})+\gamma V^{\pi^{(j)}}(s'^{(j)}, \mathcal{T}^{(j)})\right]
\text{\hspace{0in} {\color{blue} *combine last two terms, note the assumption of reward function}} \\ \nonumber
&=& \sum_{s'^{(i)}}\left[p^{(i)}(s'^{(i)}|s^{(i)}, a^{(i)})-p^{(j)}(G(s'^{(i)})|s^{(j)}, a^{(j)})\right] \left[r^{(i)}(s^{(i)}, s'^{(i)})+\gamma V^{\pi^{(i)}}(s'^{(i)}, \mathcal{T}^{(i)})\right] \\ \nonumber
&+& \gamma\sum_{s'^{(j)}}p^{(j)}(s'^{(j)}|s^{(j)}, a^{(j)})\left[V^{\pi^{(i)}}(G^{-1}(s'^{(j)}), \mathcal{T}^{(i)}) -V^{\pi^{(j)}}(s'^{(j)}, \mathcal{T}^{(j)}) \right]  \\ \nonumber
\end{eqnarray*}

Therefore, the absolute value of value difference can be upper bounded. The proof is similar to the proof of Theorem 2.

\begin{eqnarray*}
\left|V^{\pi^{(i)}}(s^{(i)}, \mathcal{T}^{(i)}) - V^{\pi^{(j)}}(s^{(j)}, \mathcal{T}^{(j)})\right|&\leq & 2 M D_{TV}(p^{(i)}(\cdot|s^{(i)}, a^{(i)}), p^{(j)}(G(\cdot)|s^{(j)}, a^{(j)})) \\ \nonumber
&+&\gamma \sup_{s'^{(j)} \in \mathcal{S}_B^{(j)}} \left|V^{\pi^{(i)}}(G^{-1}(s'^{(j)}), \mathcal{T}^{(i)}) - V^{\pi^{(j)}}(s'^{(j)}, \mathcal{T}^{(j)})\right| \\ \nonumber
&\leq& d + \gamma \sup_{s'^{(i)} \in \mathcal{S}_B^{(i)}}\left|V^{\pi^{(i)}}(s'^{(i)}, \mathcal{T}^{(i)}) - V^{\pi^{(j)}}(G(s'^{(i)}), \mathcal{T}^{(j)})\right| \\ \nonumber
&\leq& \frac{d}{1-\gamma}\\ \nonumber
\end{eqnarray*}

\end{proof}

Obviously, if the state equivalence relation is only true for identical states (i.e. $G$ is an identity mapping, $s^{(i)}Bs^{(j)}$ if and only if $s^{(i)}=s^{(j)}$), then Proposition 2 degenerates into Proposition 1.
If we optimize the action translator $H$ to minimize $d$ for policy $\pi^{(j)}$ and $\pi^{(i)}(s^{(i)})=H(s^{(j)}, \pi^{(j)}(s^{(j)}))$, the policy value for correspondent states $s^{(i)}$ and $s^{(j)}$ can be close.
Minimizing $d$ means finding actions leading to next states remaining correspondent.

\subsection{Method}
According to Proposition 2, we not only learn an action translator $H$, but also state translators $G$ mapping target states $s^{(i)}$ to the equivalent states on source task $\mathcal{T}^{(j)}$ and $G^{-1}$ identifying correspondent state on target task $\mathcal{T}^{(i)}$.
We additionally learn a discriminator network $D$ to assist learning of state translator.

Given transition data $ s^{(j)}$ on source task and $s^{(i)}$ on target task, the adversarial objective is:

$$\min_G\max_D \mathcal{L}_{adv}(G,D)=\log D(s^{(j)}) + \log (1-D(G(s^{(i)})))$$.

$G$ aims to map target state $s^{(i)}$ to the distribution of states on source task, while $D$ tries to distinguish translated state $G(s^{(i)})$ and real states in the source task.
To build state equivalence, the translated state should be translated back to the source state. We further leverage cycle consistency loss to learn the one-to-one mapping on states across tasks:

$$\mathcal{L}_{back}=|G^{-1}(G(s^{(i)}))-s^{(i)}| + |G(G^{-1}(s^{(j)}))-s^{(j)}|$$ 

Drawn upon Proposition 2, we extend our transfer loss $\mathcal{L}_{trans}$ to $\mathcal{L}_{trans,s,a}$. Formally,

$$\mathcal{L}_{trans,s,a} = -\log F(\tilde{s}_{t+1}^{(i)}|\tilde{s}^{(i)}_t, \tilde{a}^{(i)}_t)$$

where $\tilde{s}_{t+1}^{(i)} = G^{-1}(s^{(j)}_{t+1})$, $\tilde{s}_{t}^{(i)} = G^{-1}(s^{(j)}_{t})$, and $\tilde{a}^{(i)}=H(s^{(j)}_t, a^{(j)}_t)$. $\mathcal{L}_{trans,s,a}$ is applied to optimize the state translator $G^{-1}$ and action translator $H$.

In this way, given the state $\smash{s_t^{(j)}}$ on source task, we first get the correspondent state $\smash{\tilde{s}_{t}^{(i)}}$ on target task. Then the translated action $\smash{\tilde{a}^{(i)}}$ make transition to next state $\smash{\tilde{s}_{t+1}^{(i)}}$ on target task still correspondent to next state $\smash{s^{(j)}_{t+1}}$ on source task.
The objective function $\smash{\mathcal{L}_{trans,s,a}}$ drives the next state distribution on the target task $\smash{p^{(i)}(\cdot|\tilde{s}_{t}^{(i)}, \tilde{a}^{(i)})}$ to be close to the distribution of correspondent next state on the source task $\smash{p^{(j)}(G(\cdot)|s_t^{(j)}, a_t^{(j)})}$. This is implicitly minimizing $d$ in Proposition 2.

In practice, we may need the action translator network $H$ or the state translator network $G$ and $\smash{G^{-1}}$ reasonably initialized, in order to prevent the joint training collapsing to a trivial solution.
The implementation details of learning the context model, forward dynamics model and action translator are the same as we explained in Appendix~\ref{app:fixed_details}.
During training of the state translator, the weight of $\mathcal{L}_{adv}, \mathcal{L}_{back}, \mathcal{L}_{trans\_s\_a}$ is 10, 30, 100 respectively, the same as default hyper-parameters in \citep{zhang2020learning}.
The similar technique of learning state translator and action translator has been mentioned in \citep{zhang2020learning}. Yet, our theorems shed light on its underlying mechanism and our objective function for learning the action translator is simpler.

\subsection{Experiments on Tasks Differing in Reward Function}
When the tasks share \textbf{the same state space and action space but the reward function varies}, we combine our action translator with a state translator for policy transfer. 

To investigate this scheme for policy transfer, we conduct experiments on MetaWorld task moving the robot arm to a goal location.
We set the source and target task with different goal locations and hence with different reward functions.
Tab.~\ref{tab:app_extension} lists the goal locations on the source and target tasks.
Specifically, on the same state $s$ of the agent's current location $(x, y, z)$, the reward varies across tasks, because it is inversely proportional to the distance from the current location to goal.
The initial location of robot arm is randomly sampled between $[-0.1, 0.6, 0.02]$ and $[0.1, 0.7, 0.02]$.
The state is current location of the robot arm.
The action is the moving vector of the robot arm.

We compare our method and \citep{zhang2020learning} learning both state translator and action translator.
We initialize the state translator networks by assigning $G(s=(x,y,z))=G^{-1}(s=(x,y,z))=(-x, y, z)$.
As observed in Tab.~\ref{tab:app_extension}, ours compares favorably with \citep{zhang2020learning} and achieves satisfactory cumulative episode reward on the target task. 
We conclude that, for source and target tasks with different reward functions depending on the state and next state, learning state translator and action translator jointly is promising for policy transfer.

\begin{table}[!ht]
\centering

\setlength{\tabcolsep}{2pt}
\begin{tabular}{c|c|c|ccc}
\toprule
\makecell{Source Task} &
\makecell{Target Task} &
\footnotesize{\makecell{Source policy\\on source task} } &
\footnotesize{\makecell{Source policy\\on target task} } & \footnotesize{\makecell{Transferred policy \\ \citep{zhang2020learning} \\ on target task}}&
\footnotesize{\makecell{Transferred policy \\(Ours) \\ on target task}} \\ 
\midrule
$[-0.1,0.8,0.2]$ & $[0.1,0.8,0.2]$ & 4855.7 & 947.5 & 1798.2\tiny{($\pm$ 592.4)} & \textbf{3124.3}\tiny{($\pm$ 1042.0)}  \\
$[-0.1,0.8,0.2]$ & $[0.05,0.8,0.2]$ & 4855.7 & 1470.2 & 1764.0\tiny{($\pm$ 316.3)}  &  \textbf{1937.1}\tiny{($\pm$ 424.5)}  \\ 
$[-0.1,0.8,0.2]$ & $[0.1,0.8,0.05]$ & 4855.7 & 1040.8 & \textbf{2393.7}\tiny{($\pm$ 869.8)}   &  2315.7\tiny{($\pm$ 1061.5)} \\ 
\midrule
2-leg & 3-leg & 5121.4 & NA & 1957.8\tiny{($\pm$298.4)}	& \textbf{2018.2}\tiny{($\pm$50.8)} \\
\bottomrule
\end{tabular}

\caption{Mean ($\pm$ standard error) of episode rewards over 3 runs, comparing source and transferred policy on target task. This is expanding Tab.~\ref{tab:app_extension} in the main text.}
\label{tab:app_extension}
\end{table}

\subsection{Experiments on Tasks Differing in State and Action Space}
For tasks with \textbf{different state space and action space},
we investigate the proposed idea on MuJoco environment HalfCheetah.
The HalfCheetah agent by default has 2 legs in the source task and we modify the agent to have 3 legs in the target task.
Because the agents have different numbers of joints in the source and target task, the dimensions of state space and action space also differ, as explained in \citep{zhang2020learning}.
Again, we compare our method and \citep{zhang2020learning} learning both state translator and action translator.
We assign a good initialization for the action translator in both methods as \citep{zhang2020learning} introduced. We remark that ours with a simpler objective function and fewer components than the baseline method can transfer the source policy to perform well on the target task.

\end{document}